\documentclass{article}



    \usepackage[final]{neurips_2021}


\usepackage[utf8]{inputenc} 
\usepackage[T1]{fontenc}    
\usepackage[colorlinks = true,
            linkcolor = RoyalPurple,
            citecolor = MidnightBlue,
            urlcolor  = blue]{hyperref}
\usepackage{url}            
\usepackage{booktabs}       
\usepackage{amsfonts}       
\usepackage{nicefrac}       
\usepackage{microtype}      
\usepackage[dvipsnames]{xcolor}

\usepackage{multirow}
\usepackage[pdftex]{graphicx}
\usepackage{xcolor}
\usepackage{dirtytalk}
\usepackage{algorithm}
\usepackage{algorithmicx}  
\usepackage{algpseudocode}
\usepackage{bm}
\usepackage{amssymb}
\usepackage{mathtools}
\usepackage{amsthm}
\usepackage{amsmath}
\usepackage{commath}
\usepackage{cancel}
\usepackage{hyperref}
\usepackage{tikz}
\usetikzlibrary{arrows}
\usepackage{capt-of}
\usepackage{todonotes}
\usepackage{caption}
\usepackage{subcaption}
\usepackage{wrapfig}
\usepackage{commands}
\usepackage{math}
\usepackage{listings}
\usepackage{enumitem}
\usepackage{dirtytalk}
\usepackage{verbatim}

\usepackage{longtable}

\setcitestyle{numbers,square,citesep={,}}

\title{Causal Effect Inference for Structured Treatments}

%

\author{%
Jean Kaddour\thanks{Correspondence to \href{jean.kaddour.20@ucl.ac.uk}{jean.kaddour.20@ucl.ac.uk}}\\
Centre for Artificial Intelligence\\
University College London
\And
Yuchen Zhu\\
Centre for Artificial Intelligence\\
University College London
\And
Qi Liu\\
Department of Computer Science\\
University of Oxford
\And
Matt J. Kusner\\
Centre for Artificial Intelligence\\
University College London
\And
Ricardo Silva\\
Department of Statistical Science\\
University College London
}

\begin{document}

\maketitle
\begin{abstract} 
We address the estimation of conditional average treatment effects (CATEs) for structured treatments (e.g., graphs, images, texts). Given a weak condition on the effect, we propose the \emph{generalized Robinson decomposition}, which (i) isolates the causal estimand (reducing regularization bias), (ii) allows one to plug in arbitrary models for learning, and (iii) possesses a quasi-oracle convergence guarantee under mild assumptions. In experiments with small-world and molecular graphs we demonstrate that our approach outperforms prior work in CATE estimation.
\end{abstract}
\section{Introduction}
Estimating feature-level causal effects, so-called \textit{conditional average treatment effects} (CATEs), from observational data is a fundamental problem across many domains. Examples include understanding the effects of non-pharmaceutical interventions on the transmission of COVID-19 in a specific region \cite{flaxman2020estimating}, how school meal programs impact child health \cite{national20072008}, and the effects of chemotherapy drugs on cancer patients \cite{schwab2020doseresponse}. Supervised learning methods face two challenges in such settings: (i) \textit{missing interventions}, the fact that we only observe one treatment for each individual means models must extrapolate to new treatments without access to ground truth, and (ii) \textit{confounding factors} that affect both treatment assignment and the outcome means that extrapolation from observation to intervention requires assumptions.
Many approaches have been proposed to overcome these issues \cite{limits, alaa2017bayesian,arbour2020permutation, athey2016recursive, athey2019estimating, grf, SCIGAN,  DML, curth2021nonparametric, hahn2020bayesian, hatt2021estimating,hill, imbens, jesson2020identifying, jesson2021quantifying,deepmatch, kennedy2017nonparametric, x-learner, BVNICE, nabi2020semiparametric,  nie2021vcnet, r-learner, pollmann2020causal, schwab2020doseresponse, shi2020invariant,  ricardo,  wager2018estimation, SITE, ZhuBoosting}.

In many cases, treatments are naturally \emph{structured}. For instance, a drug is commonly represented by its molecular structure (graph), the nutritional content of a meal as a food label (text), and geographic regions affected by a new policy as a map (image). Taking this structure into account can provide several advantages: (i) higher data-efficiency, (ii) capability to work with many treatments, and (iii) generalizing to unseen treatments during test time. However, the vast majority of prior work operates on either binary or continuous scalar treatments (structured treatments are rarely considered, a notable exception to this trend is \citet{harada2020graphite} which we describe in Section~\ref{rw:graphite}). 

To estimate CATEs with structured interventions, our contributions include: 

\begin{itemize}[ wide = 5pt, leftmargin = *]
    \item \textbf{Generalized Robinson decomposition (GRD):} A generalization of the Robinson decomposition \cite{r-decomposition} to treatments that can be vectorized as a continuous embedding.
    This GRD reveals a learnable pseudo-outcome target that isolates the causal component of the observed signal by partialling out confounding associations. Further, it allows one to learn the nuisance and target functions using any supervised learning method, thus extending recent work on \emph{plug-in estimators} \cite{r-learner,x-learner}. 
    \item \textbf{Quasi-oracle convergence guarantee:} A  result that shows that given access to estimators of certain nuisance functions, as long as the estimates converge at an $O(n^{-1/4})$ rate, the target estimator for the CATE achieves the same error bounds as an oracle who has ground-truth knowledge of both nuisance components, the propensity features, and conditional mean outcome. 
    \item \textbf{Structured Intervention Networks (SIN)}: A practical algorithm using GRD, representation learning, and alternating gradient descent. Our PyTorch \cite{pytorch} implementation is online.\footnote{\href{https://github.com/JeanKaddour/SIN}{https://github.com/JeanKaddour/SIN}}
    \item \textbf{Evaluation metrics} designed for structured treatments. Since previous evaluation protocols of CATE estimators have mostly focused on binary or scalar-continuous treatment settings, we believe that our proposed evaluation metrics can be useful for comparing future work.
    \item \textbf{Experimental results} with graph treatments in which SIN outperforms previous approaches. 
\end{itemize}

\begin{figure}
    \centering
    \includegraphics[width=.9\textwidth]{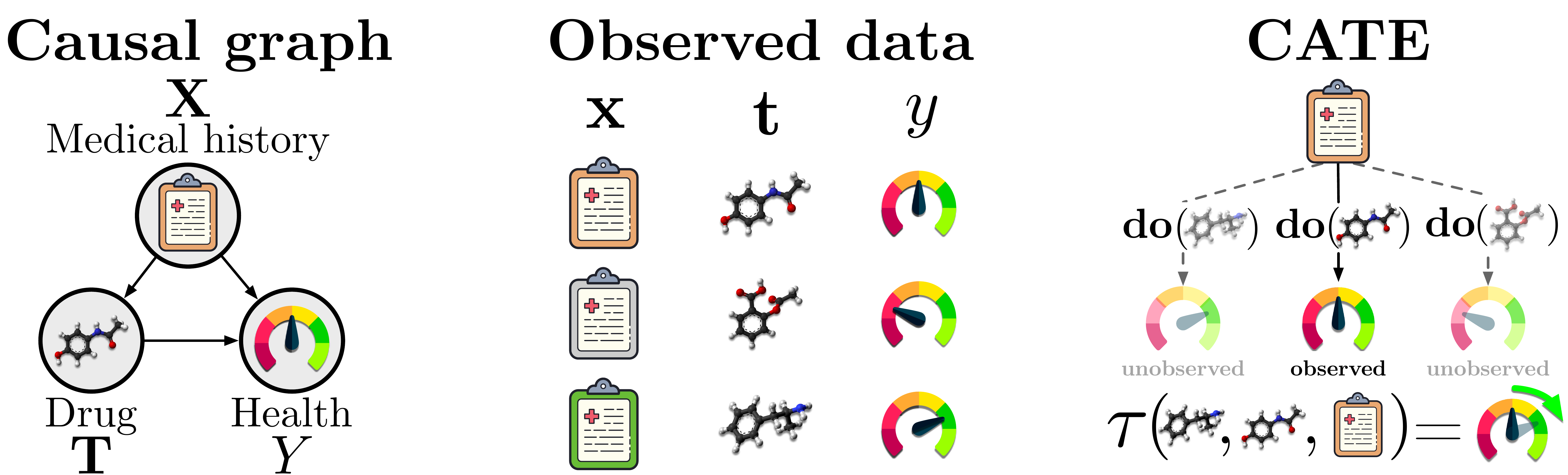}
    \caption{\textbf{Illustration of CATE estimation with structured treatments (e.g., molecular graphs).} \emph{Left}: Problem setup with features $\matr X$, treatment $\matr T$, and outcome $Y$. \emph{Center}: Observations the estimator has access to, typically containing only one outcome per individual. \emph{Right}: The CATE is the difference between the expected outcomes given a fixed individual and a pair of treatments.}
    \label{fig:problem_setup}
\end{figure}

\section{Related Work}
\label{rw:graphite}
Closest to our work is GraphITE \cite{harada2020graphite}, a method that learns representations of graph interventions for CATE estimation. They propose to minimize prediction loss plus a regularization term that aims to control for confounding based on the Hilbert-Schmidt Independence Criterion (HSIC) \cite{gretton2007kernel}. This technique suffers from two drawbacks: (i) the HSIC requires multiplication of kernel matrices and scales quadratically in the batch size; (ii) selecting the HSIC kernel hyper-parameter is not straightforward, as ground-truth CATEs are never observed, and empirical loss does not bound CATE estimation error \cite{limits}. We discuss other related work not on structured treatments in Appendix~\ref{sec:orw}.
\section{Preliminaries}
\subsection{Conditional Average Treatment Effects (CATEs)}\label{sec:CATE}
Imagine a dataset where each example $\left(\vect{x}_{i}, \vect t_{i}, y_{i}\right) \in \mathcal{D}$ represents a hospital patient's medical history record $\vect{x}_i$, prescribed drug treatment $\vect t_i$, and health outcome $y_i$, as illustrated in Figure~\ref{fig:problem_setup} (\emph{Center}). Further, we wish to understand how changing the treatment changes a patient's health outcome. The CATE, $\tau\left(\vect t^{\prime},\vect t_i, \vect{x}_i\right)$, describes the expected change in outcome for individuals with history $\vect{x}_i$, when treatment $\vect t_i$ is replaced by $\vect t^{\prime}$, depicted in Figure~\ref{fig:problem_setup} (\emph{Right}). In real-world scenarios, we only observe one outcome for each patient at one treatment level. Further, the patient's pre-treatment health conditions $\vect{x}_{i}$ influence both the doctor's treatment prescription and outcome, thereby \emph{confounding} the effect of the treatment on the outcome. 

Formally, we have the dataset $\mathcal{D}\!=\!\left\{\left(\vect{x}_{i}, \vect t_{i}, y_{i}\right)\right\}_{i=1}^{n}$ sampled from a joint distribution $p\left(\matr X, \matr T, Y\right)$, where $Y = f\left(\matr X, \matr T\right) + \varepsilon$, as depicted in Figure~\ref{fig:problem_setup} (\emph{Left}). We define the causal effect of fixing \emph{treatment} variable $\matr T \in \mathcal{T}$ to a value $\vect t$ on \emph{outcome} variable $Y \in \R$ using the do-operator \cite{pearl2000causality} as $\mathbb{E}\left[Y \mid \doo{\matr T = \vect t}\right]$. Crucially, this estimate differs from the conditional expectation $\mathbb{E}\left[Y \mid \matr T = \vect t\right]$ in that it describes the effect of an external entity \emph{intervening} on $\matr T$ by fixing it to a value $\vect t$ (removing the edge $\matr X \rightarrow \matr T$). 
We further condition on pre-treatment \emph{covariates} $\matr X$ to define the conditional causal estimand $\mathbb{E}\left[Y \mid \matr X = \vect x, \doo{\matr T = \vect t}\right]$. The \emph{conditional average treatment effect} (CATE) is the difference between expected outcomes at different treatment values $\vect t, \vect t^{\prime}$ for given covariates $\vect x$,
\begin{align}
\tau(\vect t^{\prime}, \vect t, \vect x) \triangleq \underbrace{\E\left[Y \mid \matr{X} = \vect x, \doo{\matr T = \vect t^{\prime}} \right]}_{=:\mu_{\vect t^{\prime}}(\vect x)} - \underbrace{\E\left[Y \mid \matr{X} = \vect x, \doo{\matr T = \vect t} \right]}_{=:\mu_{\vect t}(\vect x)},\label{eq:CATE}
\end{align}
where $\mu_{\vect t}\left(\vect x\right)$ is defined as the \emph{expected outcome} for a covariate vector $\vect x$ under treatment $\vect t$.

Because we do not observe both treatments $\vect t, \vect t'$ for a single covariate $\vect x$, we need to make assumptions that allow us to identify the CATE from observational data. 

\begin{assumption}(Unconfoundedness)
There are no confounders of the effect between $\mathbf T$ and $Y$ beyond $\matr X$. Therefore, $Pr\left(Y \leq y \mid \vect x, \doo{\vect t}\right) = Pr\left(Y \leq y~\mid~\vect x, \vect t \right)$, for all $(\vect x, \vect t, y)$.
\label{ass:unconfoundedness}
\end{assumption}

\begin{assumption}(Overlap) 
It holds that $0 < p\left(\vect t \mid \vect x\right)<1$, for all $(\vect x, \vect t)$.\label{ass:overlap}
\end{assumption}

Assumption~\ref{ass:overlap} means that all sub-populations have some probability of receiving any value of treatment (otherwise, some $\tau(\mathbf t', \mathbf t, \mathbf x)$ may be undefined or impossible to estimate.) 
These assumptions allow us to estimate the causal quantity $\tau(\vect t^{\prime}, \vect t, \vect x)$ through statistical estimands: 
\begin{align}
\tau\left(\vect t^{\prime}, \vect t, \vect x\right) = \mu_{\vect t^{\prime}}\left(\vect x\right) - \mu_{\vect t}\left(\vect x\right) = \E\left[Y\mid\matr X = \vect x, \matr T= \vect t^{\prime} \right] - \E\left[Y\mid \matr X = \vect x, \matr T= \vect t \right].
\end{align}

While one can model $\mu_t(\matr x)$ with regression models, such approaches suffer from bias \cite{DML, kennedy2020optimal,  x-learner} due to two factors: (i) associations between $\matr X$ and $\matr T$, due to confounding, makes it hard to identify the distinct contributions of $\matr X$ and $\matr T$ on $Y$, and (ii) regularization for predictive performance can harm effect estimation. Mitigating these biases relies on exposing and removing \emph{nuisance components}. This transforms the optimization into a (regularized) regression problem that isolates the causal effect.

\subsection{Robinson Decomposition}
\label{subsec:robinson}
One way to formulate such nuisance components is via the \emph{Robinson decomposition} \cite{r-decomposition}. Originally a reformulation of the CATE for binary treatments, it was used by the \textit{R-learner} \cite{r-learner} to construct a plug-in estimator. The R-learner exploits the decomposition by partialling out the confounding of $\matr X$ on $\matr T$ and $Y$. It also isolates the CATE, thereby removing regularization bias. 

Let the treatment variable be $T \in \{0,1 \}$ and the outcome model $p\left(y \mid \matr{x}, \matr{t}\right)$ parameterized as
\begin{equation}
 Y = f(\matr{X}, T) + \varepsilon \equiv \mu_0(\mathbf X) + T \times \tau_b(\mathbf X) + \varepsilon,
\label{eq:binmodel}
\end{equation}
\noindent where we define error term $\varepsilon$ such that $\mathbb E\left[\varepsilon \mid \vect x, \vect t\right] = \mathbb E\left[\varepsilon \mid \vect x\right] = 0$, and $\tau_b\left(\matr x\right) \triangleq \tau\left(1, 0, \matr x\right)$.

Define the \emph{propensity score} \cite{rosenbaum1983central} $e\left(\matr x\right) \triangleq p\left(T=1 \mid \boldsymbol{\matr x}\right)$ and the \emph{conditional mean outcome} as 
\begin{align}
    m\left(\matr x\right) \triangleq \E\left[Y \mid \matr x\right] = \mu_0\left(\mathbf x\right) + e\left(\matr x\right)\bincate\left(\mathbf x\right).\label{eq:cmo}
\end{align}

From model (\ref{eq:binmodel}) and the previous definitions, it follows that
\begin{equation}
Y - m\left(\matr X\right) = \left(T - e\left(\matr X\right)\right)\tau_{\text{b}}\left(\matr X\right) + \varepsilon,
\end{equation}
allowing us to define the estimator 
\begin{align}
    \bincateest\left(\vect \cdot \right)=\argmin_{\bincate} \Bigg \{\frac{1}{n} \sum_{i = 1}^n \left(\tilde{y}_i-\tilde t_i \times \bincate\left(\vect{x}_i\right)\right)^2 + \Lambda\left (\bincate\left (\cdot\right)\right ) \Bigg\}, \label{eq:r-learner}
\end{align} where $\tilde y_i \triangleq y_i - \widehat m\left(\matr x_i\right)$ and $\tilde t_i \triangleq t_i - \widehat e\left(\matr x_i\right)$ are pseudo-data points defined through estimated nuisance functions $\widehat m(\cdot), \widehat e(\cdot)$, which can be learned separately with any supervised learning algorithm. 

\section{The Generalized Robinson Decomposition}\label{sec:method}
Our goal is to estimate the CATE $\tau(\vect t^{\prime}, \vect t, \vect x)$ for structured interventions $\vect t^{\prime}, \vect t$ (e.g., graphs, images, text) while accounting for the confounding of $\matr X$ on $\matr T$ and $Y$. Inspired by the Robinson decomposition, which has enabled flexible CATE estimation for binary treatments \cite{grf, DML, BVNICE, r-learner}, we propose the \emph{Generalized Robinson Decomposition} from which we extract a pseudo-outcome that targets the causal effect. We demonstrate the usefulness of this decomposition from both a theoretical view (quasi-oracle convergence rate in Section~\ref{sec:qo}) and practical view (\emph{Structured Intervention Networks} in Section~\ref{sec:sin}). For details on its motivation and derivation, we refer the reader to Appendix~\ref{app:grd}. 

\subsection{Generalizing the Robinson Decomposition}
\label{sec:general_rd}
To generalize the Robinson decomposition to structured treatments, we introduce two concepts: (a) we assume that the causal effect is a \emph{product effect}: the outcome function $f^*\left(\matr X, \matr T\right)$ can be written as an inner product of two separate functionals, one over the covariates and one over the treatment, and (b) \emph{propensity features}, which partial out the effects from the covariates on the treatment features. Similar techniques have been previously shown to add to the robustness of estimation \cite{DML,r-learner}.
\begin{assumption}(Product effect) \label{ass:pipe} 
We consider the following partial parameterization of $p(y \mid \matr x, \matr t)$,
\begin{align}
    Y = g\left(\vect X\right)^{\top} h\left(\matr T\right) +   \varepsilon, 
    \label{eq:PE} 
\end{align} where $g: \mathcal{X} \rightarrow \mathbb{R}^{d}, h: \mathcal{T} \rightarrow \mathbb{R}^d$ and $\E[\varepsilon \mid \matr x, \matr t] = \E\left[\varepsilon \mid\matr x\right] = 0,$ for all  $\left(\matr x, \matr t\right) \in \mathcal X \times \mathcal T$.
\end{assumption}
This assumption is mild, as we can formally justify its universality. The following asserts that provided we allow the dimensionality of $g$ and $h$ to grow, we may approximate any arbitrary bounded continuous functions in $\mathcal{C}\left(\mathcal{X\times T}\right)$ where $\mathcal{X \times T}$ is compact. 

\begin{proposition}(Universality of product effect) \label{prop:prod_decomp}
Let $\mathcal{H_{X \times T}}$ be a Reproducing Kernel Hilbert Space (RKHS) on the set $\mathcal{X \times T}$ with universal kernel $k$. For any $\delta > 0$, and any $ f \in \mathcal{H_{X\times T}}$, there is a $d \in \mathbb{N}$ such that there exist two $d$-dimensional vector fields $g: \mathcal{X} \rightarrow \mathbb{R}^d$ and $h: \mathcal{T} \rightarrow \mathbb{R}^d$, where $\|f - g^{\top}h\|_{L_2(P_{\mathcal{X \times T}})} \leq \delta$. (Proof in Appendix~\ref{sec:universality})
\end{proposition}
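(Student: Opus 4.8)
The plan is to reduce the claim to a sup-norm density statement and then invoke the Stone--Weierstrass theorem for the algebra of ``separable'' functions. First I would observe that, since $k$ is a (continuous) universal kernel on the compact set $\mathcal{X}\times\mathcal{T}$, every element of $\mathcal{H}_{\mathcal{X}\times\mathcal{T}}$ is continuous; in particular the fixed target satisfies $f \in \mathcal{C}(\mathcal{X}\times\mathcal{T})$. Moreover, because $P_{\mathcal{X}\times\mathcal{T}}$ is a probability measure, $\|u\|_{L_2(P_{\mathcal{X}\times\mathcal{T}})} \le \|u\|_\infty$ for every bounded measurable $u$. Hence it suffices to produce, for the given $\delta>0$, a finite-rank ``product'' function $g^{\top}h = \sum_{j=1}^{d} g_j h_j$ with $\|f - g^{\top}h\|_\infty \le \delta$, where $g = (g_1,\dots,g_d)^{\top}$ and $h = (h_1,\dots,h_d)^{\top}$ are obtained by simply stacking the scalar factors.

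Next I would set $\mathcal{A} \triangleq \{\, \sum_{j=1}^{d} g_j(\mathbf x)\,h_j(\mathbf t) \;:\; d\in\mathbb{N},\ g_j\in\mathcal{C}(\mathcal{X}),\ h_j\in\mathcal{C}(\mathcal{T}) \,\} \subseteq \mathcal{C}(\mathcal{X}\times\mathcal{T})$ and verify the hypotheses of the (real) Stone--Weierstrass theorem: $\mathcal{A}$ is a linear subspace; it is closed under multiplication because $\bigl(\sum_j g_j h_j\bigr)\bigl(\sum_k g'_k h'_k\bigr) = \sum_{j,k}(g_j g'_k)(h_j h'_k)$ is again of the same form; it contains the constants (take $g_1\equiv c,\ h_1\equiv 1$); and it separates points, since for $(\mathbf x_1,\mathbf t_1)\neq(\mathbf x_2,\mathbf t_2)$ one has $\mathbf x_1\neq\mathbf x_2$ or $\mathbf t_1\neq\mathbf t_2$, and in the first case a continuous function on $\mathcal{X}$ separating $\mathbf x_1,\mathbf x_2$ (tensored with $h\equiv 1$) lies in $\mathcal{A}$ and does the job, symmetrically in the second. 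Stone--Weierstrass then yields that $\mathcal{A}$ is dense in $\mathcal{C}(\mathcal{X}\times\mathcal{T})$ in the sup norm, so some member of $\mathcal{A}$ lies within $\delta$ of $f$.

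Assembling the pieces, I take such a member $\sum_{j=1}^{d} g_j h_j$, stack it into $g$ and $h$, and conclude $\|f - g^{\top}h\|_{L_2(P_{\mathcal{X}\times\mathcal{T}})} \le \|f - g^{\top}h\|_\infty \le \delta$, which is the assertion; note the argument only uses that $\mathcal{X}\times\mathcal{T}$ is compact (Hausdorff) and that $f$ is continuous. An alternative route, if one wants the factors to live in honest RKHSs rather than merely in $\mathcal{C}$, is to take $k = k_{\mathcal{X}}\otimes k_{\mathcal{T}}$ a tensor product of universal kernels, whence $\mathcal{H}_{\mathcal{X}\times\mathcal{T}} = \mathcal{H}_{\mathcal{X}}\otimes\mathcal{H}_{\mathcal{T}}$ and finite sums $\sum_j g_j\otimes h_j$ are dense in RKHS norm, hence in sup norm, hence in $L_2(P_{\mathcal{X}\times\mathcal{T}})$, by the definition of the Hilbert tensor product. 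The closest thing to an obstacle is bookkeeping: (i) justifying continuity of RKHS elements via $|f(\mathbf z)-f(\mathbf z')| \le \|f\|_{\mathcal{H}}\,\|k(\cdot,\mathbf z)-k(\cdot,\mathbf z')\|_{\mathcal{H}}$ together with continuity of $k$, and (ii) checking the subalgebra axioms for $\mathcal{A}$; the passage from sup norm to $L_2(P_{\mathcal{X}\times\mathcal{T}})$ is immediate, so there is no substantial difficulty --- the proposition is essentially Stone--Weierstrass in disguise.
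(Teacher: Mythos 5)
Your proof is correct, and it takes a genuinely different route from the paper's. The paper argues inside the RKHS: it approximates $f$ by a finite kernel expansion $f_d(\mathbf{x},\mathbf{t})=\sum_{i=1}^{d}\alpha_i\,k\left((\mathbf{x}_i,\mathbf{t}_i),(\mathbf{x},\mathbf{t})\right)$ (truncating a Cauchy sequence in the pre-RKHS, controlling the error first in RKHS norm and then in $L_2$), and only then obtains the product form by assuming the kernel factorizes as $k=k_{\mathcal{X}}\cdot k_{\mathcal{T}}$, so that $g$ and $h$ are built explicitly from the kernel sections $\alpha_i k_{\mathcal{X}}(\cdot,\mathbf{x}_i)$ and $k_{\mathcal{T}}(\cdot,\mathbf{t}_i)$. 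You instead leave the RKHS immediately: continuity of $f$ (from continuity of the kernel on the compact domain) plus Stone--Weierstrass applied to the subalgebra of finite sums $\sum_j g_j(\mathbf{x})h_j(\mathbf{t})$ gives sup-norm density, and the $L_2(P)$ bound follows since $P$ is a probability measure. Your route is cleaner in one important respect: it does not need the tensor-product structure of $k$, which the paper invokes mid-proof (``we can decompose $k$ into the product kernel'') without it being part of the proposition's hypotheses; your argument therefore covers a strictly larger class of kernels and delivers the stronger uniform approximation. What the paper's route buys in exchange is constructiveness: its $g$ and $h$ are explicit finite collections of kernel sections, which dovetails with the fixed-basis-function setting of the later convergence analysis, whereas your $g_j,h_j$ are abstract continuous functions produced by a density theorem. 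Both arguments are sound; your observation in the closing remarks that the tensor-product case can be handled via $\mathcal{H}_{\mathcal{X}}\otimes\mathcal{H}_{\mathcal{T}}$ is essentially the paper's proof.
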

 
This assumption allows us to simplify the expression of the CATE for treatments $\vect t^{\prime}, \vect t$, given  $\vect x$,
\begin{align}
\tau\left(\vect t^{\prime}, \vect t, \vect x\right) =  g\left(\matr x\right)^{\top} \left(h\left(\matr t^{\prime}\right) -  h\left(\matr t\right)\right).
\end{align}
Define \emph{propensity features} $e^h\left(\matr x\right) \triangleq \E \left[ h\left(\matr T\right) \mid \matr x \right ]$ and $m\left (\matr x\right) \triangleq \E \left [ Y \mid \matr x \right ] = g\left(\matr x\right)^{\top} e^h\left(\matr x\right).$

Following the same steps as in Section~\ref{subsec:robinson}, the Generalized Robinson Decomposition for \eqr{eq:PE} is 
\begin{empheq}[box=\eqbox]{align}
    Y -  m\left(\matr X\right)  = g\left(\matr X\right)^{\top} \left (h\left(\matr T\right) - e^h\left(\matr X\right)\right) + \varepsilon.
    \label{eq:robinson_decomp_gin}
\end{empheq}

Given nuisance estimates $\widehat m(\cdot), \widehat e^h(\cdot)$, we can use this decomposition to derive an optimization problem for $h(\cdot),g(\cdot)$ (note $\widehat e^h(\cdot)$ implicitly depends on $h(\cdot)$, we address this dependence in Section~\ref{sec:sin}).

\begin{align}
    \widehat g\left(\cdot\right), \widehat h\left(\cdot\right) \triangleq \argmin_{g,h} \left \{ \frac{1}{n} \sum_{i = 1}^n \left (Y_i -  \widehat m\left(\vect X_i\right) - g\left(\vect X_i\right)^{\top}\left(h\left(\vect T_i\right) - \widehat e^h\left(\vect X_i\right)\right) \right)^2 + \Lambda\left(g\left(\cdot\right)\right)\right \} \label{eq:generalized_robinson_loss} 
\end{align}

\subsection{Quasi-oracle error bound of Generalized Robinson Decomposition}\label{sec:qo}
We establish the main theoretical result of our paper: a \emph{quasi-oracle convergence guarantee} for the Generalized Robinson Decomposition under a finite-basis representation of the outcome function. This result is analogous to the R-learner for binary CATEs \cite{r-learner}: when the true $e\left(\vect \cdot \right), m\left(\vect \cdot \right)$ are unknown, and we only have access to the estimators $\widehat e\left(\vect \cdot \right), \widehat m\left(\vect \cdot \right)$, then as long as the estimates converge at $n^{-1/4}$ rate, the estimator $\bincateest\left(\vect \cdot \right)$ achieves the same error bounds as an \textit{oracle} who has ground-truth knowledge of these two nuisance components. 

More formally, provided the nuisance estimators $\widehat{m}(\cdot)$ and $\widehat{e}^h(\cdot)$ converge at an $O\left(n^{-1/4}\right)$ rate, our CATE estimator will converge at an $\widetilde{O}(n^{-\frac{1}{2(1+p)}})$ rate for arbitrarily small $p>0$, recovering the parametric convergence rate for when the true $m(\cdot)$ and $e^h(\cdot)$ are provided as oracle quantities. 

Our analysis assumes that the outcome $\E\left[Y\mid\mx=\mathbf{x}, \mt=\mathbf{t}\right]$ can be written as a linear combination of fixed basis functions. By Proposition~\ref{prop:prod_decomp}, as long as we have enough basis functions, this representation is flexible enough to capture the true outcome function. 

\begin{assumption}\label{assump:true_fun_approx}
Let $\boldsymbol{\alpha}(\vect X) \in \R^{d_{\boldsymbol{\alpha}}}$, $\boldsymbol{\beta}(\vect T) \in \R^{d_{\boldsymbol{\beta}}}$ be fixed, known orthonormal basis features on $\vect X \in \R^{d_{\mathbf x}}$, $\vect T \in \R^{d_{\mathbf t}}$, respectively. The true outcome function $f^*(\mathbf{x}, \mathbf{t}) = \E[Y\mid\mx=\mathbf{x}, \mt=\mathbf{t}]$ can be written as $f^*(\mathbf{x}, \mathbf{t})= \boldsymbol{\alpha}^{\top}(\mathbf{x})\thet^* \boldsymbol{\beta}(\mathbf{t})$ for some (unknown) matrix of coefficients $\thet^*$.
\end{assumption}
Note that by setting $g = \boldsymbol{\alpha}^{\top}\Theta^*$ and $h = \boldsymbol{\beta}$, we recover \eqr{eq:PE}. Additionally, we will need overlap in the basis features $\boldsymbol{\alpha}(\mathcal{X}), \boldsymbol{\beta}(\mathcal{T})$.
\begin{assumption}[Overlap in features] \label{assump:overlap_mainbody}
The marginal distribution of features $\mathcal{P}_{\boldsymbol{\alpha}(\mathcal{X})\times\boldsymbol{\beta}(\mathcal{T})}$ is positive, i.e. $\operatorname{supp}[\mathcal{P}_{\boldsymbol{\alpha}(\mathcal{X})\times \boldsymbol{\beta}(\mathcal{T})}]= \boldsymbol{\alpha}(\mathcal{X}) \times \boldsymbol{\beta}(\mathcal{T})$.
\end{assumption}
Assumption~\ref{assump:overlap_mainbody} is typically weaker than requiring overlap in $\vect X$ and $\vect T$, i.e., when $d_{\boldsymbol{\alpha}}, d_{\boldsymbol{\beta}} \ll d_{\mathbf x}, d_{\mathbf t}$. 

With further technical assumptions specified in Appendix \ref{sec:rates}, we establish the following theorem.

\begin{theorem}\label{theorem:main}
Let $\thet^*$ denote the representer of the true outcome function. Suppose Assumptions \ref{assump:overlap_mainbody}, \ref{assump:boundedness}, and \ref{assump:true_fun_approx} hold. Moreover, suppose that the propensity estimate $\widehat{e}^h$ is uniformly consistent, \begin{equation}
    \sup_{\mathbf{x} \in \mathcal{X}}\|\widehat{e}^h(\vect x) - e^h(\vect x)\| \rightarrow_p 0
\end{equation}
and the $L_2$ errors converge at rate
\begin{equation}
    \E\left[\left\{\estmean - \mean\right\}^2\right], \E\left[\|\widehat{e}^{h}\left(\mx\right) - e^h\left(\mx\right)\|^2\right] = \mathcal{O}(a^2_n) \label{eq:cf_rates}
\end{equation}
for some sequence $a_n \rightarrow 0$, where $(a_n)$ is such that $a_n = O(n^{-\kappa})$ with $\kappa > \frac{1}{4}$. Further, we define the regret as the excess risk \begin{align}
    R\left(\widehat{\thet}_n\right) \triangleq L\left(\widehat{\thet}_n\right) - L\left(\thet^*\right), \; L\left(\thet\right) \triangleq \E\left[\left\{\left(Y - m^*\left(\mx\right)\right) - \boldsymbol{\alpha}\left(\mx\right) \thet \left(\boldsymbol{\beta}\left(\mt\right) - e^h\left(\mx\right)\right)\right\}^2\right].
\end{align} Suppose that we obtain $\widehat \thet_n$ via a penalized basis function regression variant of the Generalized Robinson Decomposition, with a properly chosen penalty $\Lambda_n\left(\|\widehat \thet_n\|_2\right)$ (specified in the proof). Then, $\widehat \thet_n$ satisfies the regret bound: $R\left(\widehat{\thet}_n\right) = \widetilde{O}(r_n^2)$ with $r_n = n^{-\frac{1}{2(1+p)}}$ for arbitrarily small $p > 0$. 
\end{theorem}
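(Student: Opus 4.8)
The plan is to run the quasi-oracle, localized $M$-estimation argument of the R-learner~\cite{r-learner}, adapted to the matrix coefficient $\thet$ and the \emph{centered} treatment feature $\boldsymbol{\beta}(\mathbf{t}) - e^h(\mathbf{x})$, using a cross-fitted version of the penalized estimator in \eqref{eq:generalized_robinson_loss} (the cross-fitting and the moment/boundedness conditions being spelled out in Appendix~\ref{sec:rates}). Two structural facts do the heavy lifting. Since $f^*$ is the true conditional mean, $\E[\varepsilon\mid\matr X,\matr T]=0$, and under Assumption~\ref{assump:true_fun_approx} one has $m^*(\mathbf{x}) = \boldsymbol{\alpha}(\mathbf{x})^{\top}\thet^* e^h(\mathbf{x})$; hence the oracle residual at $\thet^*$ equals $\varepsilon$, the cross term vanishes, and $R(\thet) = L(\thet)-L(\thet^*) = \E\bigl[\bigl(\boldsymbol{\alpha}(\mathbf{x})^{\top}(\thet-\thet^*)(\boldsymbol{\beta}(\mathbf{t})-e^h(\mathbf{x}))\bigr)^{2}\bigr]$. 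By Assumptions~\ref{assump:overlap_mainbody} and \ref{assump:boundedness} this quadratic form obeys $c\|\thet-\thet^*\|_2^2 \le R(\thet) \le C\|\thet-\thet^*\|_2^2$, so it is enough to bound $\|\widehat\thet_n-\thet^*\|_2^2$. The second fact is the Robinson orthogonality $\E[\boldsymbol{\beta}(\matr T)-e^h(\matr X)\mid\matr X]=0$, immediate from the definition of $e^h$, which will annihilate the first-order nuisance terms in expectation.

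Write $\widehat L_n$ for the feasible (plug-in) objective and start from the basic inequality $\widehat L_n(\widehat\thet_n)+\Lambda_n(\|\widehat\thet_n\|_2)\le\widehat L_n(\thet^*)+\Lambda_n(\|\thet^*\|_2)$. Put $\delta=\widehat\thet_n-\thet^*$, $\Delta m=\widehat m-m^*$, $\Delta e=\widehat e^h-e^h$, $u_i=\boldsymbol{\alpha}(\mathbf{x}_i)^{\top}\delta(\boldsymbol{\beta}(\mathbf{t}_i)-\widehat e^h(\mathbf{x}_i))$, and $\eta_i=-\Delta m(\mathbf{x}_i)+\boldsymbol{\alpha}(\mathbf{x}_i)^{\top}\thet^*\Delta e(\mathbf{x}_i)$; since the feasible residual at $\thet^*$ is $\varepsilon_i+\eta_i$, expanding the squares turns the basic inequality into
\begin{equation}
\tfrac1n\textstyle\sum_i u_i^2 \;\le\; \tfrac2n\textstyle\sum_i u_i(\varepsilon_i+\eta_i) + \Lambda_n(\|\thet^*\|_2)-\Lambda_n(\|\widehat\thet_n\|_2).
\end{equation}
The left side gives restricted strong convexity: substituting $\boldsymbol{\beta}(\mathbf{t}_i)-\widehat e^h(\mathbf{x}_i)=(\boldsymbol{\beta}(\mathbf{t}_i)-e^h(\mathbf{x}_i))-\Delta e(\mathbf{x}_i)$, the leading term $\frac1n\sum_i\bigl(\boldsymbol{\alpha}(\mathbf{x}_i)^{\top}\delta(\boldsymbol{\beta}(\mathbf{t}_i)-e^h(\mathbf{x}_i))\bigr)^2$ concentrates (fixed feature dimension) around $R(\thet^*+\delta)\ge c\|\delta\|_2^2$ with an $O_p(\sqrt{\cdot/n})$ perturbation that is uniform in $\delta$, while the $\Delta e$-corrections are $o_p(1)\,\|\delta\|_2^2$ by the uniform consistency $\sup_{\mathbf x}\|\Delta e(\mathbf x)\|\rightarrow_p 0$; hence $\frac1n\sum_i u_i^2\ge(c-o_p(1))\|\delta\|_2^2$.

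For the right side, split $u_i$ and $\eta_i$ into centered and $\Delta$ parts, producing terms of three kinds, each of the form $\|\delta\|_2\,b_n$: (i) pure oracle noise $\frac1n\sum_i\bigl(\boldsymbol{\alpha}(\mathbf{x}_i)^{\top}\delta(\boldsymbol{\beta}(\mathbf{t}_i)-e^h(\mathbf{x}_i))\bigr)\varepsilon_i$, mean zero, so (by Cauchy--Schwarz against a fixed-dimensional mean-zero average) $b_n=O_p(n^{-1/2})$; (ii) orthogonality-protected cross terms, e.g.\ $\frac1n\sum_i\bigl(\boldsymbol{\alpha}(\mathbf{x}_i)^{\top}\delta(\boldsymbol{\beta}(\mathbf{t}_i)-e^h(\mathbf{x}_i))\bigr)\Delta m(\mathbf{x}_i)$, its $\Delta e$ analogue, and $\frac1n\sum_i\bigl(\boldsymbol{\alpha}(\mathbf{x}_i)^{\top}\delta\,\Delta e(\mathbf{x}_i)\bigr)\varepsilon_i$, which are \emph{conditionally} mean zero given the auxiliary fold (by the Robinson orthogonality and $\E[\varepsilon\mid\matr X,\matr T]=0$) with conditional variance $O(\|\delta\|_2^2 a_n^2/n)$, so $b_n=O_p(n^{-1/2}a_n)$; and (iii) genuine second-order products $\frac1n\sum_i\bigl(\boldsymbol{\alpha}(\mathbf{x}_i)^{\top}\delta\,\Delta e(\mathbf{x}_i)\bigr)\Delta m(\mathbf{x}_i)$ and $\frac1n\sum_i\bigl(\boldsymbol{\alpha}(\mathbf{x}_i)^{\top}\delta\,\Delta e(\mathbf{x}_i)\bigr)\bigl(\boldsymbol{\alpha}(\mathbf{x}_i)^{\top}\thet^*\Delta e(\mathbf{x}_i)\bigr)$, bounded via Cauchy--Schwarz and \eqref{eq:cf_rates} by $b_n=O_p(a_n^2)$. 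Using $\|\delta\|_2 b_n\le\tfrac c8\|\delta\|_2^2+\tfrac2c b_n^2$ on each term and combining with the left-side bound yields $\tfrac c2\|\delta\|_2^2\le O_p(n^{-1}+n^{-1}a_n^2+a_n^4)+\Lambda_n(\|\thet^*\|_2)$. Since $a_n=O(n^{-\kappa})$ with $\kappa>\tfrac14$, both $n^{-1}a_n^2$ and $a_n^4$ are $o(n^{-1})$, so the stochastic part is $O_p(n^{-1})$; choosing the penalty $\Lambda_n(\|\thet\|_2)=\lambda_n\|\thet\|_2^2$ with $\lambda_n\asymp r_n^2=n^{-1/(1+p)}$ (which dominates that stochastic part) gives $\|\delta\|_2^2=\widetilde O(r_n^2)$ and hence $R(\widehat\thet_n)\le C\|\delta\|_2^2=\widetilde O(r_n^2)$. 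The slack $p>0$ merely absorbs the choice of penalty level and the logarithmic/constant factors in the empirical-process bounds above.

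I expect the main obstacle to be step (ii): verifying that the first-order nuisance contributions are genuinely (conditionally) mean zero. This hinges on the exact Robinson-decomposition algebra — the treatment feature entering only through the centered quantity $\boldsymbol{\beta}(\mathbf{t})-e^h(\mathbf{x})$, so that $\E[\,\cdot\mid\matr X]=0$ — combined with cross-fitting, which freezes $\widehat m,\widehat e^h$ when this conditional expectation is taken; without it these terms would be $O_p(a_n)$ rather than $O_p(n^{-1/2}a_n)$, which for $\kappa$ only slightly above $\tfrac14$ would dominate the target rate. A secondary difficulty is keeping the restricted-eigenvalue lower bound of the left side intact with the \emph{estimated} feature $\widehat e^h$ sitting inside the empirical Gram matrix — exactly the point at which uniform consistency of $\widehat e^h$ is needed.
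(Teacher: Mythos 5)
Your proposal is correct in outline but takes a genuinely different route from the paper. You run a direct penalized-M-estimation argument: basic inequality for the feasible objective, restricted strong convexity of the empirical quadratic form in the centered features $\boldsymbol{\alpha}(\mathbf{x})\otimes(\boldsymbol{\beta}(\mathbf{t})-e^h(\mathbf{x}))$, Neyman/Robinson orthogonality plus cross-fitting to make the first-order nuisance terms conditionally mean zero (hence $O_p(a_n n^{-1/2})$), and Cauchy--Schwarz with the product condition $a_n^2=o(n^{-1/2})$ for the second-order terms. The paper instead ports the Nie--Wager machinery wholesale: it establishes a quasi-isomorphism $\frac1k\check R(\theta;c)-\rho(c)\le R(\theta;c)\le k\check R(\theta;c)+\rho(c)$ uniformly over the nested balls $\mathcal H_c$ via Mendelson's localization (Lemma~\ref{lemma:risk_bound} and the concrete $\rho_n(c)$ in \eqref{eq:rho}), derives the oracle rate from that, and then ``bridges'' the feasible and oracle empirical regrets through a five-term decomposition $A_1^c,A_2^c,B_1^c,B_2^c,B_3^c$ (Lemma~\ref{lemma:l8-ext}) localized in both $c$ and $\|f_{\thet}-f_{\thec}\|_{L_2}$. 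The two proofs rest on the same two structural facts — you correctly identify both — namely $R(\thet)=\E[\{\boldsymbol{\alpha}(\mx)^\top(\thet-\thet^*)(\boldsymbol{\beta}(\mt)-e^h(\mx))\}^2]$ bounded below by $\|\thet-\thet^*\|_2^2$ via overlap (the paper's Lemmas~\ref{lemma: jensen&finitedim} and~\ref{lemma: overlap-extension}), and $\E[\boldsymbol{\beta}(\mt)-e^h(\mx)\mid\mx]=0$ killing the leading nuisance contributions. What your route buys: in the fixed finite-basis setting the empirical-process terms are linear or quadratic in $\thet-\thet^*$, so uniformity over $\thet$ reduces to norms of fixed-dimensional random vectors and matrices, and the argument would in fact deliver the parametric $n^{-1}$ rate were the penalty not inflated to $\asymp n^{-1/(1+p)}$ — making visible that the theorem's $n^{-1/(1+p)}$ rate is inherited from the RKHS eigenvalue machinery rather than forced by the statistics. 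What the paper's route buys is an argument that survives when the hierarchy $\{\mathcal H_c\}$ and eigenvalue decay genuinely matter (i.e., toward infinite bases). Two points you should not gloss over if you write this out: the lower bound $R(\thet)\ge c\|\thet-\thet^*\|_2^2$ needs the population Gram matrix of the \emph{centered} features to be nonsingular, which requires an explicit argument from Assumption~\ref{assump:overlap_mainbody} (the support of $\boldsymbol{\beta}(\mt)-e^h(\mx)$ must not collapse — this is exactly what Lemma~\ref{lemma: jensen&finitedim} is for); and the conditional-mean-zero claims in your step (ii) genuinely require cross-fitting, which the paper's Lemma~\ref{lemma:l8-ext} also invokes (conditioning on $\noqfold$) even though the practical algorithm in Section~\ref{sec:sin} does not use it.
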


\section{Structured Intervention Networks} \label{sec:sin}
We introduce \emph{Structured Intervention Networks} (SIN), a two-stage training algorithm for neural networks, which enables flexibility in learning complex causal relationships, and scalability to large data-sets. This implementation of GRD strikes a balance between theory and practice: while we assumed fixed basis-functions in Section~\ref{sec:qo}, in practice, we often need to learn the feature maps from data. We leave the convergence analysis of this representation learning setting for future work.
\subsection{Training Algorithm}
We propose to simultaneously learn feature maps $\widehat g\left(\matr X\right), \widehat h\left(\matr T\right)$ using alternating gradient descent, so that they can adapt to each other. A remaining challenge is that learning $\widehat e^h(\matr X)$ is now entangled with learning $\widehat h\left(\matr T\right)$. While the R-learner is based on the idea of \emph{cross-fitting}, where at each data point $i$ we pick estimates of the nuisances that do not use that data point,
we introduce a pragmatic representation learning approach for $(\widehat g, \widehat h)$ that does not use cross-fitting\footnote{We could in principle use cross-fitting for $\widehat e^h$, although the loop between fitting $\widehat h$ alternating with $\widehat e^h$ would break the overall independence between $\widehat e^h_i(\matr X)$ and data point $i$. While it is possible that cross-fitting for $\widehat e^h$ is still beneficial in this case, for simplicity and for computational savings, we did not implement it.}.

We learn surrogate models for the mean outcome and propensity features $\widehat m_{\mpa}(\matr X)$ and $\widehat e^h_{\vec \epa}(\matr X)$ with parameters $\mpa \in \R^{d_{\mpa}}, \epa \in \R^{d_{\epa}}$, as well as feature maps for covariates and treatments $\widehat g_{\xpa}(\matr X), \widehat h_{\tpa}(\matr T)$, parameterized by $\xpa \in \R^{d_{\xpa}}, \tpa \in \R^{d_{\tpa}}$. We denote regularizers by $\Lambda \left ( \cdot \right )$. Figure~\ref{fig:algorithm} summarizes the algorithm. As the mean outcome model $\com$ does not depend on the other components, we learn it separately in Stage 1. In Stage 2, we alternate between learning $\xpa, \tpa, \epa$.

\paragraph{Stage 1:} Learn parameters $\mpa$ of the mean outcome model $\com$ based on the objective 
\begin{align}
    J_{m}\left(\mpa\right) &= \frac{1}{m} \sum_{i = 1}^m \left (y_i - \widehat m_{\mpa}\left(\vect{x}_i\right)\right)^{2} +  \Lambda \left(\mpa \right ),
\end{align}which relies only on covariates and outcome data $\mathcal{D}_1 := \left \{\left (\vect x_i, y_i \right)\right \}^{m}_{i=1}$. 

\paragraph{Stage 2:} Learn parameters $\xpa, \tpa$ for the covariates and treatments feature maps $\xfeat, \tfeat$, as well as parameters $\epa$ for the propensity features $\pf$. 

\begin{align}
    J_{g, h}\left(\tpa, \xpa\right) &= \frac{1}{n} \sum_{i = 1}^n \left(y_i - \left \{ \widehat m_{\mpa}\left(\vect x_i\right) + \widehat g_{\xpa}\left(\vect x_i\right)^{\top}\left(\widehat h_{\tpa}\left(\vect t_i\right) - \widehat e^h_{\epa}\left(\vect x_i\right)\right) \right \}\right)^{2} + \Lambda\left( \xpa \right ) + \Lambda \left( \tpa \right)\label{eq:gin_loss}. 
\end{align} This loss hinges on $\pf$, which needs to be learned by 
\begin{align}
    J_{e^h}\left(\epa\right) &= \sum_{i = 1}^n \; \norm{\widehat h_{\tpa}\left(\vect t_i\right) - \widehat e^h_{\epa}\left(\vect{x}_i\right)}_{2}^{2} + \Lambda \left(\epa \right ),
\end{align} note again the dependence on $\tfeat$. While it may be tempting to learn $\xpa, \tpa$ and $\epa$ jointly, they have fundamentally different objectives ($\pf$ is defined as an estimate of the expectation $\E \left[ h\left(\matr T\right) \mid \matr x \right ]$). 
Therefore, we employ an alternating optimization procedure, where we take $k \in \{1, \dots, K\}$ optimization steps for $\xpa, \tpa$ towards $J_{g, h}\left(\xpa, \tpa\right)$ and one step for learning $\epa$. We observe that setting $K>1$, i.e. updating $\xpa, \tpa$ more frequently than $\epa$, stabilizes the training process.

\begin{figure}
\centering
\begin{subfigure}[t]{0.48\textwidth}
\begin{algorithm}[H]
\caption{SIN Training.}
\textbf{Input}: Stage 1 data $\mathcal{D}_1 := \{(\vect x_i, y_i ) \}^{m}_{i=1}$, Stage 2 data $\mathcal{D}_2 := \{(\vect x_i, \vect t_i, y_i ) \}^{n}_{i=1}$ Step sizes $\lambda_{\mpa}, \lambda_{\epa}, \lambda_{\xpa}, \lambda_{\tpa}$. Number of update steps $K$. Mini-batch sizes $B_1, B_2$.
\begin{algorithmic}[1]
\State Initialize parameters: $\mpa, \epa, \xpa, \tpa$
\While{not converged}  \Comment{\emph{Stage 1}}
    \State Sample mini-batch $\{(\vect x_b, y_b) \}^{m_{B_1}}_{b=1}$
    \State Evaluate $J_{m}\left(\mpa\right)$
    \State Update $\mpa \leftarrow \mpa-\lambda_{\mpa} \widehat{\nabla}_{\mpa} J(\mpa)$
\EndWhile 
\While{not converged} \Comment{\emph{Stage 2}}
    \State Sample mini-batch  $\{(\vect x_b, \vect t_b, y_b ) \}^{n_{B_2}}_{b=1}$
    \State Evaluate $J_{g, h}\left(\xpa, \tpa\right), J_{e^h}\left(\epa\right)$
    \For{$k = 1$ to $K$}
    \State Update $\tpa \leftarrow \tpa-\lambda_{\tpa} \widehat{\nabla}_{\tpa} J_{g, h}\left(\xpa, \tpa\right)$
    \State Update $\xpa \leftarrow \xpa-\lambda_{\xpa} \widehat{\nabla}_{\xpa} J_{g, h}\left(\xpa, \tpa\right)$
    \EndFor
    \State Update $\epa \leftarrow \epa-\lambda_{\epa} \widehat{\nabla}_{\epa} J_{e^h}\left(\epa\right)$
\EndWhile       
\end{algorithmic}
\end{algorithm}
\end{subfigure}
\hfill
\begin{subfigure}[t]{0.48\textwidth}
\begin{algorithm}[H]
\caption{Pseudocode in a PyTorch-like style.}
\definecolor{codeblue}{rgb}{0.25,0.5,0.5}
\definecolor{dkgreen}{rgb}{0,0.6,0}
\definecolor{gray}{rgb}{0.5,0.5,0.5}
\definecolor{mauve}{rgb}{0.58,0,0.82}
\lstset{
  language=Python,
  backgroundcolor=\color{white},
  basicstyle=\fontsize{8.25pt}{8.25pt}\ttfamily\selectfont,
  columns=fullflexible,
  breaklines=true,
  captionpos=b,
  commentstyle=\fontsize{8.25pt}{8.25pt}\color{gray},
  keywordstyle=\fontsize{8.25pt}{8.25pt}\color{codeblue},
}

\vspace{-1.2ex}
\begin{lstlisting}[language=python]
# Initialize submodels and optimizers
m, e, g, h = MLP(...), MLP(...), MLP(...), GNN(...)
m_opt, e_opt, g_opt, h_opt = Adam(m.params(), m_lr), Adam(e.params(), e_lr), ...

# Stage 1
for batch in train_loader:
    X, Y = batch.X, batch.Y
    m_opt.zero_grad()
    F.mse_loss(m(X), Y).backward()
    m_opt.step()

# Stage 2
for batch in train_loader:
    X, T, Y = batch.X, batch.T, batch.Y
    for _ in range(num_update_steps):
        g_opt.zero_grad()
        h_opt.zero_grad()
        F.mse_loss((g(X)*(h(T) - e(X))).sum(-1), (Y-m(X))).backward()
        g_opt.step()
        h_opt.step()
    e_opt.zero_grad()
    F.mse_loss(e(X), h(T)).backward()
    e_opt.step()
\end{lstlisting}
\vspace{-1.2ex}

\end{algorithm}
\end{subfigure}
\caption{The two-stage algorithm for training SIN.}
\label{fig:algorithm}
\vspace{-3ex}
\end{figure}

\subsection{Advantages of SIN} We conclude by describing the beneficial properties of SIN, particularly in finite-sample regimes:
\begin{enumerate}[ wide = 5pt, leftmargin = *]
\item \textbf{Targeted regularization}: Regularizing $\xfeat, \tfeat$ in \eqr{eq:gin_loss} after partialing out confounding is a type of targeted regularization of the isolated causal effect. 
In contrast, outcome estimation methods can suffer from regularization-induced confounding, e.g., regularizing the effect estimate away from zero in the service of trying to improve predictive performance \cite{x-learner}. 
\item \textbf{Propensity features}: Learning propensity features can help us to (i) partial out parts of $\matr X$ that cause the treatment but not the outcome, and (ii) dispose unnecessary components of $\matr T$.
\item \textbf{Data-efficiency}: In contrast to methods that split the data into disjoint models for each treatment group (known as \emph{T-learners} for binary treatments \cite{caron2020estimating, curth2021nonparametric}), sharing causal effect parameters between all covariates regardless of their assigned treatment increases data-efficiency.
\item \textbf{Partial data}: In settings without access to both the treatment assignment and the outcome but only access to one of them, one can leverage that data to improve the (nuisance) estimator further, e.g., when a patient's recovery is observed one year after a drug was administered \cite{BVNICE}.
\end{enumerate}
\section{Experiments}
Here we evaluate how CATE estimation with our proposed model SIN compares with prior methods.
\subsection{Experimental Setup}

\paragraph{Datasets.} To be able to compute CATE estimation error w.r.t. a ground truth, we design two causal models: a simpler synthetic model with small-world graph treatments and a more complex model with real-world molecular graph treatments and gene expression covariates. 
The Small-World (SW) simulation contains $1$,$000$ uniformly sampled covariates and $200$ randomly generated Watts–Strogatz small-world graphs \cite{watts1998collective} as treatments. \textit{The Cancer Genomic Atlas} (TCGA) simulation uses $9$,$659$ gene expression measurements of cancer patients for covariates \cite{weinstein2013cancer} and $10$,$000$ sampled molecules from the QM9 dataset \cite{ramakrishnan2014quantum} as treatments. Appendix~\ref{sec:exp_details} details the data-generating schemes.

\paragraph{Baselines.} 

We compare our method to (1) \textbf{Zero}, a sanity-check baseline that consistently predicts zero treatment effect and equals the mean squared treatment effect (poorly regularized models may perform worse than that due to confounding), (2) \textbf{CAT}, a categorical treatment variable model using one-hot encoded treatment indicator vectors, (3) \textbf{GNN}, a model that first encodes treatments with a GNN and then concatenates treatment and individual features for regression, (4) \textbf{GraphITE} \cite{harada2020graphite}, a CATE estimation method 
designed for graph treatments (more details in Section~\ref{rw:graphite}). GNN and CAT reflect the performance of standard regression models. The contrast between these two provides insight into whether the additional graph structure of the treatment improves CATE estimation. To deal with unseen treatments during the evaluation of CAT, we map such to the most similar ones seen during training based on their Euclidean distance in the embedding space of the GNN baseline. 

\begin{figure}[t!]
    \centering
    \includegraphics[width=\textwidth]{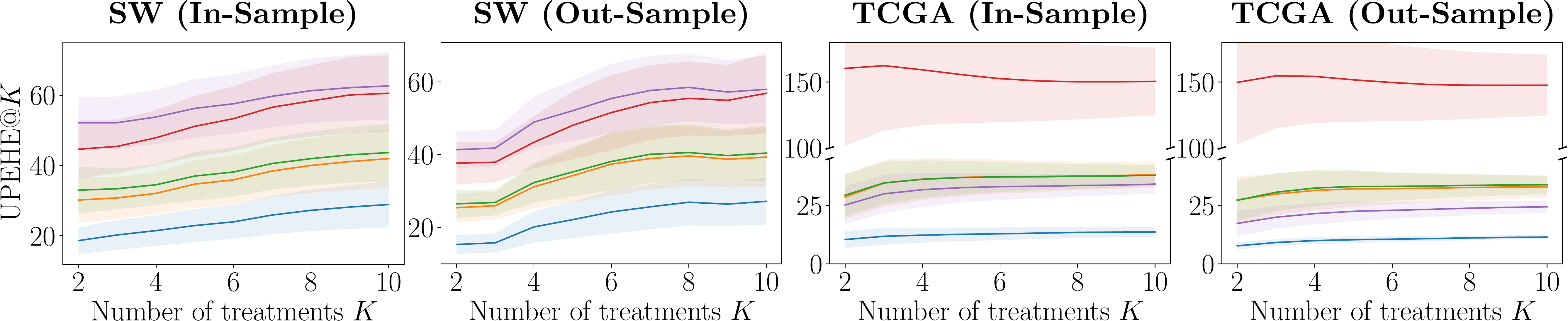}
     \includegraphics[width=.8\textwidth]{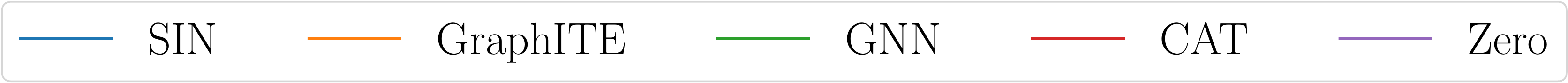}
    \caption{UPEHE@$K$ for $K \in \{2, \ldots, 10\}$.}
    \label{fig:upehe_results}
    \vspace{-3ex}
\end{figure}

\begin{table}[t!]
  \caption{Error of CATE estimation for all methods, measured by WPEHE@$6$. Results are averaged over 10 trials, $\pm$ denotes std. error (each trial samples treatment assignment matrix $\matr W$).}
  \label{tab:results}
  \centering
  \resizebox{0.75\textwidth}{!}{%
  \begin{tabular}{lcccccc}
    \toprule
    \multirow{2}{*}{\bfseries Method} &
      \multicolumn{2}{c}{\bfseries SW} &
    \multicolumn{2}{c}{\bfseries TCGA} \\ 
      & \text{In-sample} & \text{Out-sample} & \text{In-sample}&  \text{Out-sample}  \\
      \midrule
    Zero & $56.26 \pm 8.12$ & $53.77 \pm 8.93$ & $26.63 \pm 7.55$ & $17.94 \pm 4.86$\\
    CAT & $51.75 \pm 8.85 $ & $49.76 \pm 9.73 $& $155.88 \pm 52.82$ & $146.62 \pm 42.32$\\
    GNN & $37.10 \pm 6.84$ & $36.74 \pm 7.42 $ &$ 30.67 \pm 8.29$ & $27.57 \pm 7.95$\\
    GraphITE & $34.81 \pm 6.70$ & $35.94 \pm 8.07$ &$ 30.31 \pm 8.96$ & $27.48 \pm 8.95$\\
    \midrule
    \textbf{SIN} & $\bm{23.00} \pm \bm{4.56}$ & $\bm{23.19} \pm \bm{5.56}$  &$ \bm{10.98} \pm\bm{3.45}$ & $\bm{8.15} \pm \bm{1.46}$\\
    \bottomrule
  \end{tabular}%
  }
  \vspace{-3ex}
\end{table}

\paragraph{Graph models.} For small-world networks, we use \emph{k-dimensional GNNs} \cite{kGNNs}, as to distinguish graphs they take higher-order structures into account. To model molecular graphs, we use \emph{Relational Graph Convolutional Networks} \cite{schlichtkrull2017modeling}, where the nodes are atoms and each edge type corresponds to a specific bond type. We use the implementations of PyTorch Geometric \cite{pytorchgeometric}. 
\paragraph{Evaluation metrics.} We extend the \emph{expected Precision in Estimation of Heterogeneous Effect} (PEHE) commonly used in binary treatment settings \cite{hill} to arbitrary pairs of treatments $(\vect t, \vect t^{\prime})$ as follows. We denote the \emph{Unweighted PEHE} (UPEHE) and the \textcolor{ForestGreen}{\emph{Weighted PEHE} (WPEHE)} as 
\begin{align}
    \epsilon_{\text{UPEHE(\textcolor{ForestGreen}{WPEHE}})} &\triangleq \int_{\mathcal{X}}\left(\widehat{\tau}\left(\vect t^{\prime}, \vect t, \vect x\right)-\tau\left(\vect t^{\prime}, \vect t, \vect x\right)\right)^{2} \textcolor{ForestGreen}{p\left(\vect t \mid \vect x\right)p\left(\vect t^{\prime} \mid \vect x\right)}p\left(\vect x\right)  \dif \vect x,
\end{align} where the weighted version gives less importance to treatment pairs that are less likely; to account for the fact that such pairs will have higher estimation errors. In fact, as the reliability of estimated effects decreases by how likely they are in the observational study, we evaluate all methods on U/WPEHE truncated to the top $K$  treatments, which we call U/WPEHE@$K$. To compute this, for each $\vect x$, we rank all treatments by their propensity $p\left( \vect t \mid \vect x \right)$ (given by the causal model) in descending order. We take the top $K$ treatments and compute the U/WPEHE for all $\binom{K}{2}$ treatment pairs.

\paragraph{In-sample vs. out-sample.} A common benchmark for causal inference methods is the \emph{in-sample} task, which we include here for completeness: estimating CATEs for covariate values $\vect x$ found in the training set.
This task is still non-trivial, as the outcome of only one treatment is observed during training \footnote{The original motivation comes from Fisherian designs where the only source of randomness is on the treatment assignment \cite{imbens:2015}. Our motivation is simpler: rule out the extra variability from different covariates, highlighting the difference between methods due to different loss functions and less due to smoothing abilities.}. In contrast, and arguably of more relevance to decision making, the goal of the \emph{out-sample} task is to estimate CATEs for completely unseen covariate realizations $\vect x'$. 

\paragraph{Hyper-parameter tuning.} To ensure a fair comparison, we perform hyper-parameter optimization with random search for all models on held-out data and select the best hyper-parameters over 10 runs.

\paragraph{Propensity.}\label{para:propensity} We define the propensity (or \emph{treatment selection bias}) as $p\left( \matr T \mid \vect x \right) = \operatorname{softmax}\left(\kappa \matr W^{\top} \matr X\right)$, where $\matr W \in \R^{|\mathcal{T}| \times d}, \forall i, j: W_{ij}\sim \mathcal{U}\left[0,1\right]$ is a random matrix (sampled then fixed for each run). Recall $|\mathcal{T}|$ is the number of available treatments and let $d$ be the dimensionality of the covariates. Here the \emph{bias strength }$\kappa$ is a temperature parameter that determines the flatness of the propensity (the lower the flatter, i.e., $\kappa\!=\!0$ corresponds to the uniform distribution). 

\subsection{Comparison of Performances on different $K$ Treatments}\label{sec:exp_performance}
Figure~\ref{fig:upehe_results} shows the UPEHE@$K$ of all methods for $K \in \{2,\ldots,10\}$. We also report the  WPEHE@$6$ of all methods in Table~\ref{tab:results}. Unless stated otherwise, we report results for bias strengths $\kappa=10$ and $\kappa=0.1$ in the SW and TCGA datasets, respectively across $10$ random trials. 

The results indicate that the relative performance of each method, for both the in-sample and out-sample estimation tasks, is consistent. Further, they suggest that, overall, the performance of SIN is best due to a better isolation of the causal effect from the observed data compared to other methods. The performance difference between CAT and GNN across all results indicate that accounting for graph information significantly improves the estimates. We observe from the SW experiments that GraphITE \cite{harada2020graphite} performs slightly better than GNN, while it is nearly the same as GNN on TCGA.

Surprisingly, the results of the TCGA experiments with low bias strength $\kappa=0.1$ expose that all models but SIN fail to isolate causal effects better than the Zero baseline. These results confirm that confounding effects of $\matr X$ on $Y$ combined with moderate causal effects can cause severe regularization bias for black-box regression models, while SIN partials these out from the outcome by $\com$. We include additional results on convergence and larger values of $K$ in Appendix~\ref{app:add_results_comparison}.

\begin{figure}[t!]
    \centering
    \includegraphics[width=\textwidth]{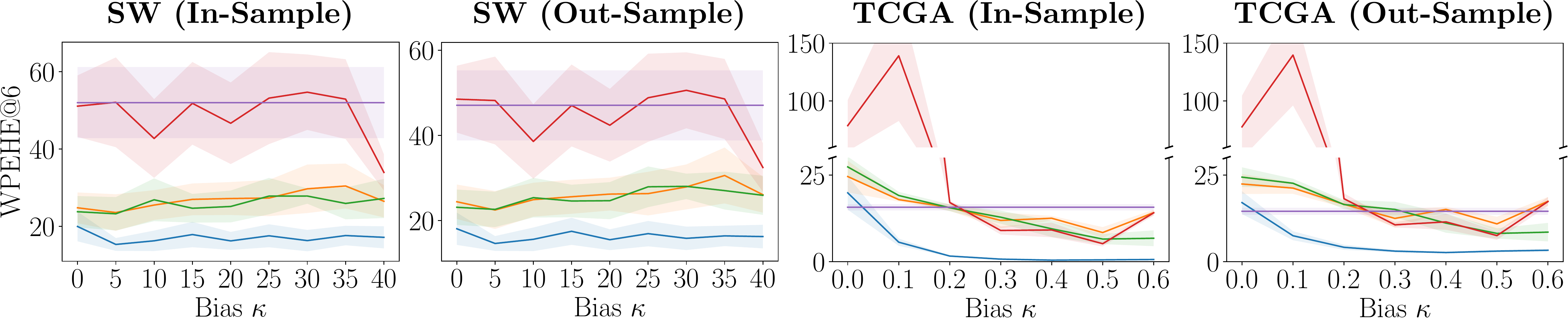}
     \includegraphics[width=.8\textwidth]{figures/legend.pdf}
    \caption{WPEHE@$6$ over increasing bias strength $\kappa$.}
    \label{fig:bias_results}
    \vspace{-4ex}
\end{figure}

\subsection{Comparison of Robustness to different Bias Strengths $\kappa$}\label{sec:exp_bias} A strong selection bias (i.e. large $\kappa$) in the observed data makes CATE estimation more difficult, as it becomes unlikely to see certain treatments $\vect t \in \mathcal{T}$ for particular covariates $\vect x$. Here, we assess each model's robustness to varying levels of selection bias, determined by $\kappa$, across $5$ random seeds. In Figure~\ref{fig:bias_results}, we see that SIN outperforms the baselines across the entire range of considered biases. Interestingly, SIN performs competitively even in a case with no selection bias ($\kappa\!=\!0$, which corresponds to a randomized experiment). Importantly, all performances seem to either stagnate (SW) or to increase (TCGA) with increasing biases. Notably, the poor performance of CAT suddenly improves on datasets with high bias. We believe this is because, in high bias regimes, we see fewer distinct treatments overall, which allows the CAT model to approach the performance of GNN. 
\section{Limitations, Future Work and Potential Negative Societal Impacts} \label{sec:limitations}

\vspace{-1ex}
\paragraph{Limitations and future work.} 
Firstly, in some real-life domains, Assumption~\ref{ass:unconfoundedness} (Unconfoundedness) can be too strong, as there may exist \emph{hidden confounders}. There are two common strategies to deal with them: utilizing \emph{instrumental variables} \cite{deepiv, kiv, xu2021learning} or \emph{proxy variables} \cite{mastouri2021proximal, miao2018identifying, tchetgen2020introduction}. Developing new approaches for structured interventions in such settings is a promising future direction. Secondly, SIN is based on neural networks; however, neural network initialization can impact final estimates. To obtain consistency guarantees, GRD can be combined with kernel methods \cite{mastouri2021proximal, kiv}. 

\vspace{-1ex}
\paragraph{Potential negative societal impacts.}
Because causal inference methods make recommendations about interventions to apply in real-world settings, misapplying them can have a negative real-world impact. It is crucial to thoroughly test these methods on realistic simulations and alter aspects of them to understand how violations of assumptions impact estimation. We have aimed to provide a comprehensive evaluation of structured treatment methods by showing how estimation degrades as less likely treatments are considered (Figure~\ref{fig:upehe_results}) and as treatment bias increases (Figure~\ref{fig:bias_results}). 
\section{Conclusion} \label{sec:conclusion}
The main contributions of this paper are two-fold: (i) the generalized Robinson decomposition that yields a pseudo-outcome targeting the causal effect while possessing a quasi-oracle convergence guarantee under mild assumptions, and (ii) Structured Intervention Networks, a practical algorithm using representation learning that outperforms prior approaches in experiments with graph treatments.

\section*{Acknowledgements}

We thank Antonin Schrab, David Watson, Jakob Zeitler, Limor Gultchin, Marc Deisenroth and Shonosuke Harada for useful discussions and constructive feedback on the paper. JK and YZ acknowledge support by the Engineering and Physical Sciences Research Council with grant number EP/S021566/1. This work was partially supported by an ONR grant number N62909-19-1-2096 to RS. We thank the Alan Turing Institute for the provision of Azure cloud computing resources.

\bibliographystyle{icml2020}
\bibliography{citations}
\newpage
\appendix

\section{Other Related Work} \label{sec:orw}

\paragraph{Plug-in estimators.} 
A recent line of work for CATE estimation derives \emph{plug-in estimators} \cite{caron2020estimating}.\footnote{These are also called \emph{meta-learners}. To avoid confusion with \emph{meta-learning}, we call these \emph{plug-in estimators}.} These work by decomposing CATE estimation into multiple sub-problems (so-called \emph{nuisance components}), each solvable using any supervised learning method \cite{curth2021nonparametric, hahn2020bayesian, kennedy2020optimal, x-learner, r-learner}. Currently, these approaches are limited to binary treatment setups. Our approach is inspired by these methods, extending plug-in estimation to structured treatment settings.

\paragraph{CATE estimation with neural networks.}
Neural network CATE estimators 
typically use separate prediction heads for each treatment option  \cite{johansson2016learning, CEVAE, nie2021vcnet, schwab2019perfect, schwab2020doseresponse, shalit2017estimating, dragonnet}. This architectural design reduces one source of regularization bias: the influence of the treatment indicator variable might be lost in the high-dimensional network representations. Extending this idea directly to structured treatments would not only be computationally expensive, but would also not be able to make use of treatment features or learn treatment representations. 

\paragraph{Multiple treatments.}
While Inverse Probability Weighting (IPW) \cite{li2019propensity, lopez2017estimation, zhou2018offline} is a popular technique for estimating effects with multiple, categorical treatments, it requires estimating the propensity density which is infeasible in settings with hundreds or thousands of treatments; some of which may have not been seen during training. 
\citet{nabi2020semiparametric} propose a framework for sufficient dimensionality reduction of high-dimensional treatments based on semiparametric inference theory. Besides relying on IPW, this approach is designed for average treatment effects (not CATEs).

\section{The Generalized Robinson Decomposition} \label{app:grd}
\subsection{Motivation}
frequently the influence of $\matr T$ on $Y$ is very different from the influence of $\matr X$ on $Y$. Specifically, $f(\matr X, \matr T)$ often has different smoothness in $\matr X$ and $\matr T$. For instance, different health histories $\matr X$ for a fixed treatment $\matr t$ will have a much more variable effect on $Y$ than different treatments $\matr t$ for a given history $\matr X$. This is why methods like the R-learner \cite{r-learner} have carefully separated estimation functions of $\matr X$ from functions of $\matr T$ \cite{caron2020estimating}.

A generic way to extend the Robinson decomposition to arbitrary treatments is to learn a model $\hat f(\matr X, \matr T)$ defined over the entire outcome surface, via mean outcome $\hat m(\matr X)$ and treatment conditional density $p(\matr T \mid  \matr X)$. In this case, we fit the relationship
\begin{align}
   Y - m\left(\matr X\right) = f\left (\matr X, \matr T\right ) -  e^p\left (\matr X\right ) + \varepsilon, \quad \text{where} \quad  e^p\left(\matr x\right) \triangleq \E\left[f\left(\matr X, \matr T\right) \mid \matr x\right].
\end{align} To learn $f(\cdot, \cdot)$ from a dataset $\mathcal{D}\!=\!\left\{\left(\vect{x}_{i}, \vect t_{i}, y_{i}\right)\right\}_{i=1}^{n}$ we need to solve,
\begin{align}
\hat{f} = \argmin_{f \in \mathcal{F}} \frac{1}{N}\sum_{i=1}^N \left[\left\{y_i - \hat{m}\left(\vect x_i\right) \right\} - \left(f\left(\vect x_i, \vect t_i\right) - \hat{e}^p\left(\vect x_i\right)\right)\right]^2, \label{eq:naive}
\end{align}
where $\mathcal{F}$ is some function space and $\hat{m}$ is a plug-in finite sample estimate of $m$. Because $e^p$ contains $f$, we need to estimate it, which we denote $\hat{e}^p$. 

One solution is to estimate the propensity $p(\matr T \mid \matr X)$ and use it to compute $\hat{e}^p$. However, this approach requires conditional density estimation over potentially high-dimensional, structured treatments, which remains an open research question \cite{zhang2021set}, and is prone to high variance \cite{kiv}. Further, to compute $\hat{e}^p$ from it, one has to resort to Monte Carlo evaluation. By learning propensity features $\pf$ instead of $p(\matr T \mid \matr X)$, we avoid these issues.

Another option is to solve for $f$, fix it, then estimate $\hat{e}^p$ using regression from finite samples, and iterate to a fixed point. However, there is a fundamental issue with this approach: we are typically interested in regularizing the causal effect directly as opposed to the generic regression function. This is why, for instance, the R-learner parameterizes $\mu_1(\matr x)$ as a (nuisance) baseline $\mu_0(\matr x)$ plus the CATE $\tau_{\text{b}}(\matr x)$. The black-box $f(\matr x, \matr t)$ does not capture the asymmetry between $\matr x$ and $\matr t$ in the implied CATE $f(\matr x, \matr t) - f(\matr x, \matr t')$. Further, unlike the binary case, in many applications, we do not have a baseline treatment $\matr t_0$  with respect to which we could parameterize $f$ in terms of some $\tau(\matr t, \matr t_0, \matr x)$. To regularize the causal effect more directly, we make the product effect assumption, which allows us to partial out confounding. 
\subsection{Derivation in detail}
We consider the product effect parameterization of $p(y \mid \matr x, \matr t)$,
\begin{align}
    Y = \quad \underbrace{g\left(\vect X\right)^{\top} h\left(\matr T\right)}_{=:f(\matr X, \matr T)} \quad + \quad   \varepsilon, 
    \label{eq:PE_copy}
\end{align} where $g: \mathcal{X} \rightarrow \mathbb{R}^{d}, h: \mathcal{T} \rightarrow \mathbb{R}^d$ and $\E[\varepsilon \mid \matr x, \matr t] = \E\left[\varepsilon~\mid~\matr x\right] = 0,$ for all  $\left(\matr x, \matr t\right) \in \mathcal X \times \mathcal T$.
Rearranging \eqr{eq:PE_copy} yields the Robinson residual
\begin{align}
    \varepsilon = Y -  g\left(\vect X\right)^{\top} h\left(\matr T\right), \label{eq:pipe_robinson}
\end{align} which we aim to rewrite in terms of $m(\matr X)$. To this end, we define \textit{propensity features} $e^h\left(\matr X\right)$ as
\begin{align} 
e^h\left(\matr X\right) \triangleq \E \left[ h\left(\matr T\right) \mid \matr X \right ], \quad \text{such that} \quad
    m\left (\matr X\right) = \E \left [ Y \mid \matr X \right ] = g\left(\matr X\right)^{\top} e^h\left(\matr X\right). 
\end{align}
To obtain the generalized Robinson decomposition, one rewrites \eqr{eq:pipe_robinson} as 
\begin{align}
    \varepsilon &= Y - \left (  g\left(\matr X\right)^{\top} \left [h\left(\matr T\right) + \cancel{e^h\left(\matr X\right)}  - \cancel{e^h\left(\matr X\right)} \right ] \right ) \\
    &= Y - \left (   g\left(\matr X\right)^{\top} e^h\left(\matr X\right) + g\left(\matr X\right)^{\top} \left (h\left(\matr T\right) - e^h\left(\matr X\right)\right) \right ) \\
    &= Y -  \underbrace{ \left(  g\left(\matr X\right)^{\top} e^h\left(\matr X\right) \right )}_{m(\matr X)}  - g\left(\matr X\right)^{\top} \left (h\left(\matr T\right) - e^h\left(\matr X\right)\right). \label{eq:generalized_r_decomp}
\end{align}
Hence, the generalized Robinson decomposition is
\begin{align}
    Y -  m(\matr X)  = g\left(\matr X\right)^{\top} \left (h\left(\matr T\right) - e^h\left(\matr X\right)\right) + \varepsilon.
\end{align}

\section{Universality of Product Decomposition}
\label{sec:universality}

\textbf{Proof of Proposition \ref{prop:prod_decomp}.}

\begin{proof}

Define ${\mathcal{H}_{0}}_{\mathcal{X\times T}}$ as $\left\{f\left(\mathbf{x}, \mathbf{t}\right)=\sum_{i=1}^n \alpha_i k\left(\left(\mathbf{x}_i, \mathbf{t}_i\right), \left(\mathbf{x}, \mathbf{t}\right)\right) |n \in \mathbb{N}, \alpha_{i=1,\cdots,n} \in \mathbb{R} \right\}$.
By definition, the RKHS $\mathcal{H_{X\times T}}$ is the set of pointwise limits of Cauchy sequences $(f_n)_n \in {\mathcal{H}_{0}}_{\mathcal{X\times T}}$. By Lemma 41 of \cite{rkhs_notes}, the Cauchy sequences also converges in the $\mathcal{H_{X\times T}}$ norm.

For any $f \in \mathcal{H_{X\times T}}$, pick its Cauchy sequence $(f_n) in {\mathcal{H}_{0}}_{\mathcal{X\times T}}$. Since $\sum_{i=1}^\infty \alpha_i k\left(\left(\mathbf{x}_i, \mathbf{t}_i\right), \left(\mathbf{x}, \mathbf{t}\right)\right)$ converges in $\|\cdot\|_{\mathcal{H_{X\times T}}}$, for any $\tilde{\epsilon}$ there exist a $\tilde{d}$ such that let $f_{\tilde{d}} = \sum_{i=1}^{\tilde{d}} \alpha_i k\left(\left(\mathbf{x}_i, \mathbf{t}_i\right), \left(\mathbf{x}, \mathbf{t}\right)\right)$, then
\begin{align}
  \|f_{\tilde{d}} - f\|_{\mathcal{H_{X\times T}}} \leq \tilde{\epsilon}
\end{align}
Since for any RKHS with kernel $k$, the RKHS norm is always an upper bound on the $L_2$ norm up to scaling by a constant $C_k$, 
\begin{align}
\|f_{\tilde{d}}-f\|_{L_2(P_{\mathcal{(X\times T)}})} \leq C_k \|f_{\tilde{d}} - f\|_{\mathcal{H_{X\times T}}} \leq C_k \tilde{\epsilon}
\end{align}
Then for any $\epsilon$, we can choose $d \in \mathbb{N}$ such that $\|f_d - f\|_{L_2(P_{\mathcal{{X\times T}}})} \leq C_k \cdot \frac{\epsilon}{C_k} = \epsilon$.

It remains to show that $f_d$ can be written as $g^{\top}h$ as required. $\mathcal{H_{X\times T}}$ is isometrically isomorphic to $\mathcal{H_{X}} \times \mathcal{H_{T}}$; we can decompose $k$ into the product kernel 
\begin{align}
   k\left(\left(\mathbf{x}, \mathbf{t}\right), \left(\mathbf{x}', \mathbf{t}'\right)\right)=k_{\mathcal{X}}\left(\mathbf{x}, \mathbf{x}'\right)k_{\mathcal{T}}\left(\mathbf{t}, \mathbf{t}'\right).
\end{align} Thus $f_d\left(\mathbf{x}, \mathbf{t}\right) = \sum_{i=1}^d \alpha_i k_{\mathcal{X}}(\mathbf{x}, \mathbf{x}_i)k_{\mathcal{T}}(\mathbf{t}, \mathbf{t}_i) $. Set $g(\mathbf{x}) = \left(\alpha_1 k_{\mathcal{X}}\left(\mathbf{x}, \mathbf{x}_1\right),\cdots, \alpha_d k_{\mathcal{X}}\left(\mathbf{x}, \mathbf{x}_d\right)\right)^{\top}$, $h(\mathbf{t})=\left(k_{\mathcal{T}}\left(\mathbf{t}, \mathbf{t}_1\right), \cdots, k_{\mathcal{T}}\left(\mathbf{t}, \mathbf{t}_d\right)\right)^{\top}$, we obtain $f_d=g^{\top}.$
\end{proof}

\section{Experimental Details} \label{sec:exp_details}
\subsection{Simulations}
\paragraph{Baseline effect}
Similarly as in \cite{SCIGAN, curth2021nonparametric, nie2021vcnet}, for each run of the experiment, we randomly sample a vector $\vect u_{0} \sim \mathcal{U}(\mathbf{0},\mathbf{1})$, and set $\mathbf{v}_{0}=\mathbf{u}_{0} /\left\|\mathbf{u}_{0}\right\|$ where $\|\cdot\|$ is the Euclidean norm. We then model the baseline effect as
\begin{align}
    \mu_0\left(\vect x \right) = \mathbf{v}_{0}^{\top} \mathbf{x}.
\end{align}
\subsubsection{Small-World Networks}
\paragraph{Covariates} We uniformly sample $20$-dimensional multivariate covariates $\matr X \sim \mathcal{U}\left(\mathbf{-1}, \mathbf{1}\right)$. The in-sample dataset consists of $1$,$000$ units, and the out-sample one of $500$. For the treatment assignment, we square the covariates element-wise; i.e., we sample treatment assignments according to $p\left( \matr T \mid \vect x^2 \right)$. 

\paragraph{Graph interventions}
For each graph intervention, we uniformly sample a number of nodes between $10$ and $120$, number of neighbors for each node between $3$ and $8$, and the probability of rewiring each edge between $0.1$ and $1$ Then, we repeatedly generate Watts–Strogatz small-world graphs until we get a connected one. Each vertex has one feature, which is its degree centrality. We denote a graph's node connectivity as $\nu\left(\gG\right)$ and its average shortest path length as $l\left(\gG\right)$.

\paragraph{Outcomes}
Analogously as for the baseline effect, we generate two randomly sampled vectors $\vect v_{\nu}$ and $\vect v_{l}$. Then, given an assigned graph treatment $\gG$ and a covariate vector $\vect x$, we generate the outcome as \begin{equation}
    Y = 100 \mu_0\left(\vect x\right) + 0.2 \nu\left(\gG\right)^2 \cdot \mathbf{v}_{\nu}^{\top} \mathbf{x} + l\left(\gG\right) \cdot \mathbf{v}_{l}^{\top} \mathbf{x} + \epsilon, \quad \epsilon \sim \mathcal{N}\left(0,1\right).
\end{equation}

\subsubsection{TCGA}

\paragraph{Covariates} 
The \textit{The Cancer Genomic Atlas} (TCGA) simulation uses $4$,$000$-dimensional $9$,$659$ gene expression measurements of cancer patients for covariates \cite{weinstein2013cancer}, i.e., each unit is a covariate vector $\matr X \in \R^{4000}$. The in-sample and out-sample datasets consist of $5$,$000$ and $4$,$659$ units, respectively. In each run, the units are split randomly into in- and out-sample datasets. We used
the same version of the TCGA dataset as used by \citet{SCIGAN} and \citet{schwab2020doseresponse}. 

\paragraph{Graph interventions} In each run, we randomly sample $10$,$000$ molecules from the Quantum Machine 9 (QM9) dataset \cite{ramakrishnan2014quantum, gdb17} (with $133$k molecules in total). For each molecule, we create a relational graph, where each node corresponds to an atom and consist of $78$ atom features. An edge corresponds to the chemical bond type, where we label each edge correspondingly, considering \emph{single}, \emph{double}, \emph{triple} and \emph{aromatic} bonds. Furthermore, for each molecule, we obtain 8 of its properties \emph{mu, alpha, homo, lumo, gap, r2, zpve, u0}, which we collect in the vector $\vect z \in \R^8$.

\paragraph{Outcomes}
For each covariate vector $\vect x$, we compute its $8$-dimensional PCA components, denoted by $\vect x^{(\text{PCA})} \in \R^{8}$. Then, given the molecular properties of the assigned molecule treatment $\vect z$, we generate outcomes by

\begin{align}
    Y = 10 \mu_0\left(\vect x\right) + 0.01 \vect z^{\top} \vect x^{\left(\text{PCA}\right)} + \epsilon, \quad \epsilon \sim \mathcal{N}\left(0,1\right).
\end{align}

\subsection{Hyper-parameters}
To ensure a fair comparison between all models, we perform hyper-parameter optimization with random search for all models on held-out data and select the best hyper-parameters over 10 runs. While conceptually, choosing hyper-parameters based on predictive metrics may not necessarily lead to good CATE estimation performance, \citet{neal2021realcause} provide empirical evidence that doing so indeed often does in practice. 

Table~\ref{tab:hyp_sw} and Table~\ref{tab:hyp_tcga} include the hyper-parameter search ranges we set in the SW and TCGA experiments, respectively. Table~\ref{tab:hyp_sw_fixed} and Table~\ref{tab:hyp_tcga_fixed} include the fixed hyper-parameter values across all SW and TCGA experiments, respectively. We restricted the number of hyper-parameter optimization trials to $10$ in all experiments. We observed that all models' performances are rather insensitive to hyper-parameter values in the considered search ranges, i.e., the performances across trials have not varied much. The search ranges for the HSIC penalty $\lambda$ are taken from the experimental section of the GraphITE paper \cite{harada2020graphite}, where the authors also argue that their model's performance is insensitive to this weight. In consultation with \citet{harada2020graphite}, we use \citet{hsic_implementation}'s implementation of the normalized HSIC. We use early stopping for all models based on their training loss. We noticed that a patience value below 10 often leads to pre-convergence stopping with subsequent sub-optimal performance for all models but GIN. 

\subsubsection{SW}

\begin{table}[H]
    \centering
    \begin{tabular}{lc}
    \toprule
    \textbf{Hyper-parameter} & \textbf{Search range} \\
    \midrule
        Num. of layers for covariates representations & 2-4 \\
        Num. of layers for treatment representations & 3-6 \\ 
        Num. of layers for $\com{}^*$ & 3-6 \\
        Num. of layers for $\pf{}^*$ & 3-6 \\
        Num. of layer for final feed-forward network ${}^\dagger$ & 2-6 \\
        Dim. of hidden layers for covariates representations & 50-300\\
        Dim. of hidden layers for treatment representations & 50-300\\
        Dim. of hidden layers for $\com{}^*$ & 200-300\\
        Dim. of hidden layers for $\pf{}^*$ & 50-150\\
        Dim. of $\xfeat, \tfeat{}^*$ & 50-250\\
        Dim. of final covariates/treatment layer & 2-200\\
        Dim. of hidden layers for final feed-forward network & 50-300\\
        Num. update steps $K{}^*$ & 10-20\\
        Early stopping patience for $\com{}^*$ & \{5, 10\}\\
        Early stopping patience for $\xfeat, \tfeat, \pf{}^*$ & \{1, 5\} \\
        Learning rates $\lambda_{\xpa}, \lambda_{\tpa} {}^*$ & \{5e-4, 1e-3\} \\ 
        Learning rate ${}^\dagger$ & \{5e-4, 1e-3\} \\ 
        Dropout for $\com {}^*$ & \{0, 0.2\} \\
        Dropout for $\pf {}^*$ & \{0, 0.2 \} \\
        Weight of HSIC penalty $\lambda{}^{\ddagger}$ & \{0.001, 0.01, 1, 10, 100, 1000\}
        \\ \bottomrule
    \end{tabular}
    \caption{Hyper-parameter search ranges for SW experiments. ${}^*$ denotes hyper-parameter only applicable for GIN; ${}^\dagger$ applicable for all models but GIN, ${}^\ddagger$ applicable only for GraphITE.}
    \label{tab:hyp_sw}
\end{table}

\begin{table}[H]
    \centering
    \begin{tabular}{lc}
    \toprule
    \textbf{Hyper-parameter} & \textbf{Value} \\
    \midrule
        Optimizer & Adam \cite{adam} \\
        Batch size & 500 \\ 
        Weight decay (all optims.) & 0 \\
        $\lambda_{\mpa}, \lambda_{\epa}$ & 1e-3 \\
        Early stopping patience ${}^\dagger$ & 10\\
        GNN Batch Norm & True \\
        MLP Batch Norm (all MLPs) & False \\
        Activation functions (all layers) & ReLU \\
        Validation set size (in \%) & 20\% \\
        \bottomrule
    \end{tabular}
    \caption{Fixed hyper-parameter values across all SW experiments. ${}^*$ denotes hyper-parameter only applicable for GIN; ${}^\dagger$ applicable for all models but GIN, ${}^\ddagger$ applicable only for GraphITE.}
    \label{tab:hyp_sw_fixed}
\end{table}

\newpage
\subsubsection{TCGA}
\begin{table}[ht]
    \centering
    \begin{tabular}{lc}
    \toprule
    \textbf{Hyper-parameter} & \textbf{Search range} \\
    \midrule
        Num. of layers for covariates representations & 2-5 \\
        Num. of layers for treatment representations & 3-6 \\ 
        Num. of layers for $\com{}^*$ & 2-4 \\
        Num. of layers for $\pf{}^*$ & 1-6 \\
        Num. of layer for final feed-forward network ${}^\dagger$ & 1-5 \\
        Dim. of hidden layers for covariates representations & 100-400\\
        Dim. of hidden layers for treatment representations & 100-400\\
        Dim. of hidden layers for $\com{}^*$ & 100-300\\
        Dim. of hidden layers for $\pf{}^*$ & 10-50\\
        Dim. of $\xfeat, \tfeat{}^*$ & 200-600 \\
        Dim. of final covariates/treatment layer & 2-800\\
        Dim. of hidden layers for final feed-forward network & 100-400\\
        Num. update steps $K{}^*$ & 10-20\\
        Early stopping patience for $\xfeat, \tfeat, \pf{}^*$ & \{5, 10\} \\
        Learning rates $\lambda_{\xpa}, \lambda_{\tpa} {}^*$ & \{5e-4, 1e-3\} \\ 
        Learning rate ${}^\dagger$ & \{5e-4, 1e-3\} \\ 
        Weight of HSIC penalty $\lambda{}^{\ddagger}$ & \{0.001, 0.01, 1, 10, 100, 1000\}
        \\ \bottomrule
    \end{tabular}
    \caption{Hyper-parameter search ranges for TCGA experiments. ${}^*$ denotes hyper-parameter only applicable for GIN; ${}^\dagger$ applicable for all models but GIN, ${}^\ddagger$ applicable only for GraphITE.}
    \label{tab:hyp_tcga}
\end{table}

\begin{table}[ht]
    \centering
    \begin{tabular}{lc}
    \toprule
    \textbf{Hyper-parameter} & \textbf{Value} \\
    \midrule
        Optimizer & Adam \cite{adam}\\
        Batch size & 1000 \\ 
        Weight decay (all optims.) & 0 \\
        $\lambda_{\mpa}, \lambda_{\epa}$ & 1e-3 \\
        Early stopping patience ${}^\dagger$ & 10\\
        GNN Batch Norm & True \\
        MLP Batch Norm (all MLPs) & False \\
        Activation functions (all layers) & ReLU \\
        Validation set size (in \%) & 20\% \\
        \bottomrule
    \end{tabular}
    \caption{Fixed hyper-parameter values across all TCGA experiments. ${}^*$ denotes hyper-parameter only applicable for GIN; ${}^\dagger$ applicable for all models but GIN, ${}^\ddagger$ applicable only for GraphITE.}
    \label{tab:hyp_tcga_fixed}
\end{table}

\subsubsection{Hardware details}
All experiments were run on Microsoft Azure Virtual Machines with 12 Intel Xeon E5-2690 v4 CPUs and 2 NVIDIA Tesla K80 GPUs. No single trial took longer than $\sim30$ minutes to run. 
\newpage
\section{Additional Results}
\subsection{Comparison of Performances on different $K$ Treatments}\label{app:add_results_comparison}
We present additional WPEHE@$K$ results for the experiments in Section~\ref{sec:exp_performance} with varying $K$.

\begin{longtable}{lcccccc}
    \toprule
    \multirow{2}{*}{\bfseries Method} &
      \multicolumn{2}{c}{\bfseries SW} &
    \multicolumn{2}{c}{\bfseries TCGA} \\ 
      & \text{In-sample} & \text{Out-sample} & \text{In-sample}&  \text{Out-sample}  \\
    \midrule
    \textbf{WPEHE@2} & & & & \\
    \midrule
    Zero & $52.17 \pm 7.37$ & $41.36 \pm 5.04$ & $25.17 \pm 8.12$ & $17.33 \pm 5.41$\\
    CAT & $44.63 \pm 8.18 $ & $37.65 \pm 5.90 $& $160.35 \pm 58.56$ & $149.75 \pm 46.86$\\
    GNN & $32.98 \pm 6.63$ & $26.47 \pm 3.87 $ &$ 29.35 \pm 8.90$ & $27.17 \pm 8.67$\\
    GraphITE & $30.18 \pm 6.45$ & $25.39 \pm 4.04$ &$ 28.60 \pm 9.44$ & $27.37 \pm 9.87$\\
    GIN & $\bm{18.00} \pm \bm{3.83}$ & $\bm{15.30} \pm \bm{2.60}$  &$ \bm{10.44} \pm\bm{3.62}$ & $\bm{7.76} \pm \bm{1.56}$\\
    \midrule
    \textbf{WPEHE@3} & & & & \\
    \midrule
    Zero & $51.61 \pm 7.24$ & $41.53 \pm 4.96$ & $25.97 \pm 7.96$ & $17.50 \pm 5.11$\\
    CAT & $44.87 \pm 7.53 $ & $37.59 \pm 5.46 $& $159.48 \pm 56.46$ & $148.80 \pm 44.87$\\
    GNN & $32.97 \pm 5.75$ & $26.60 \pm 3.70 $ &$ 30.22 \pm 8.77$ & $27.29 \pm 8.30$\\
    GraphITE & $30.39 \pm 5.89$ & $25.70 \pm 3.70$ &$ 29.71 \pm 9.43$ & $27.27 \pm 9.38$\\
    GIN & $\bm{19.79} \pm \bm{4.06}$ & $\bm{15.54} \pm \bm{2.56}$  &$ \bm{10.62} \pm\bm{3.56}$ & $\bm{7.94} \pm \bm{1.51}$\\
    \midrule
    \textbf{WPEHE@4} & & & & \\
    \midrule
    Zero & $52.92 \pm 7.47$ & $47.93 \pm 6.68$ & $26.35 \pm 7.79$ & $17.76 \pm 5.05$\\
    CAT & $46.95 \pm 7.65 $ & $42.47 \pm 6.91 $& $158.02 \pm 54.76$ & $148.08 \pm 43.71$\\
    GNN & $33.89 \pm 5.73$ & $31.51 \pm 5.27 $ &$ 30.51 \pm 8.57$ & $27.53 \pm 8.23$\\
    GraphITE & $31.43 \pm 5.75$ & $30.39 \pm 5.71$ &$ 30.07 \pm 9.22$ & $27.48 \pm 9.28$\\
    GIN & $\bm{20.78} \pm \bm{4.11}$ & $\bm{19.50} \pm \bm{4.12}$  &$ \bm{10.76} \pm\bm{3.51}$ & $\bm{8.08} \pm \bm{1.51}$\\
    \midrule
    \textbf{WPEHE@5} & & & & \\
    \midrule
    Zero & $55.02 \pm 8.00$ & $50.75 \pm 7.92$ & $26.53 \pm 7.66$ & $17.91 \pm 4.96$\\
    CAT & $49.78 \pm 8.37 $ & $46.65 \pm 8.86 $& $156.77 \pm 53.58$ & $147.20 \pm 42.86$\\
    GNN & $36.06 \pm 6.69$ & $34.16 \pm 6.41 $ &$ 30.61 \pm 8.41$ & $27.61 \pm 8.10$\\
    GraphITE & $33.69 \pm 6.56$ & $33.13 \pm 6.92$ &$ 30.22 \pm 9.08$ & $27.53 \pm 9.12$\\
    GIN & $\bm{22.06} \pm \bm{4.40}$ & $\bm{21.19} \pm \bm{4.80}$  &$ \bm{10.90} \pm\bm{3.47}$ & $\bm{8.13} \pm \bm{1.49}$\\
    \midrule
    \textbf{WPEHE@6} & & & & \\
    \midrule
    Zero & $56.26 \pm 8.12$ & $53.77 \pm 8.93$ & $26.63 \pm 7.55$ & $17.94 \pm 4.86$\\
    CAT & $51.75 \pm 8.85 $ & $49.76 \pm 9.73 $& $155.88 \pm 52.82$ & $146.62 \pm 42.32$\\
    GNN & $37.10 \pm 6.84$ & $36.74 \pm 7.42 $ &$ 30.67 \pm 8.29$ & $27.57 \pm 7.95$\\
    GraphITE & $34.81 \pm 6.70$ & $35.94 \pm 8.07$ &$ 30.31 \pm 8.96$ & $27.48 \pm 8.95$\\
    GIN & $\bm{23.00} \pm \bm{4.56}$ & $\bm{23.19} \pm \bm{5.56}$  &$ \bm{10.98} \pm\bm{3.45}$ & $\bm{8.15} \pm \bm{1.46}$\\
    \midrule
    \textbf{WPEHE@7} & & & & \\
    \midrule
    Zero & $58.16 \pm 8.38$ & $55.73 \pm 9.01$ & $26.66 \pm 7.48$ & $17.97 \pm 4.81$\\
    CAT & $54.62 \pm 9.27 $ & $52.21 \pm 9.74 $& $155.24 \pm 52.25$ & $146.15 \pm 41.90$\\
    GNN & $39.21 \pm 7.05$ & $38.51 \pm 7.50 $ &$ 30.67 \pm 8.21$ & $27.56 \pm 7.86$\\
    GraphITE & $37.00 \pm 7.10$ & $37.34 \pm 8.05$ &$ 30.33 \pm 8.88$ & $27.47 \pm 8.86$\\
    GIN & $\bm{24.71} \pm \bm{5.07}$ & $\bm{24.46} \pm \bm{5.79}$  &$ \bm{11.02} \pm\bm{3.43}$ & $\bm{8.17} \pm \bm{1.45}$\\
    \midrule
    \textbf{WPEHE@8} & & & & \\
    \midrule
    Zero & $59.57 \pm 8.74$ & $56.61 \pm 8.94$ & $26.73 \pm 7.43$ & $18.03 \pm 4.76$\\
    CAT & $56.24 \pm 9.71 $ & $53.33 \pm 9.71 $& $154.86 \pm 51.85$ & $145.94 \pm 41.61$\\
    GNN & $40.44 \pm 7.36$ & $39.04 \pm 7.33 $ &$ 30.72 \pm 8.16$ & $27.49 \pm 8.78$\\
    GraphITE & $38.42 \pm 7.46$ & $38.06 \pm 7.89$ &$ 30.39 \pm 8.82$ & $27.49 \pm 8.78$\\
    GIN & $\bm{25.90} \pm \bm{5.51}$ & $\bm{25.63} \pm \bm{6.03}$  &$ \bm{11.10} \pm\bm{3.43}$ & $\bm{8.20} \pm \bm{1.44}$\\
    \midrule
    \textbf{WPEHE@9} & & & & \\
    \midrule
    Zero & $60.39 \pm 8.94$ & $55.72 \pm 8.44$ & $26.75 \pm 7.40$ & $18.06 \pm 4.73$\\
    CAT & $57.78 \pm 10.27 $ & $53.06 \pm 9.36 $& $154.60 \pm 51.57$ & $145.73 \pm 41.37$\\
    GNN & $41.45 \pm 7.60$ & $38.47 \pm 6.92 $ &$ 30.72 \pm 8.11$ & $27.60 \pm 7.74$\\
    GraphITE & $39.43 \pm 7.69$ & $37.43 \pm 7.48$ &$ 30.39 \pm 8.78$ & $27.50 \pm 8.72$\\
    GIN & $\bm{26.76} \pm \bm{5.80}$ & $\bm{25.30} \pm \bm{5.75}$  &$ \bm{11.12} \pm\bm{3.42}$ & $\bm{8.22} \pm \bm{1.43}$\\
    \midrule
    \textbf{WPEHE@10} & & & & \\
    \midrule
    Zero & $60.92 \pm 9.10$ & $56.44 \pm 8.91$ & $26.78 \pm 7.35$ & $18.09 \pm 4.71$\\
    CAT & $58.32 \pm 10.29 $ & $54.76 \pm 10.56 $& $154.39 \pm 51.32$ & $145.57 \pm 41.21$\\
    GNN & $42.08 \pm 7.82$ & $39.11 \pm 7.24 $ &$ 30.73 \pm 8.07$ & $27.61 \pm 7.70$\\
    GraphITE & $40.26 \pm 7.94$ & $37.99 \pm 7.80$ &$ 30.41 \pm 8.74$ & $27.51 \pm 8.69$\\
    GIN & $\bm{27.47} \pm \bm{6.07}$ & $\bm{26.01} \pm \bm{6.06}$  &$ \bm{11.13} \pm\bm{3.41}$ & $\bm{8.23} \pm \bm{1.43}$\\
    \bottomrule
\caption{Error of CATE estimation for all methods, measured by WPEHE@$1-10$. Results are averaged over 10 trials, $\pm$ denotes std. error.}
  \label{tab:results_1}
  \end{longtable}%

\subsection{Comparison of Robustness to different Bias Strengths $\kappa$}
We present additional WPEHE@$K$ results for the experiments in Section~\ref{sec:exp_bias} over increasing bias strength $\kappa$ and varying $K$.  
\begin{figure}[ht]

    \begin{subfigure}[b]{.24\textwidth}
    \centering
    \textbf{SW In-Sample} 
    \includegraphics[width=\textwidth]{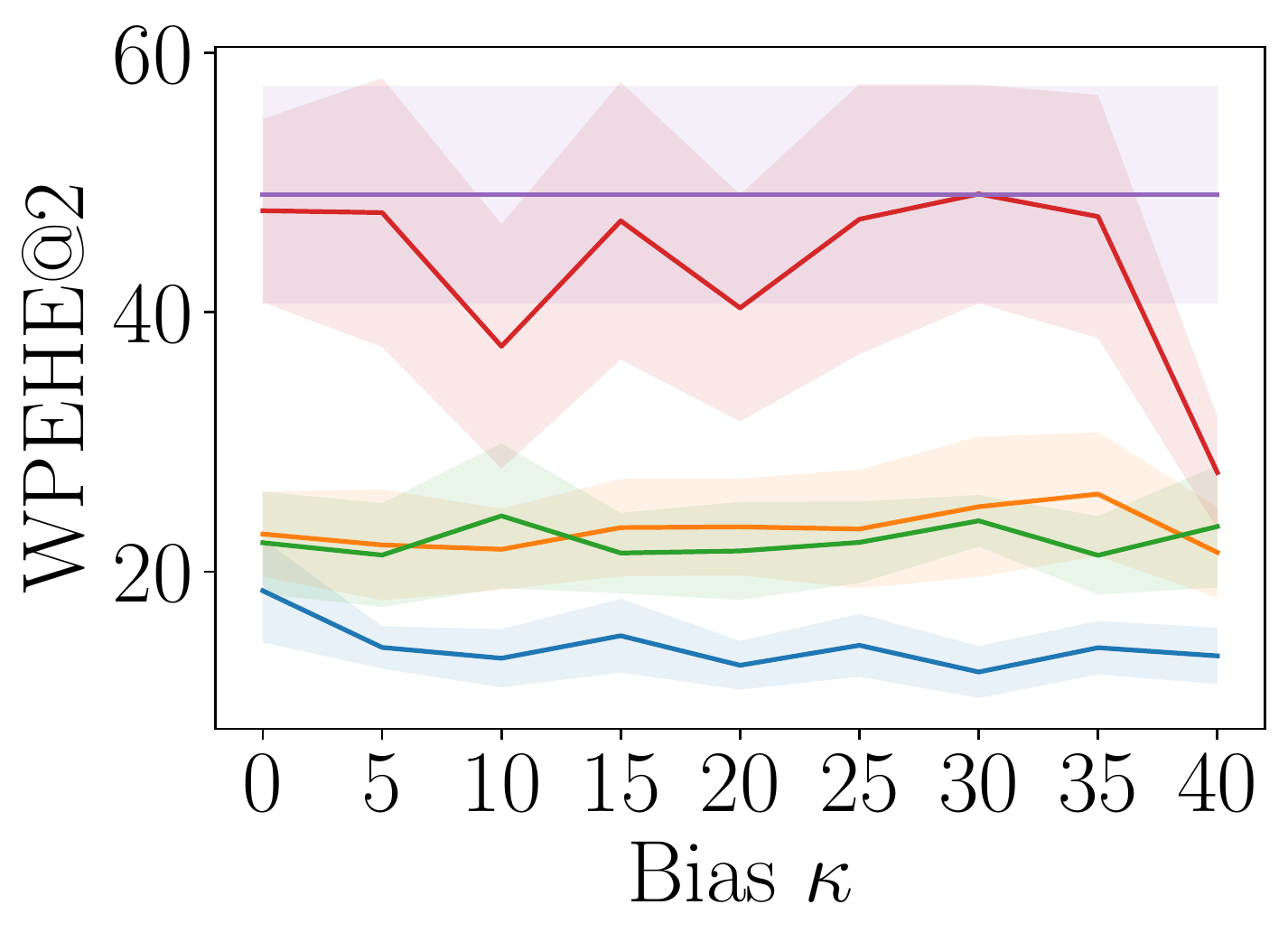}
    \end{subfigure}
    \begin{subfigure}[b]{.24\textwidth}
    \centering
    \textbf{SW Out-Sample} 
    \includegraphics[width=\textwidth]{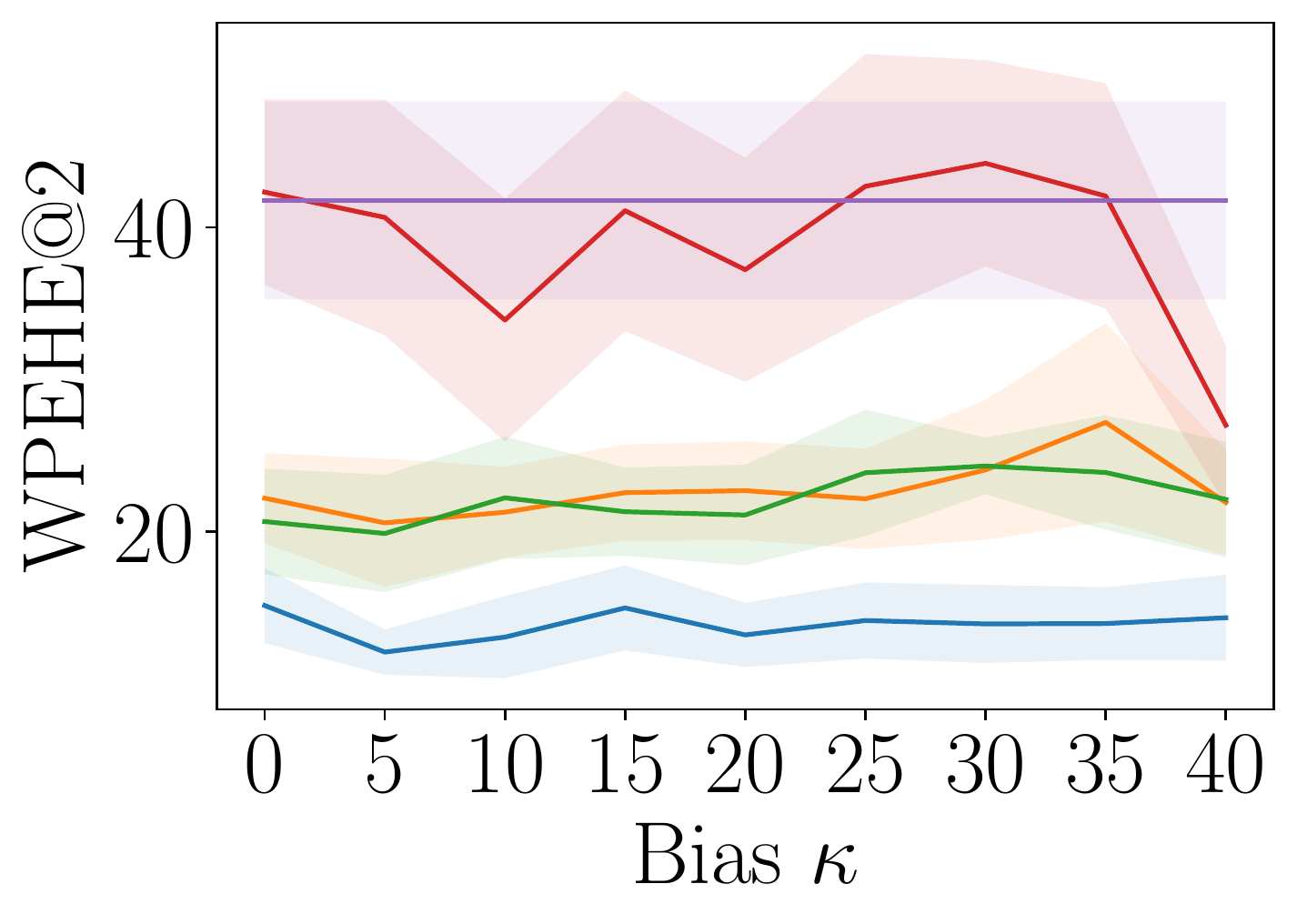}
    \end{subfigure}
    \begin{subfigure}[b]{.24\textwidth}
    \centering
    \textbf{TCGA In-Sample} 
    \includegraphics[width=\textwidth]{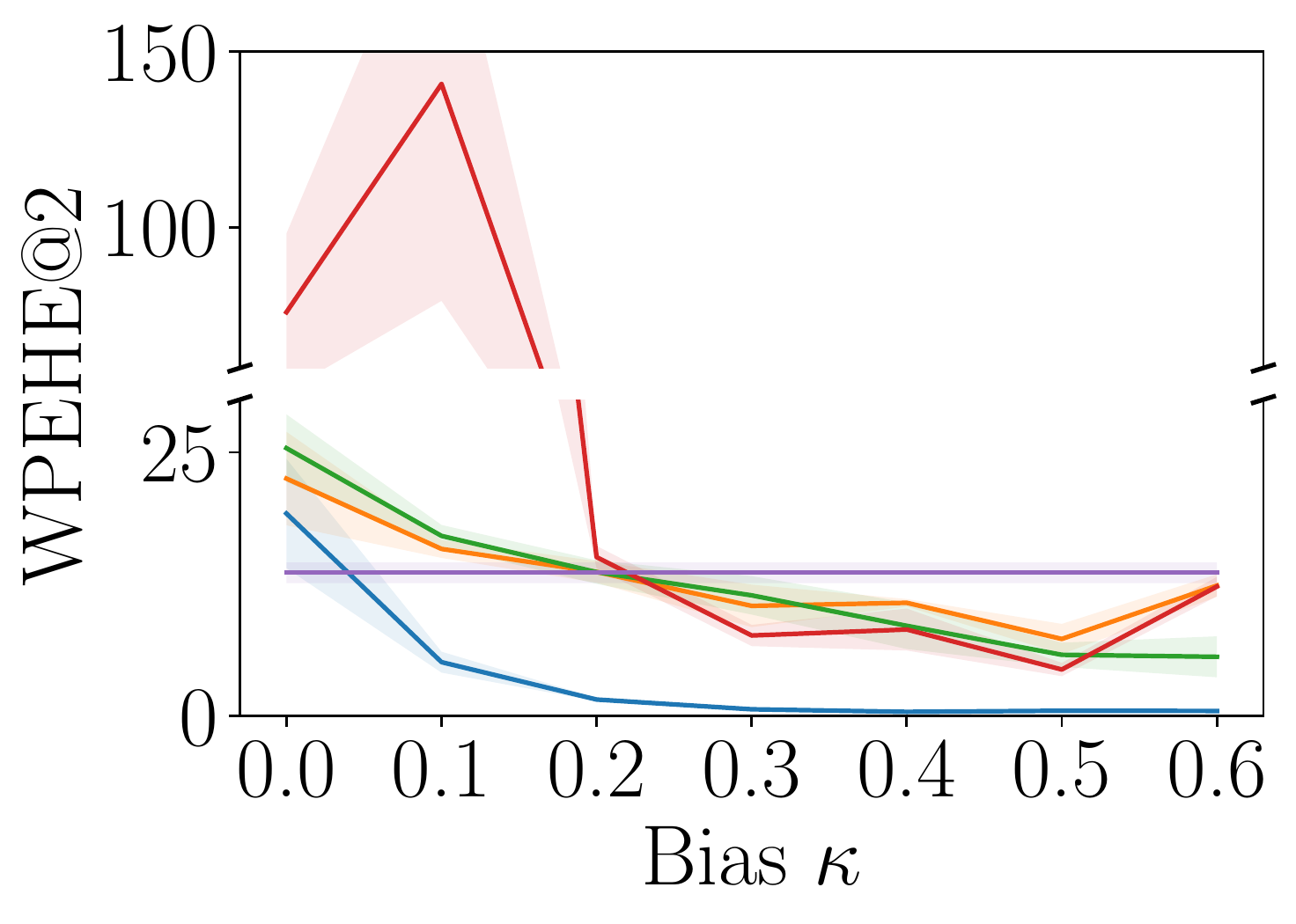}
    \end{subfigure}
    \begin{subfigure}[b]{.24\textwidth}
    \centering
    \textbf{TCGA Out-Sample} 
    \includegraphics[width=\textwidth]{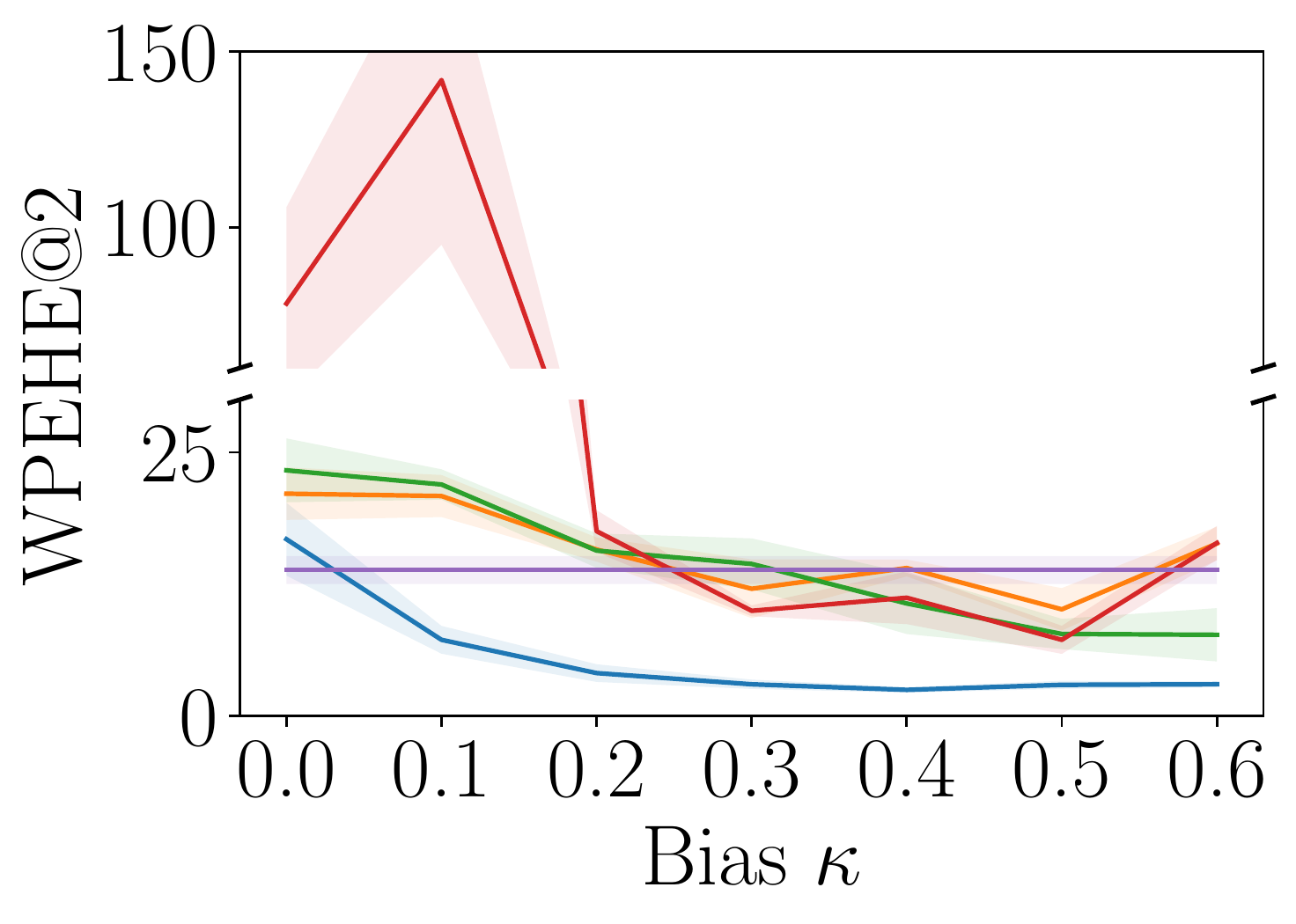}
    \end{subfigure}

    \begin{subfigure}[b]{.24\textwidth}
    \includegraphics[width=\textwidth]{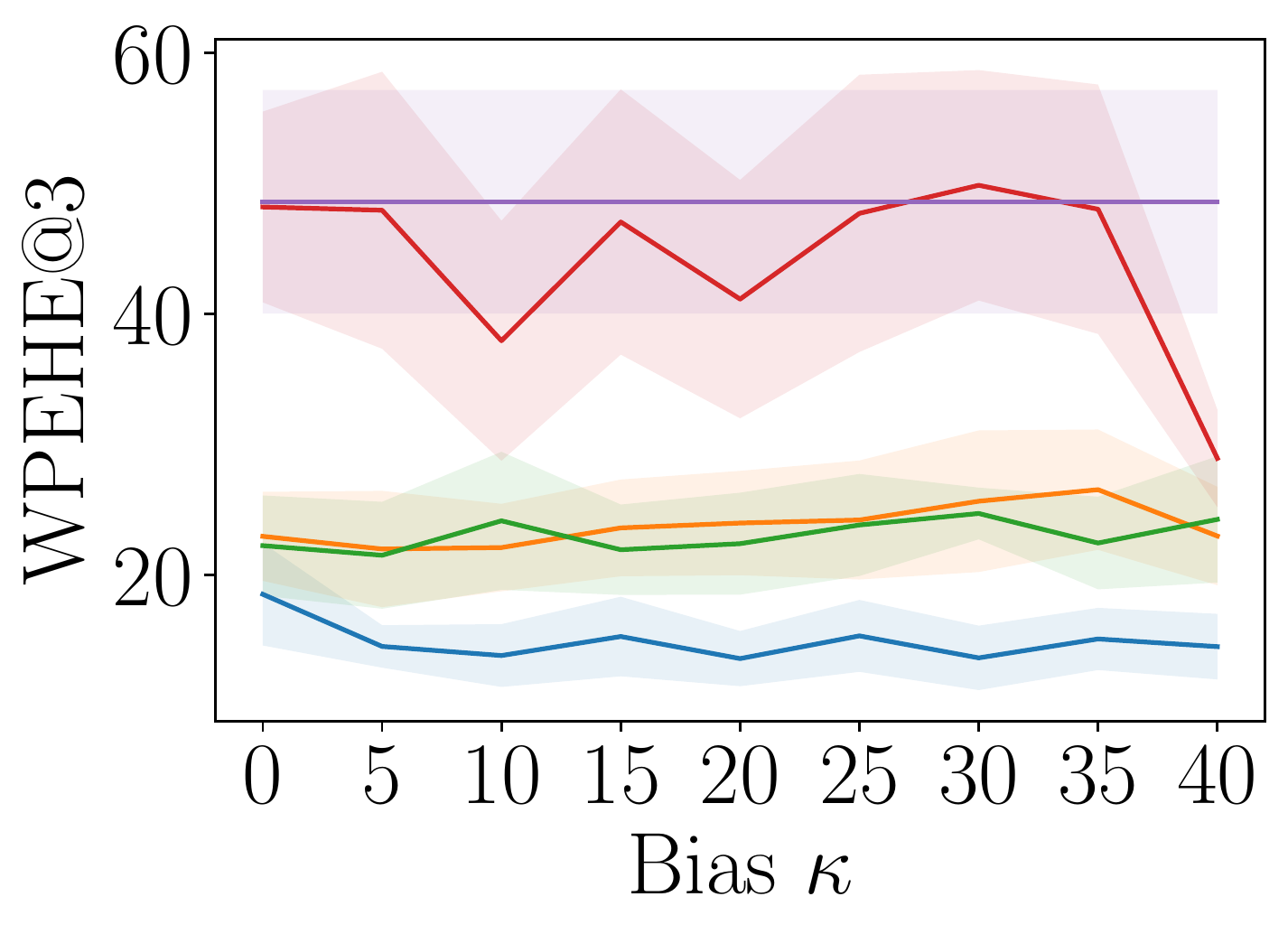}
    \end{subfigure}
    \begin{subfigure}[b]{.24\textwidth}
    \includegraphics[width=\textwidth]{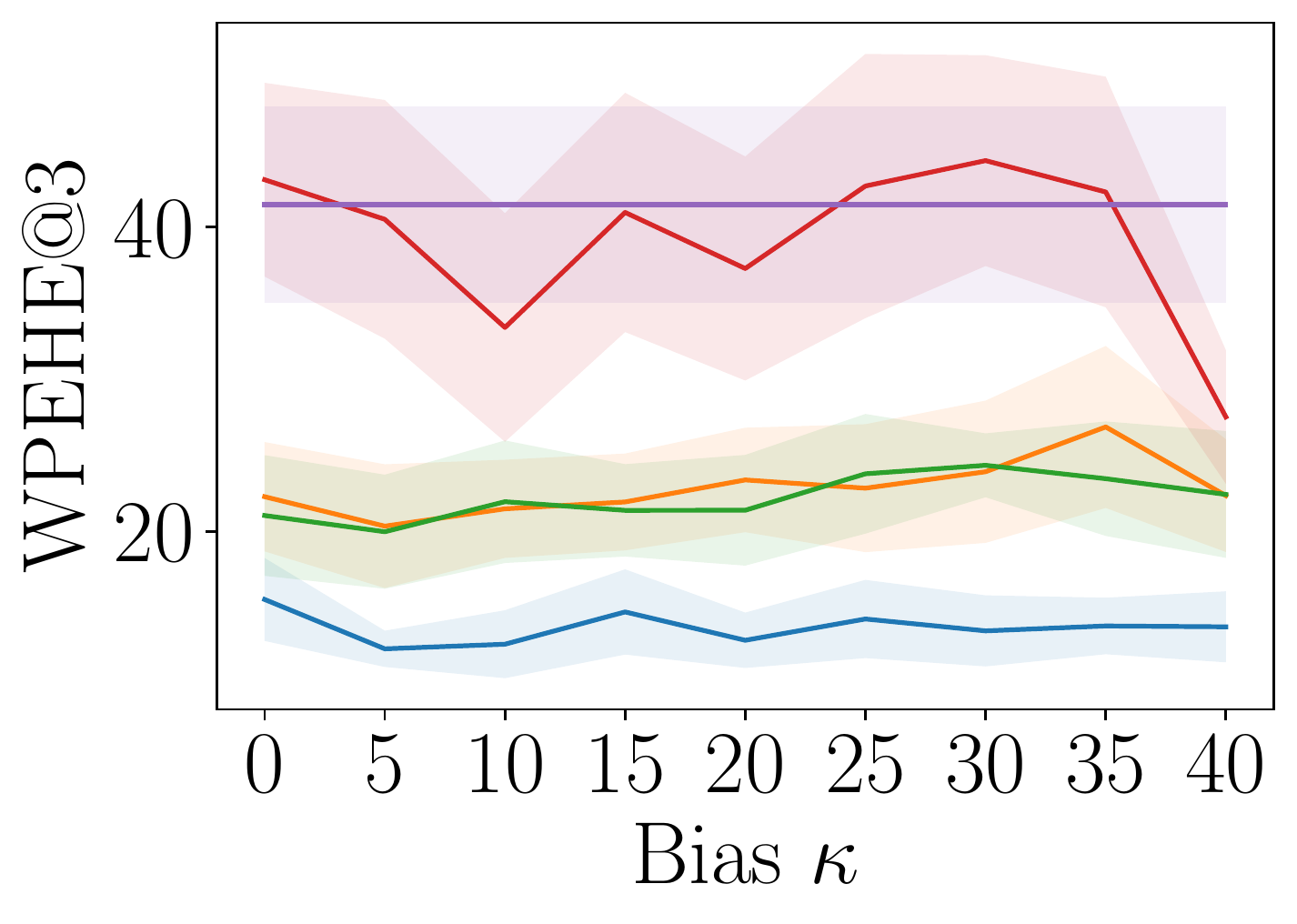}
    \end{subfigure}\begin{subfigure}[b]{.24\textwidth}
    \includegraphics[width=\textwidth]{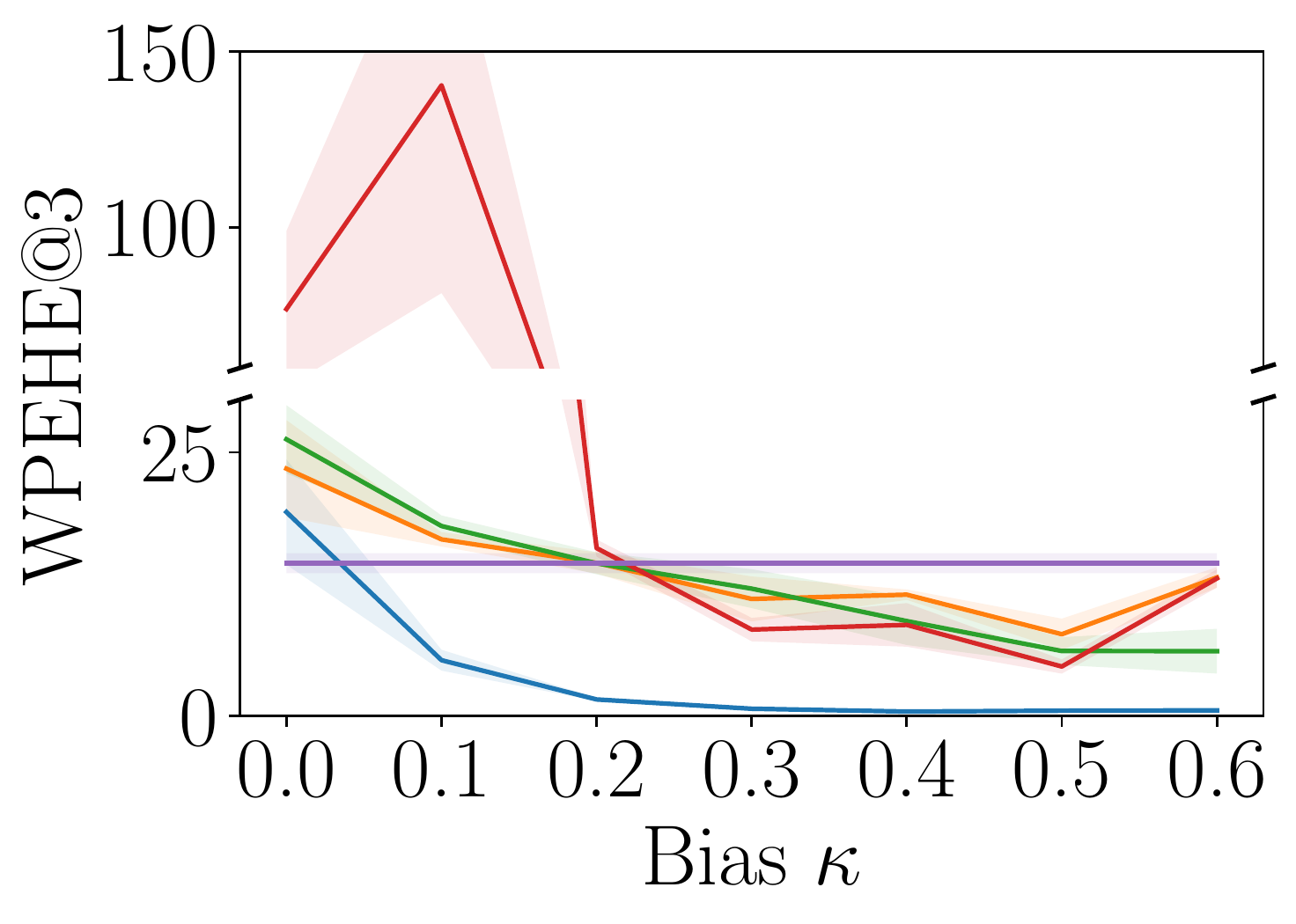}
    \end{subfigure}
    \begin{subfigure}[b]{.24\textwidth}
    \includegraphics[width=\textwidth]{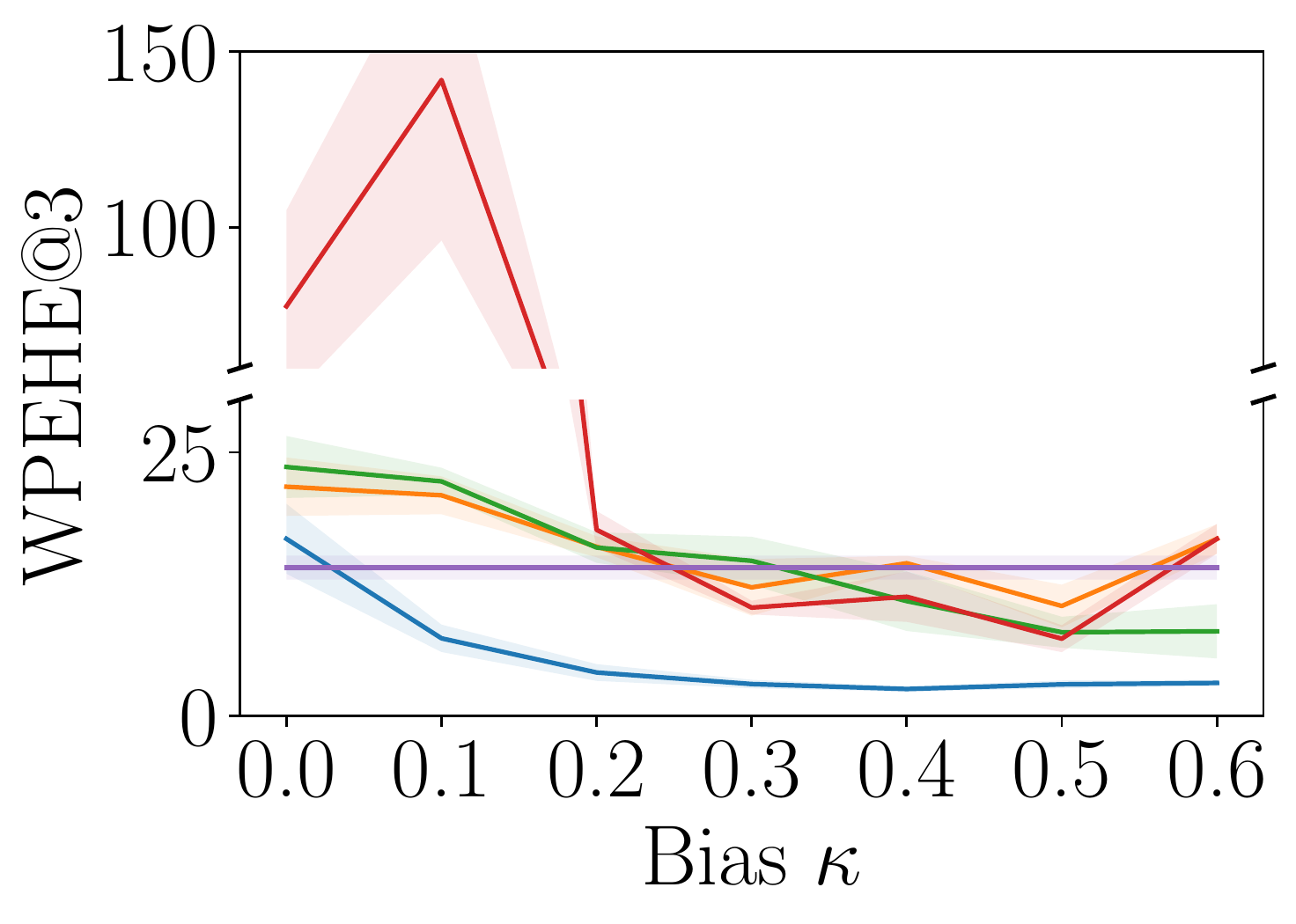}
    \end{subfigure}

    \begin{subfigure}[b]{.24\textwidth}
    \includegraphics[width=\textwidth]{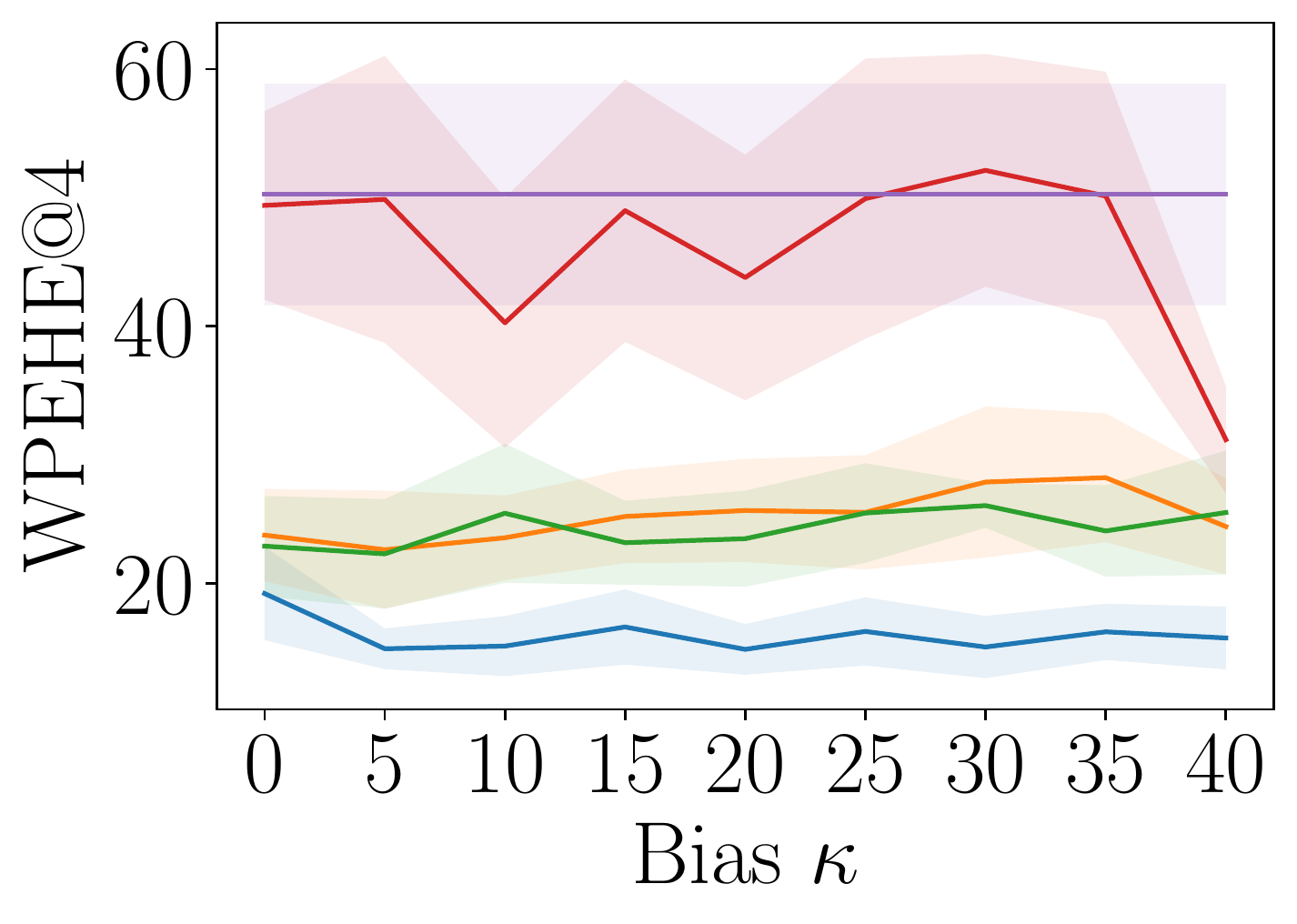}
    \end{subfigure}
    \begin{subfigure}[b]{.24\textwidth}
    \includegraphics[width=\textwidth]{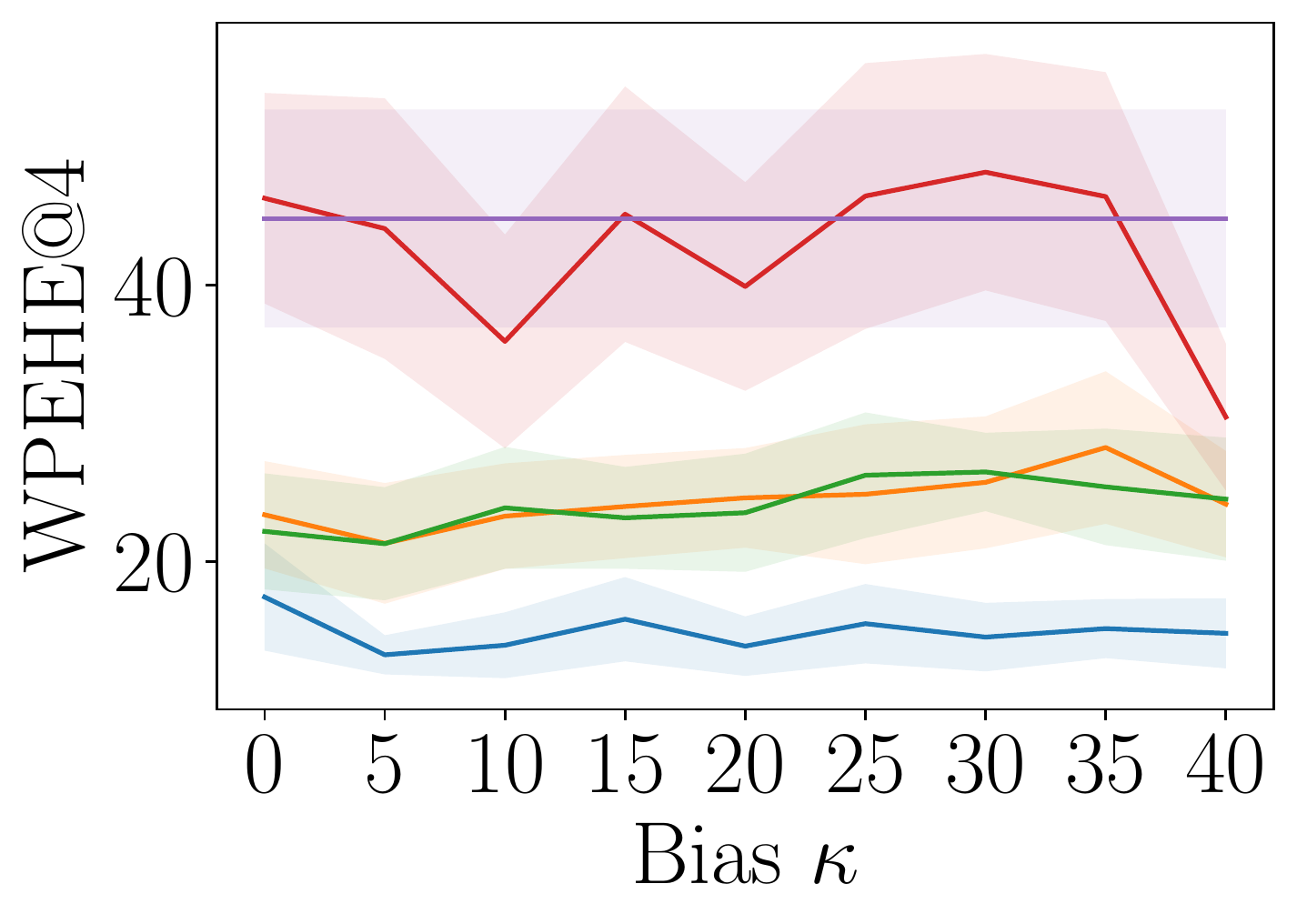}
    \end{subfigure}
    \begin{subfigure}[b]{.24\textwidth}
    \includegraphics[width=\textwidth]{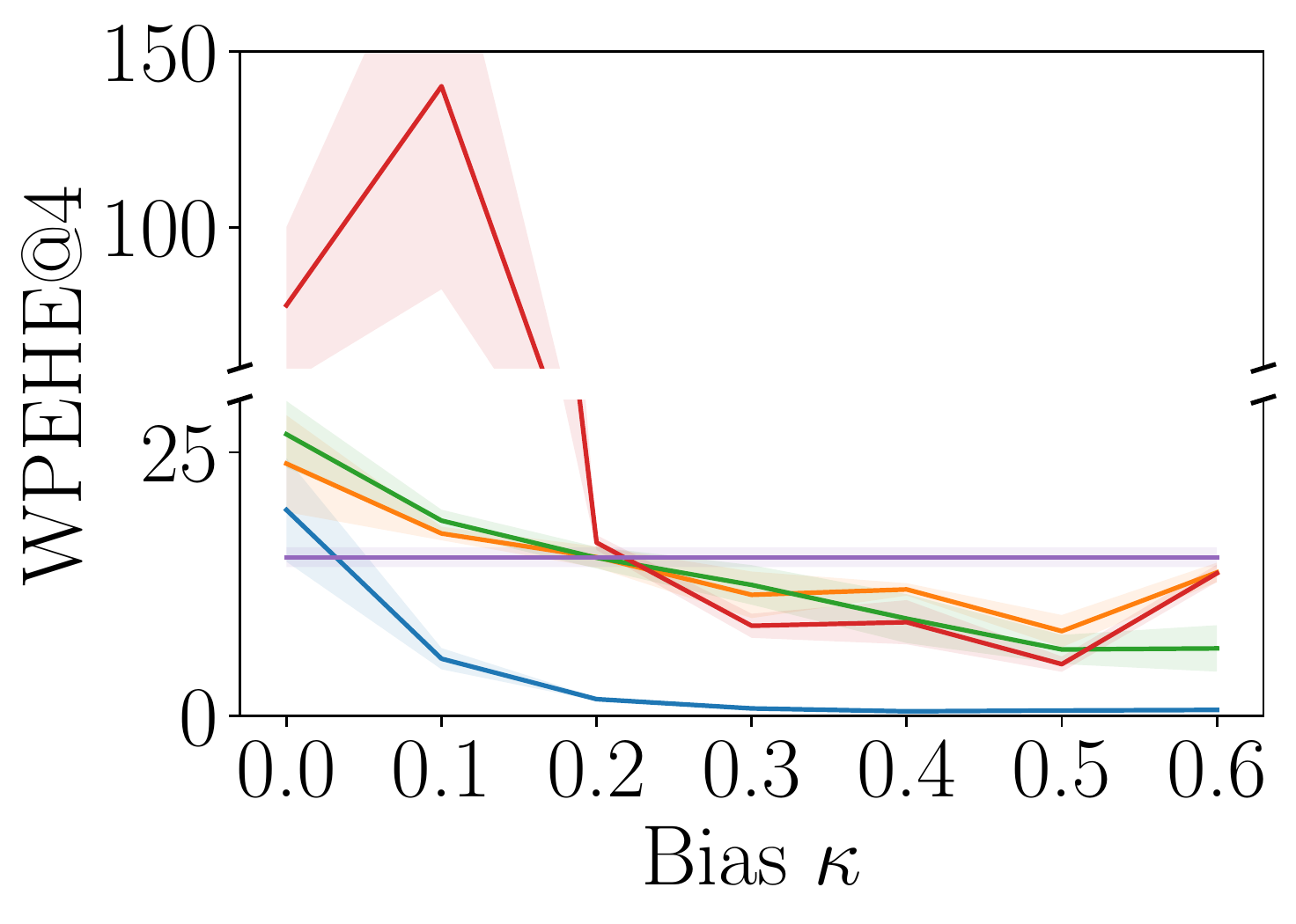}
    \end{subfigure}
    \begin{subfigure}[b]{.24\textwidth}
    \includegraphics[width=\textwidth]{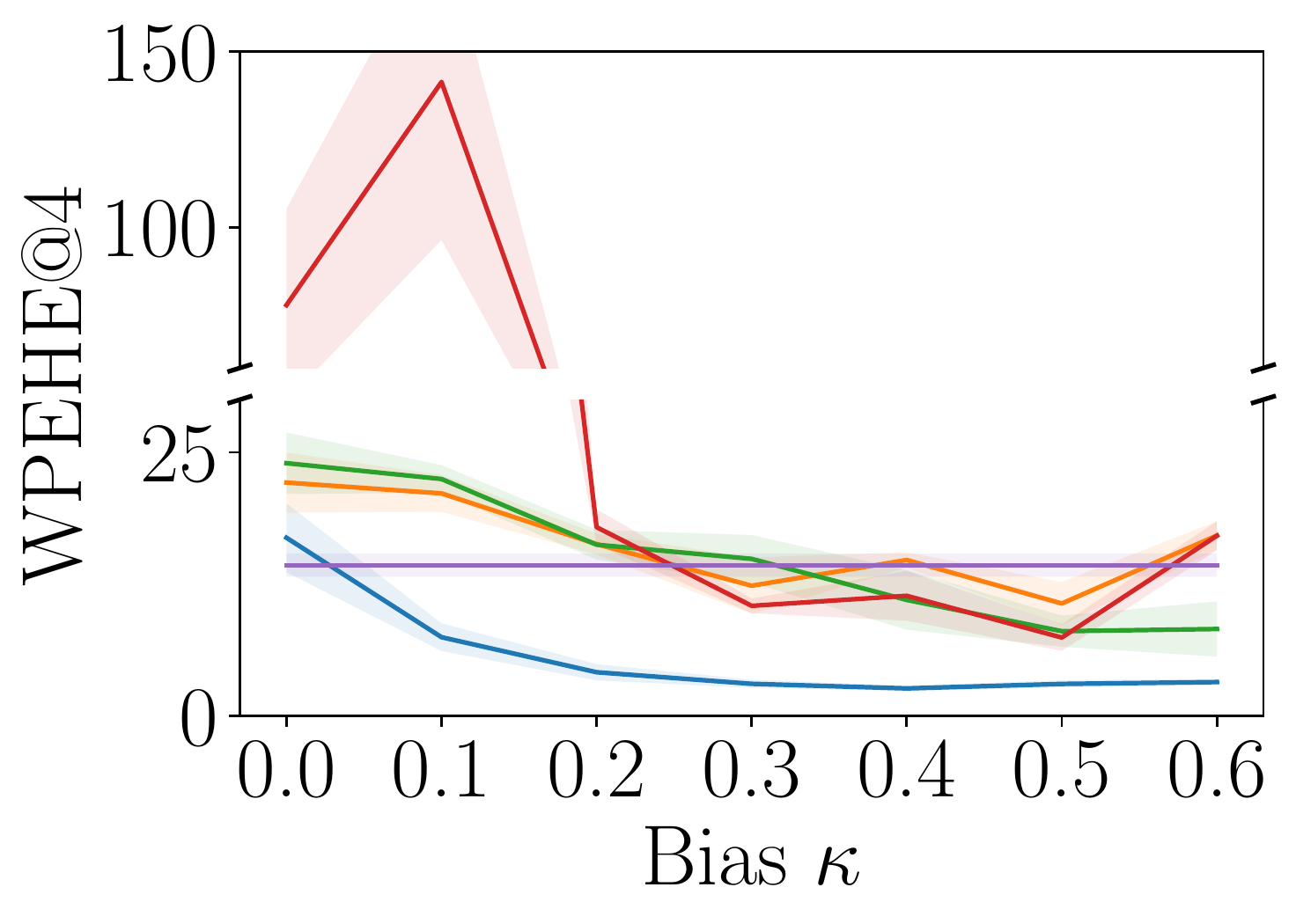}
    \end{subfigure}

    \begin{subfigure}[b]{.24\textwidth}
    \includegraphics[width=\textwidth]{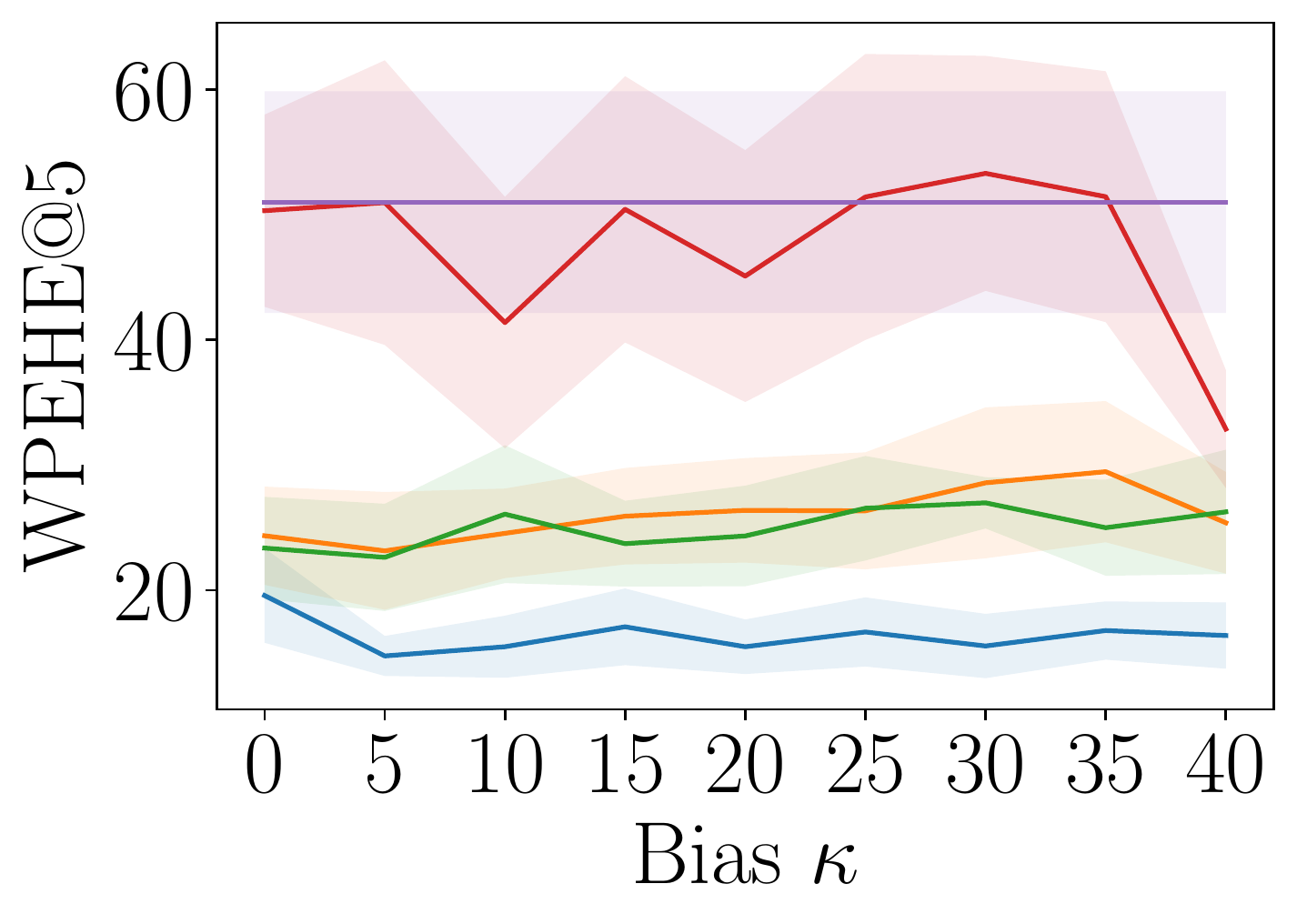}
    \end{subfigure}
    \begin{subfigure}[b]{.24\textwidth}
    \includegraphics[width=\textwidth]{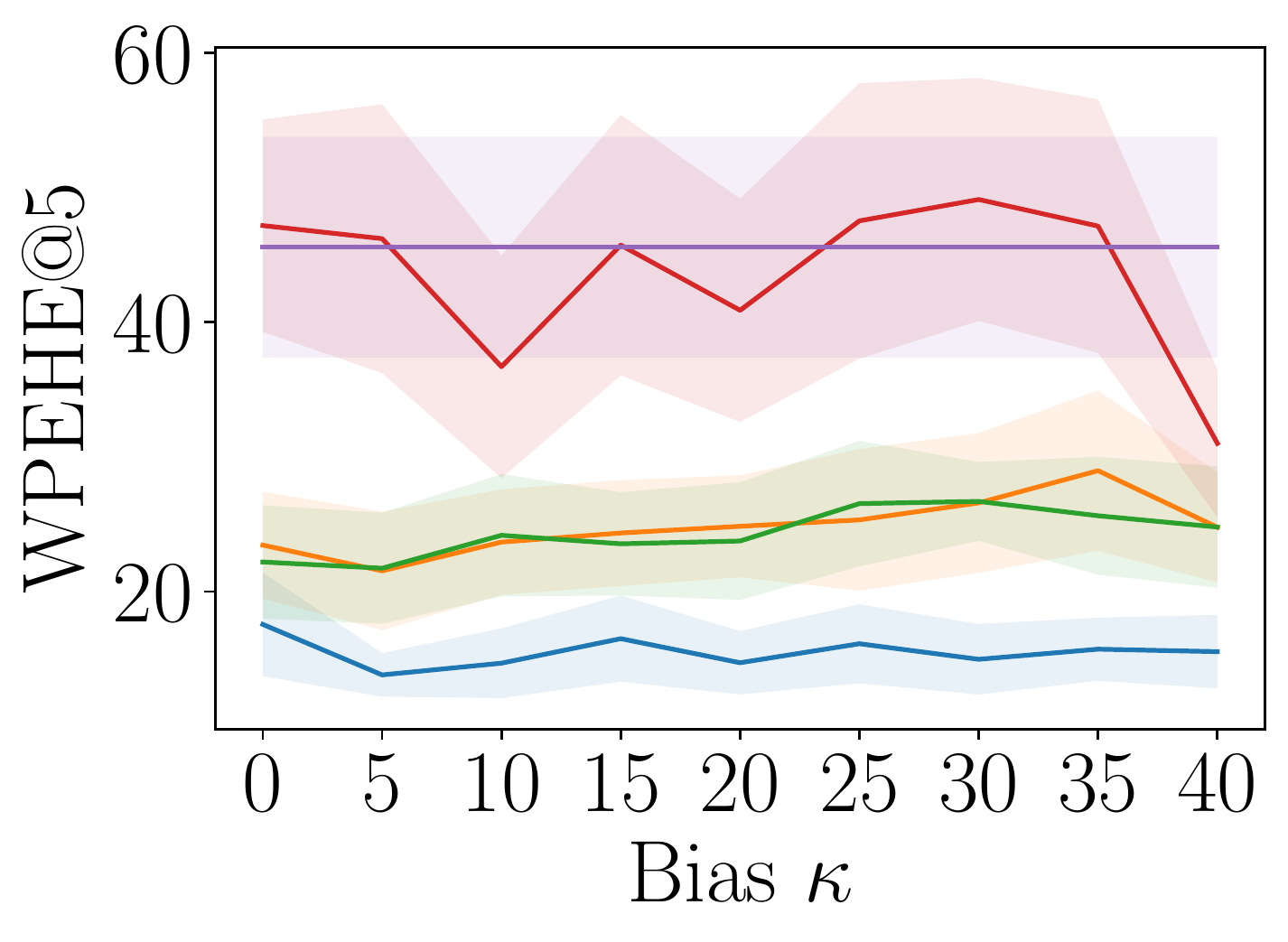}
    \end{subfigure}
    \begin{subfigure}[b]{.24\textwidth}
    \includegraphics[width=\textwidth]{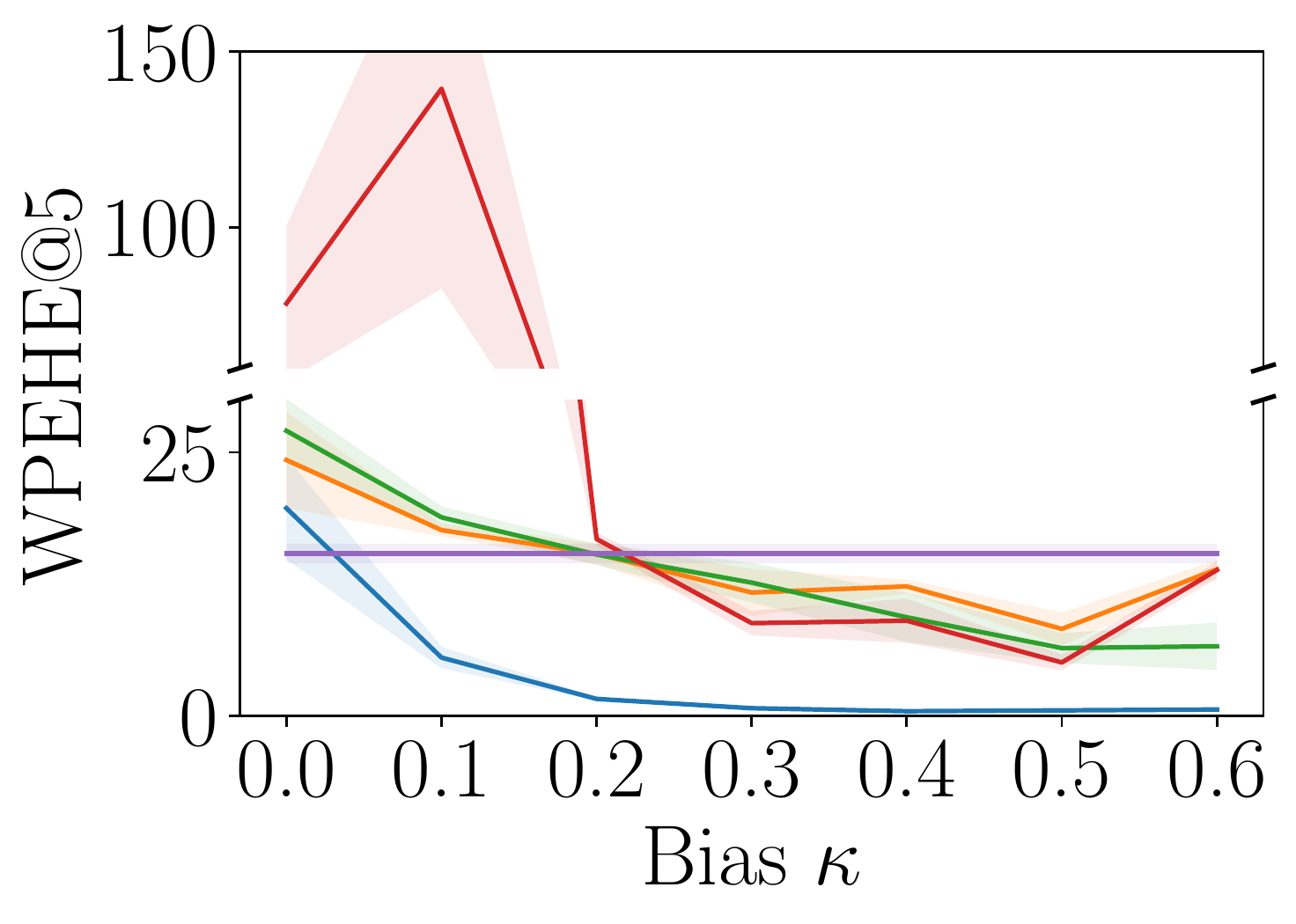}
    \end{subfigure}
    \begin{subfigure}[b]{.24\textwidth}
    \includegraphics[width=\textwidth]{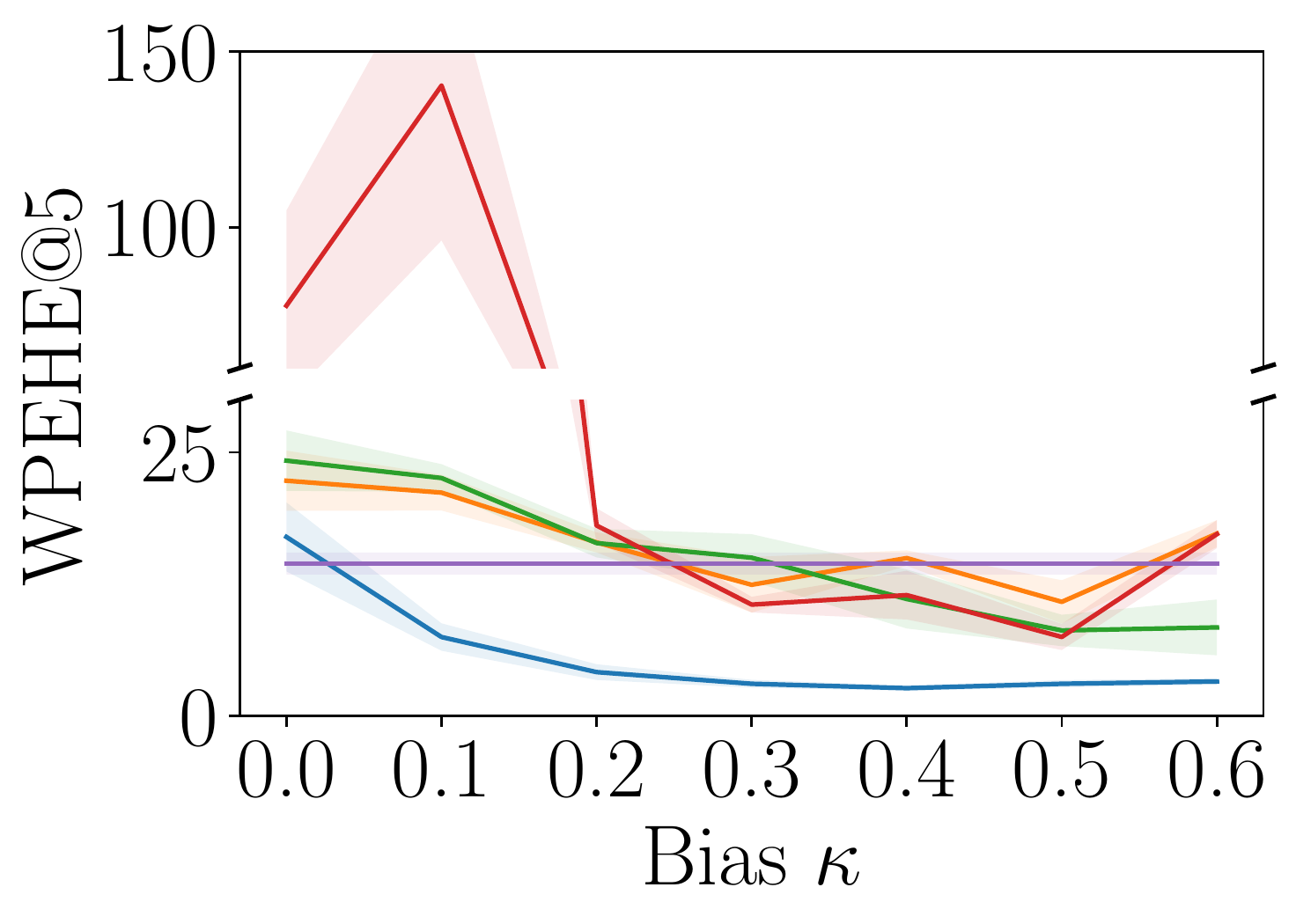}
    \end{subfigure}
    
    \begin{subfigure}[b]{.24\textwidth}
    \includegraphics[width=\textwidth]{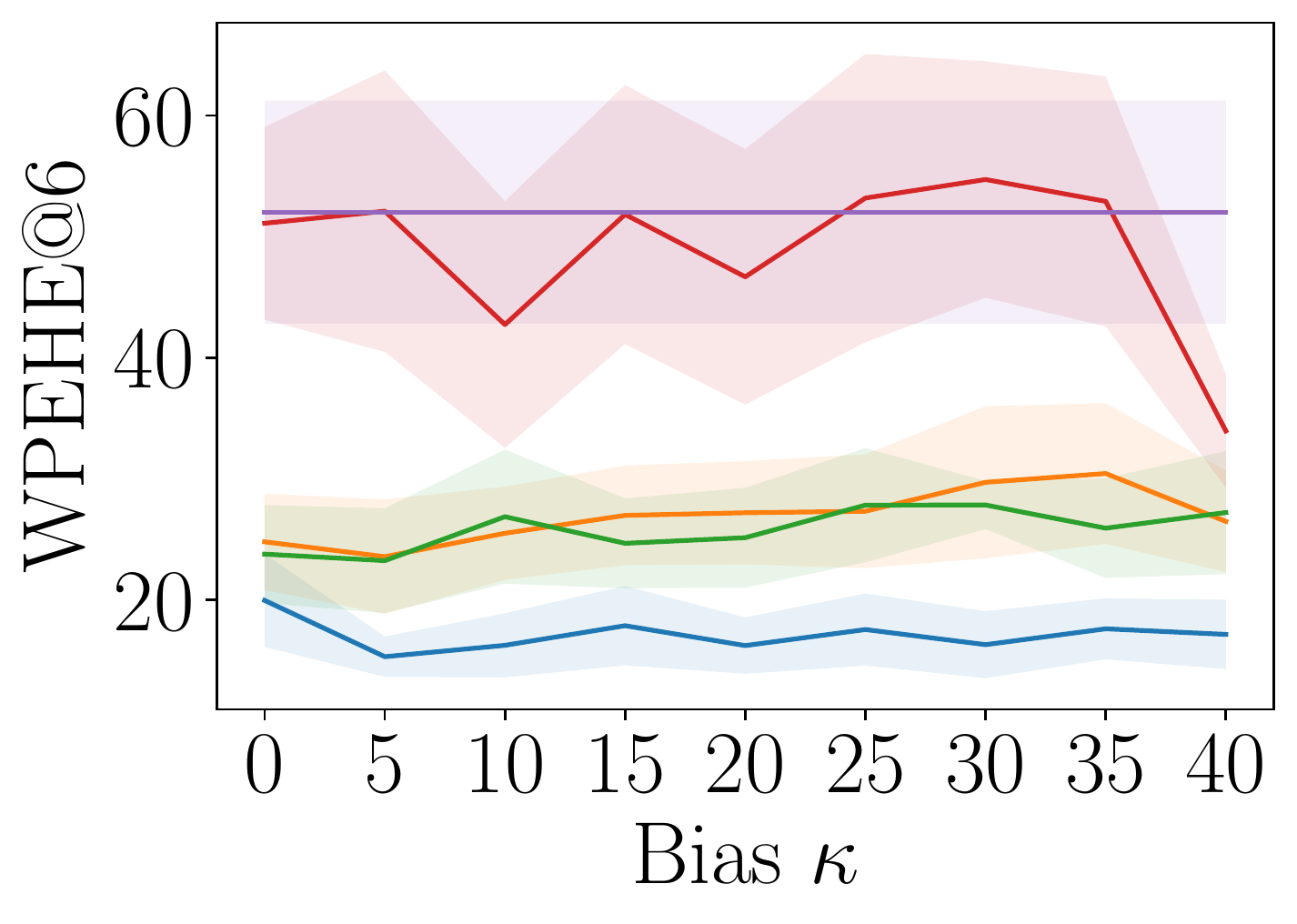}
    \end{subfigure}
    \begin{subfigure}[b]{.24\textwidth}
    \includegraphics[width=\textwidth]{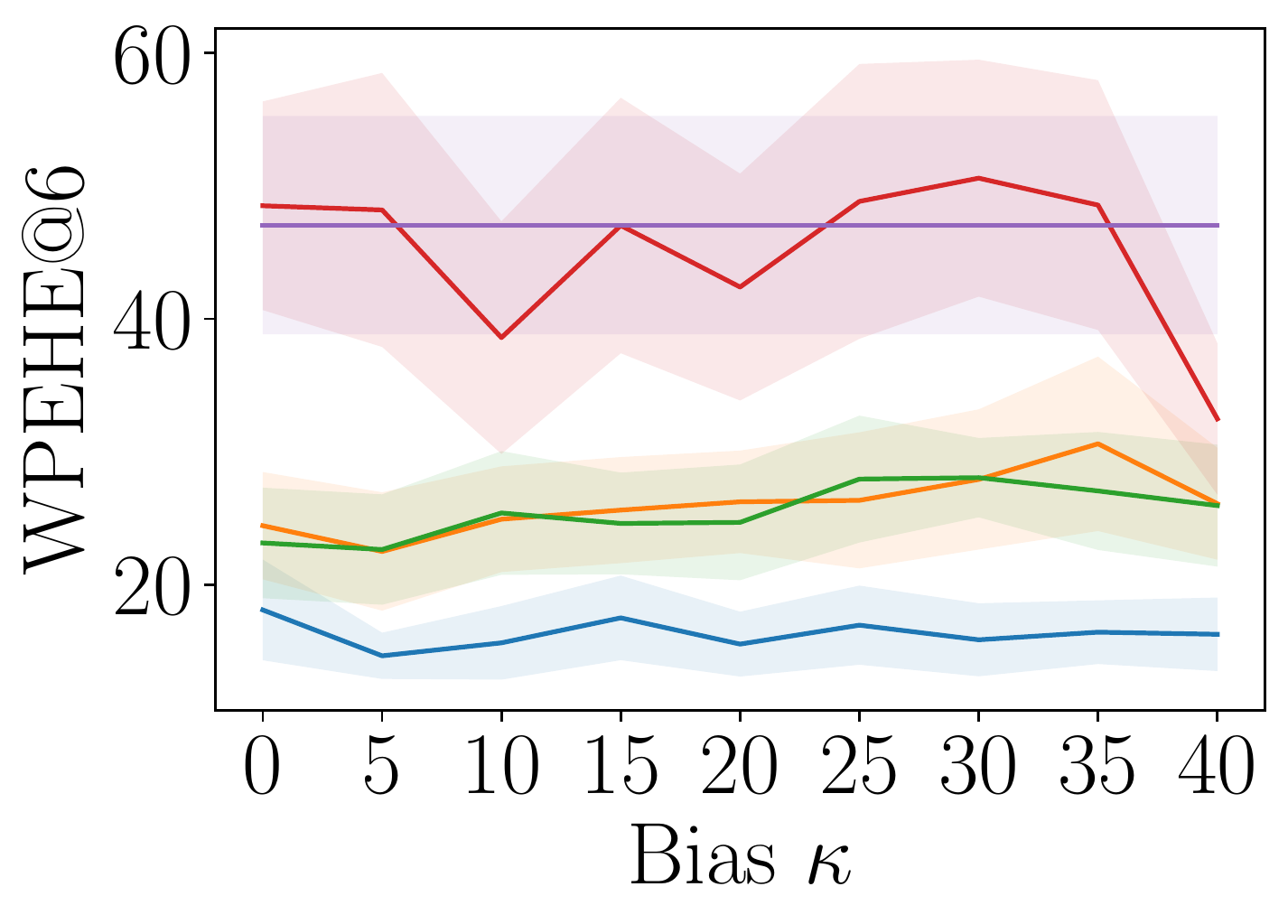}
    \end{subfigure}  
    \begin{subfigure}[b]{.24\textwidth}
    \includegraphics[width=\textwidth]{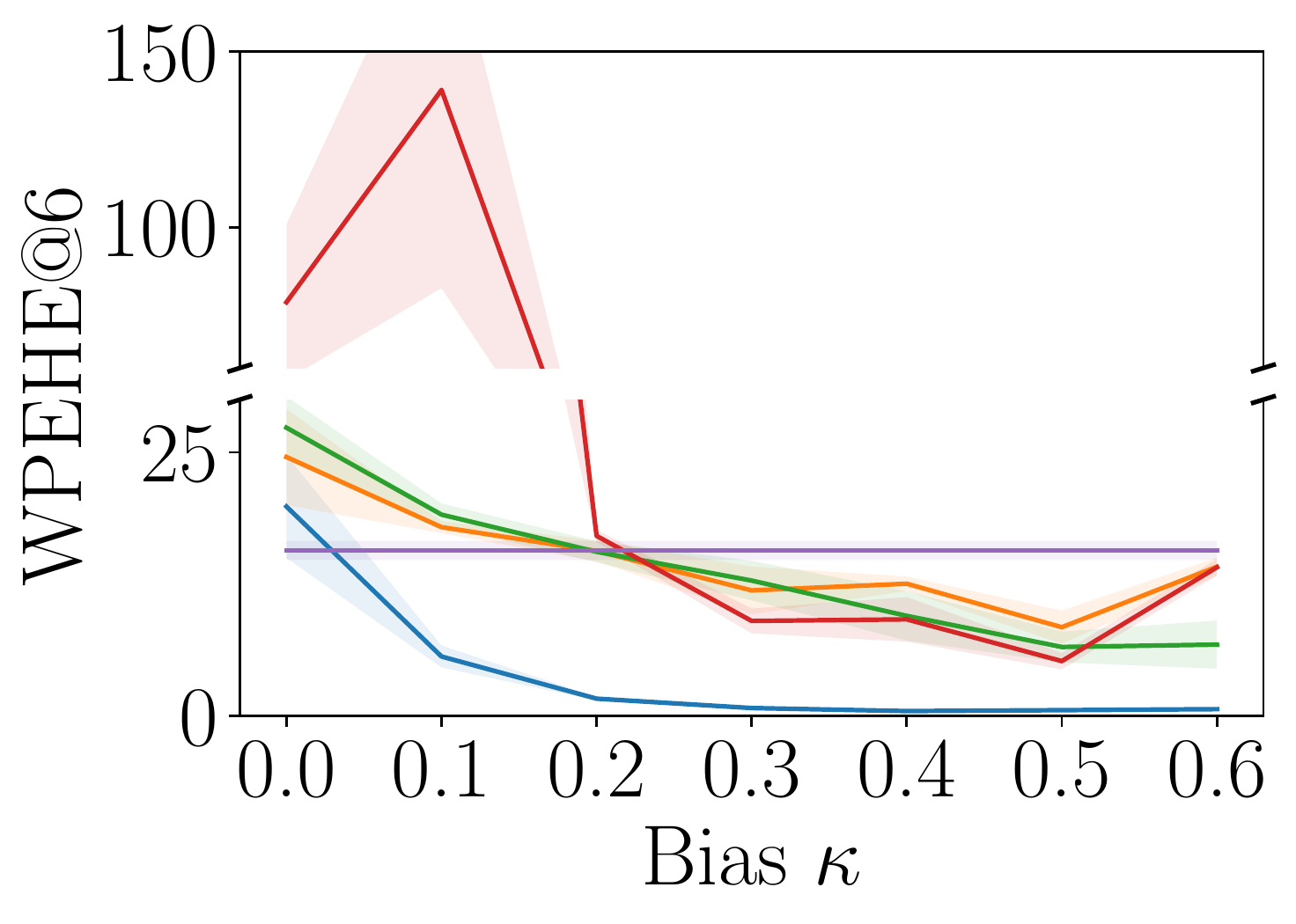}
    \end{subfigure}
    \begin{subfigure}[b]{.24\textwidth}
    \includegraphics[width=\textwidth]{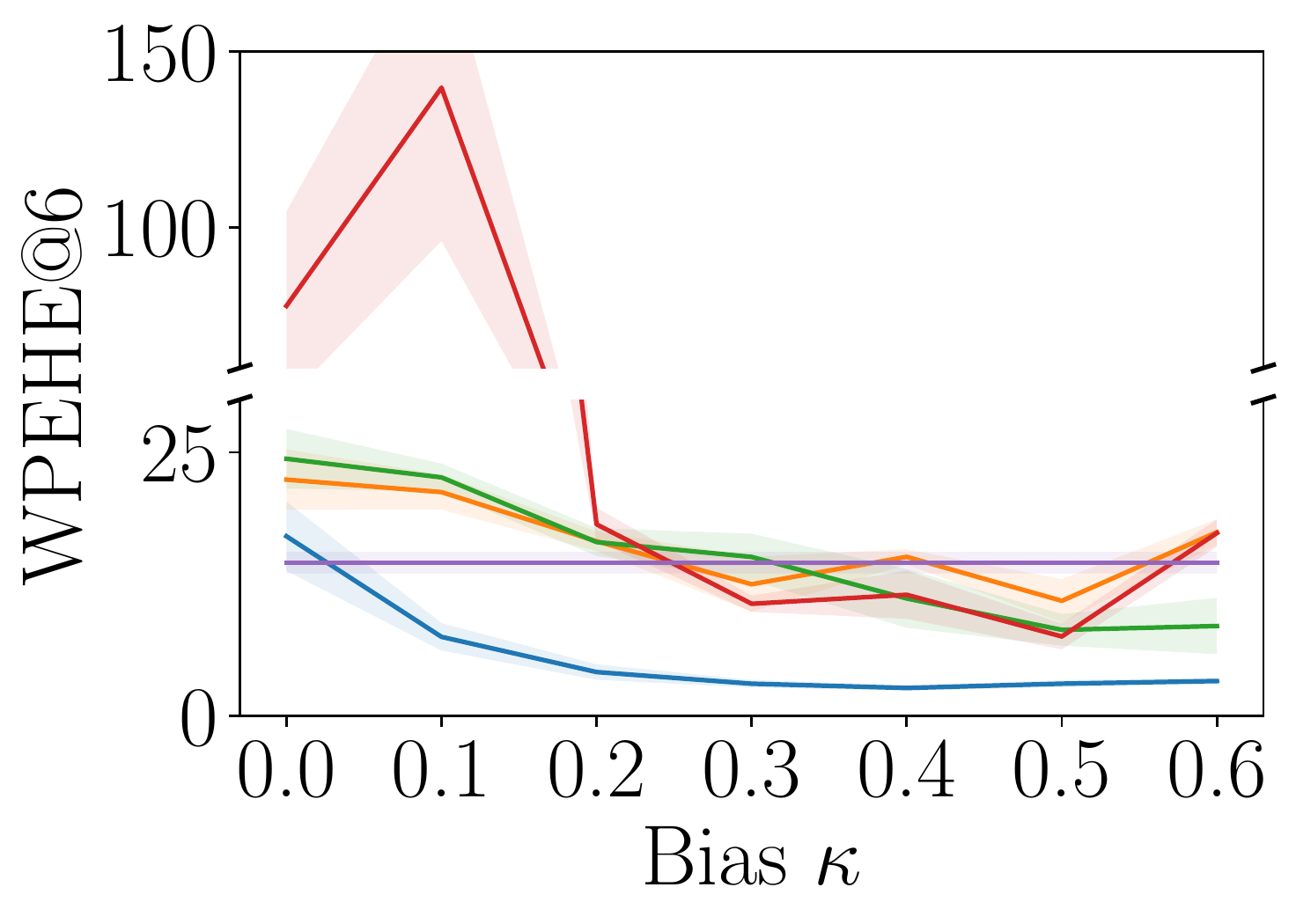}
    \end{subfigure}

    \end{figure}
    \begin{figure}[ht]\ContinuedFloat
    \centering

    \begin{subfigure}[b]{.24\textwidth}
    \centering
    \textbf{SW In-Sample} 
    \includegraphics[width=\textwidth]{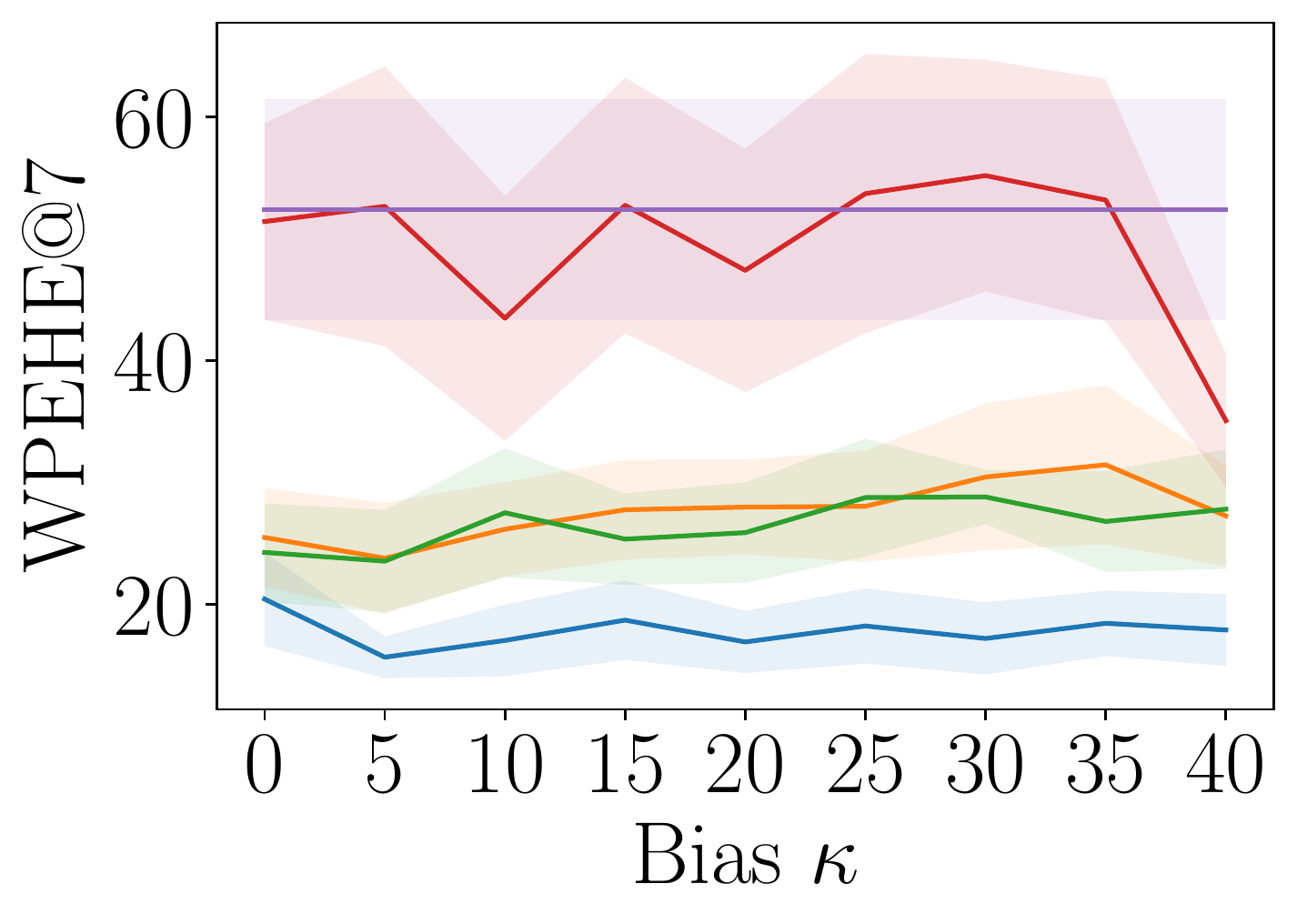}
    \end{subfigure}
    \begin{subfigure}[b]{.24\textwidth}
    \centering
    \textbf{SW Out-Sample} 
    \includegraphics[width=\textwidth]{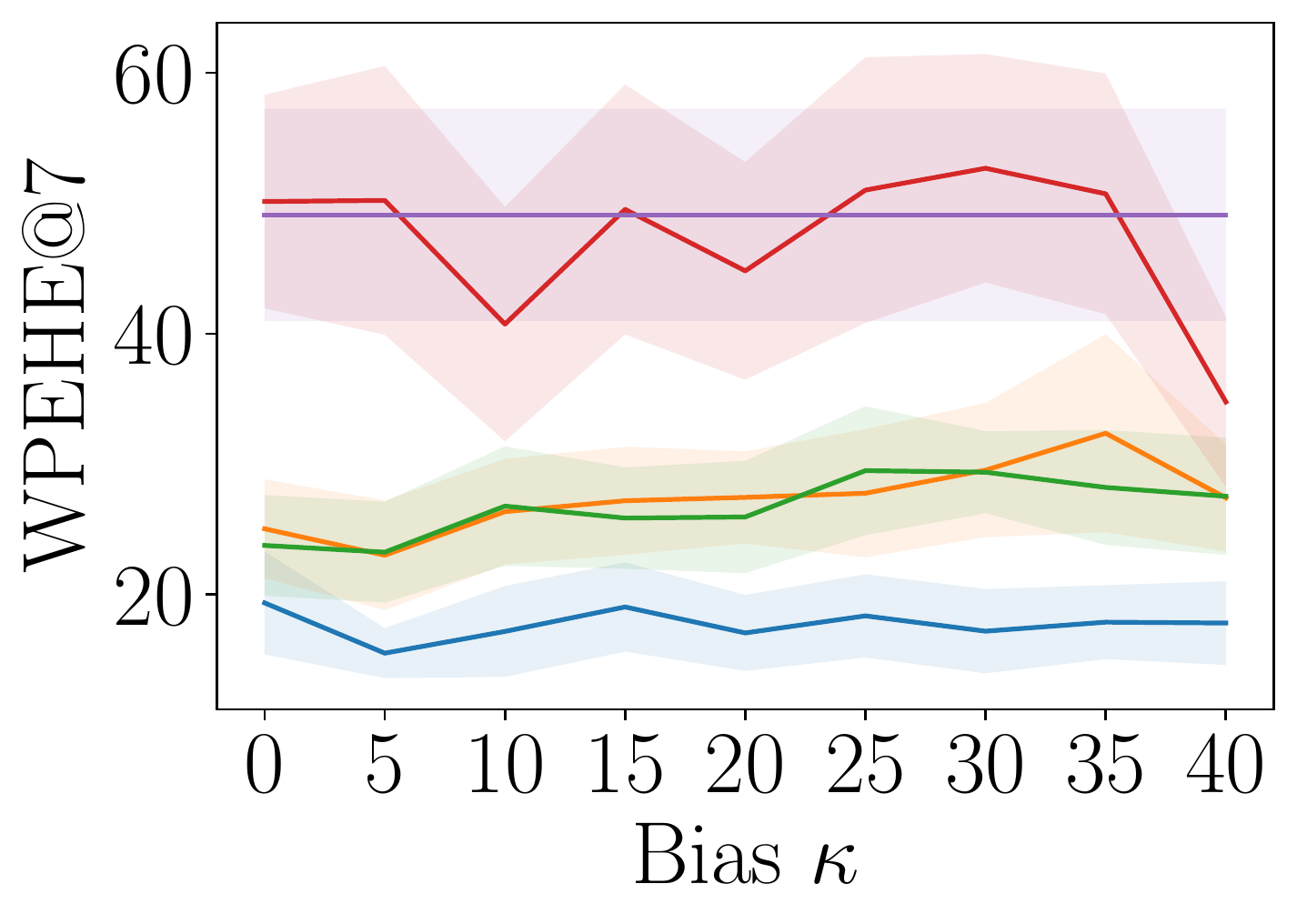}
    \end{subfigure}
    \begin{subfigure}[b]{.24\textwidth}
    \centering
    \textbf{TCGA In-Sample} 
    \includegraphics[width=\textwidth]{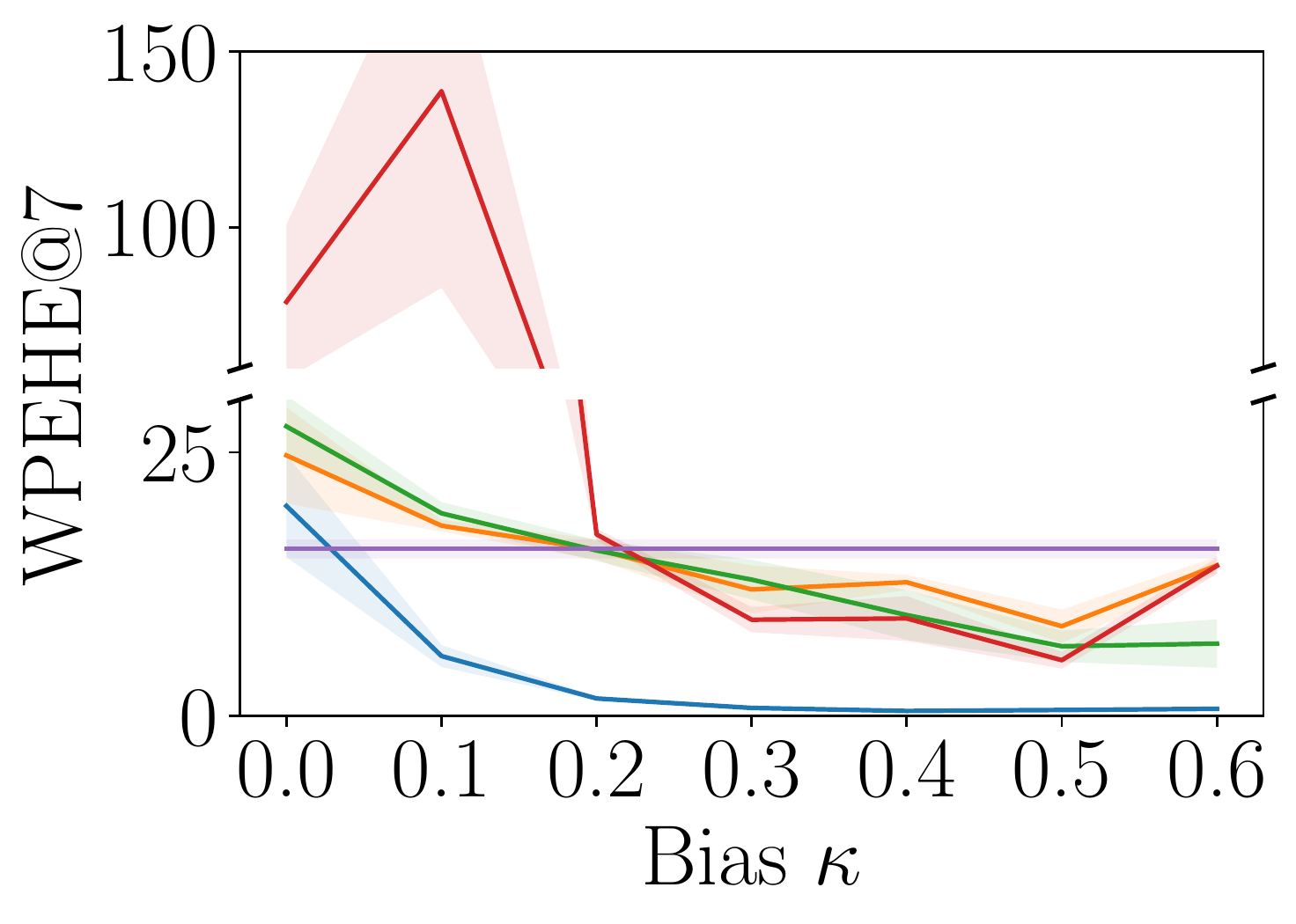}
    \end{subfigure}
    \begin{subfigure}[b]{.24\textwidth}
    \centering
    \textbf{TCGA Out-Sample} 
    \includegraphics[width=\textwidth]{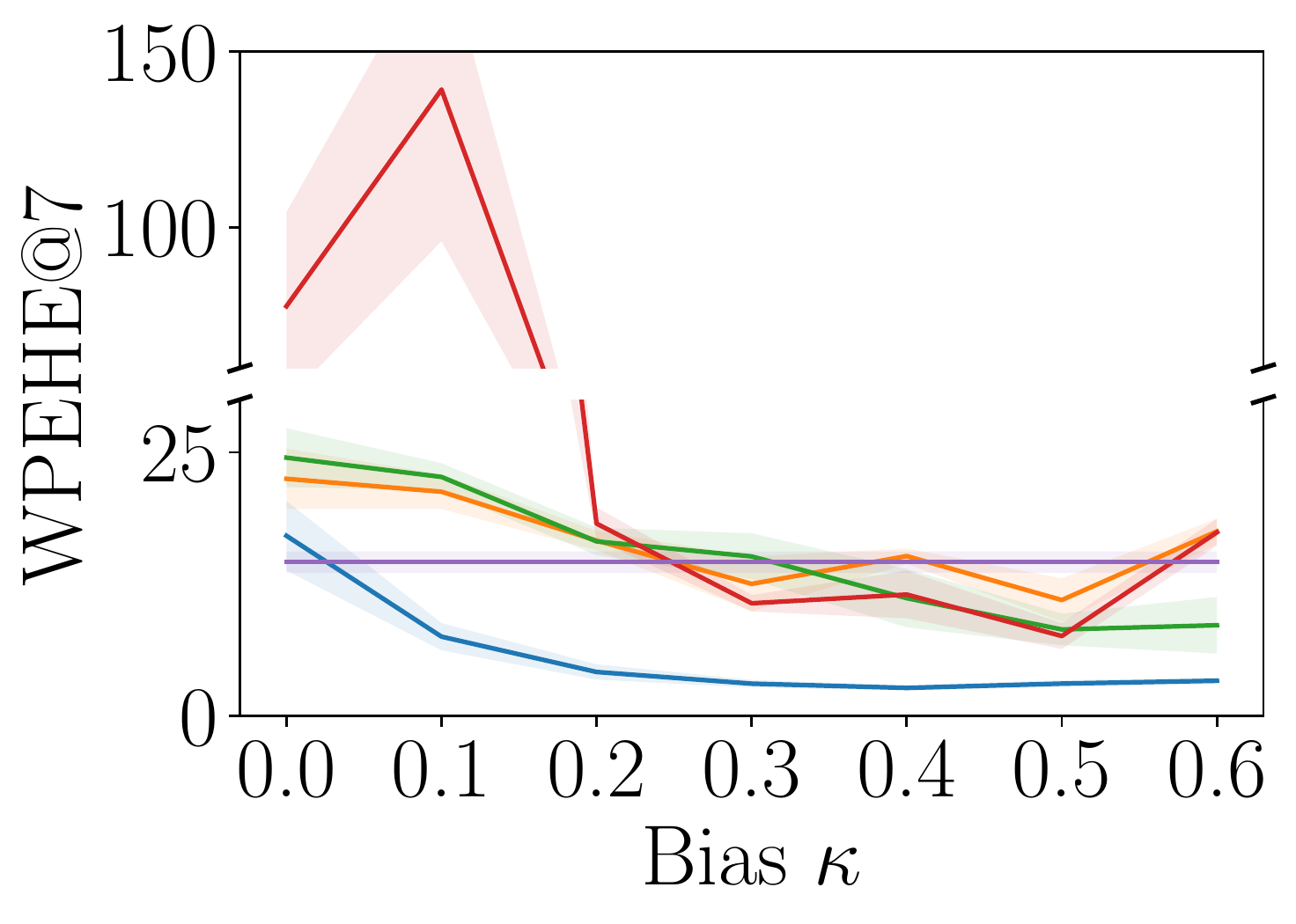}
    \end{subfigure}
    
    \begin{subfigure}[b]{.24\textwidth}
    \includegraphics[width=\textwidth]{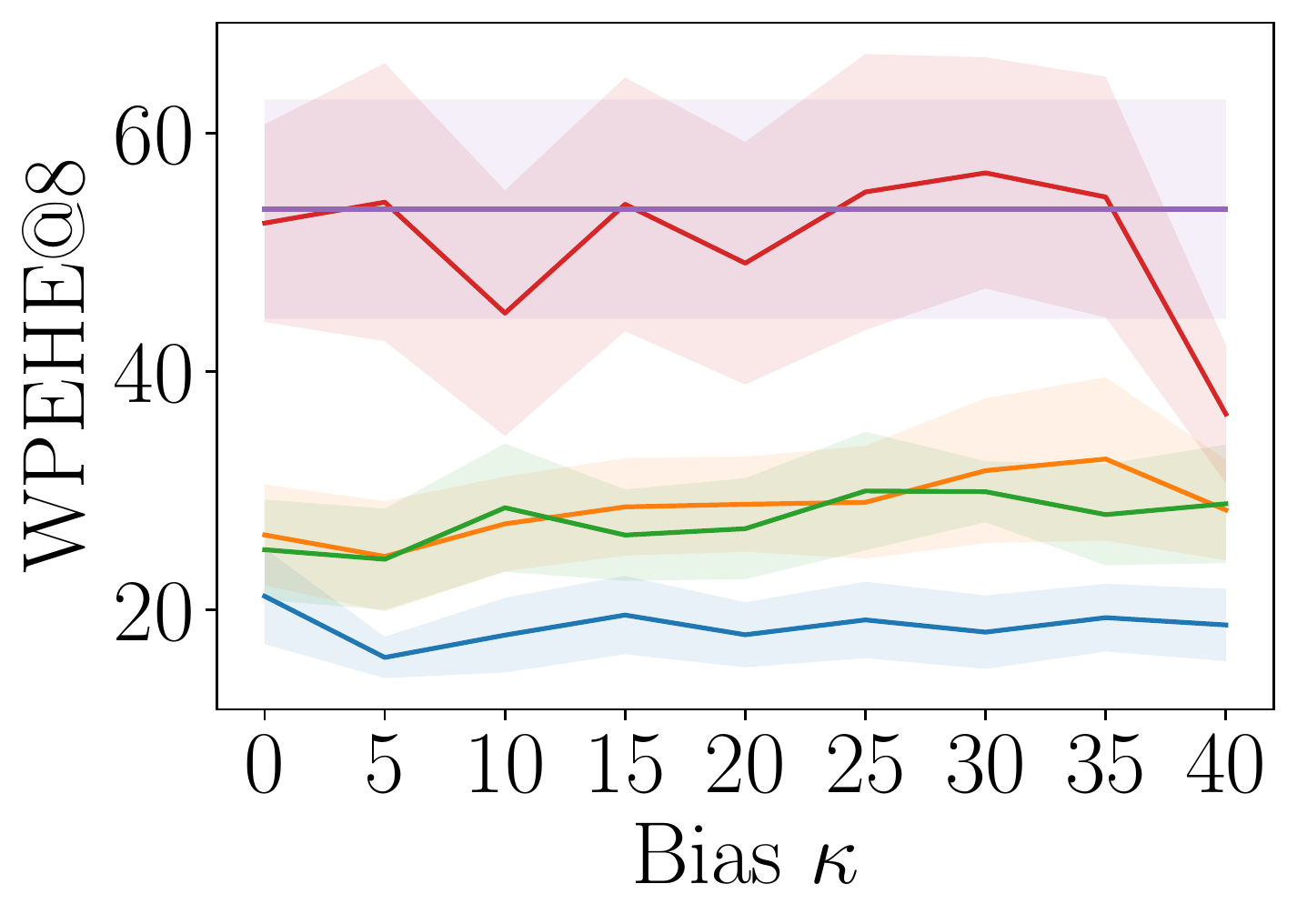}
    \end{subfigure}
    \begin{subfigure}[b]{.24\textwidth}
    \includegraphics[width=\textwidth]{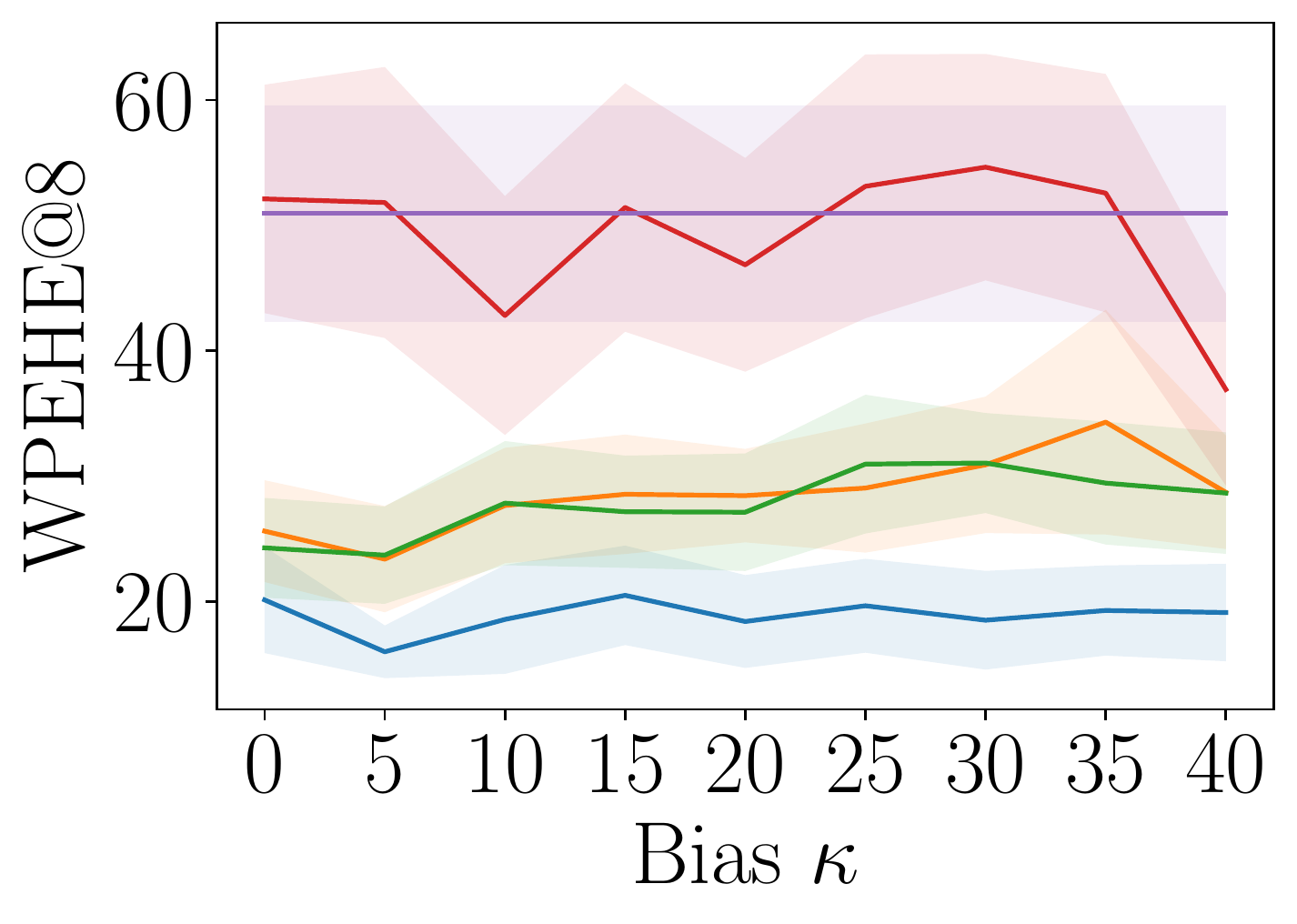}
    \end{subfigure}
    \begin{subfigure}[b]{.24\textwidth}
    \includegraphics[width=\textwidth]{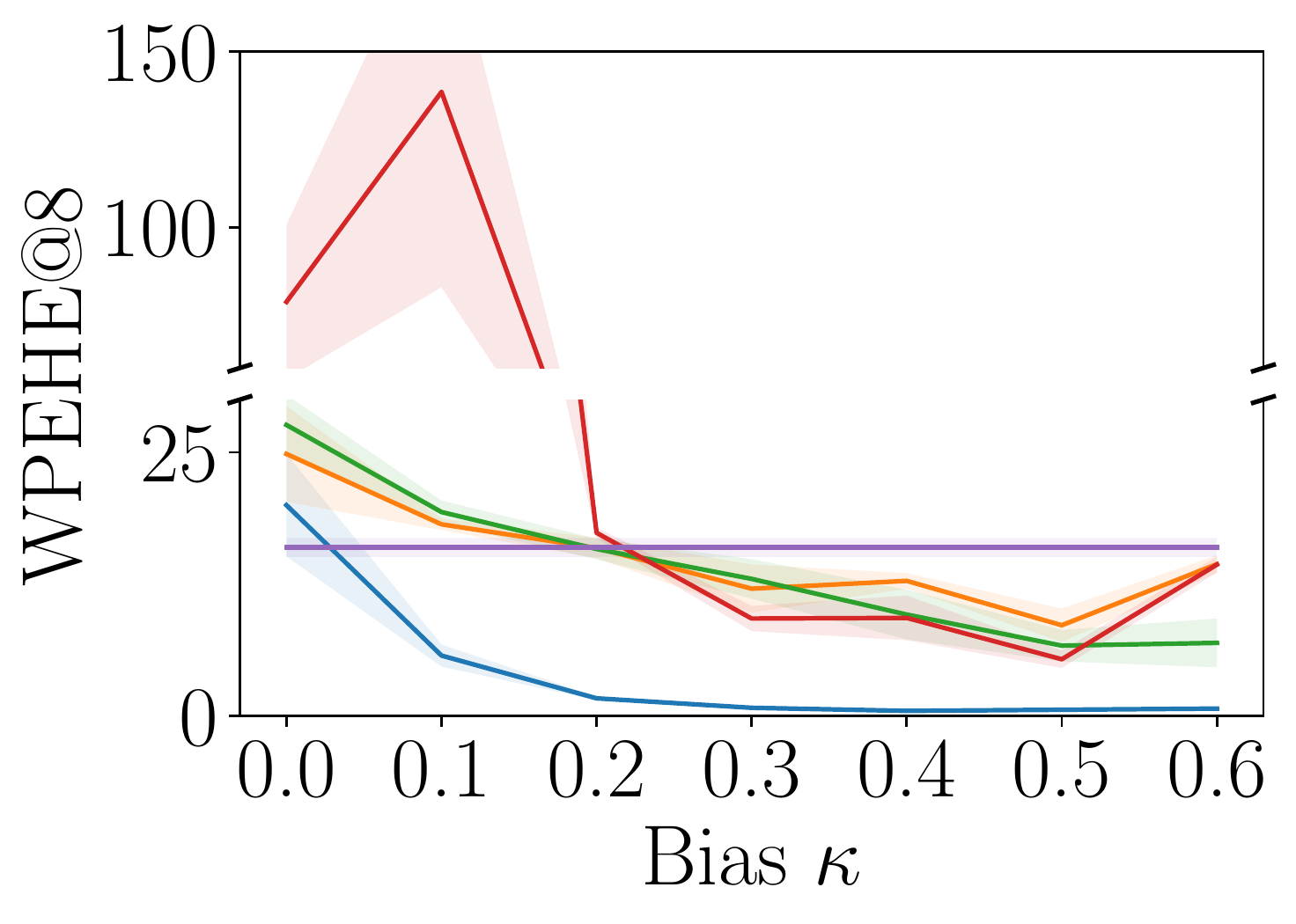}
    \end{subfigure}
    \begin{subfigure}[b]{.24\textwidth}
    \includegraphics[width=\textwidth]{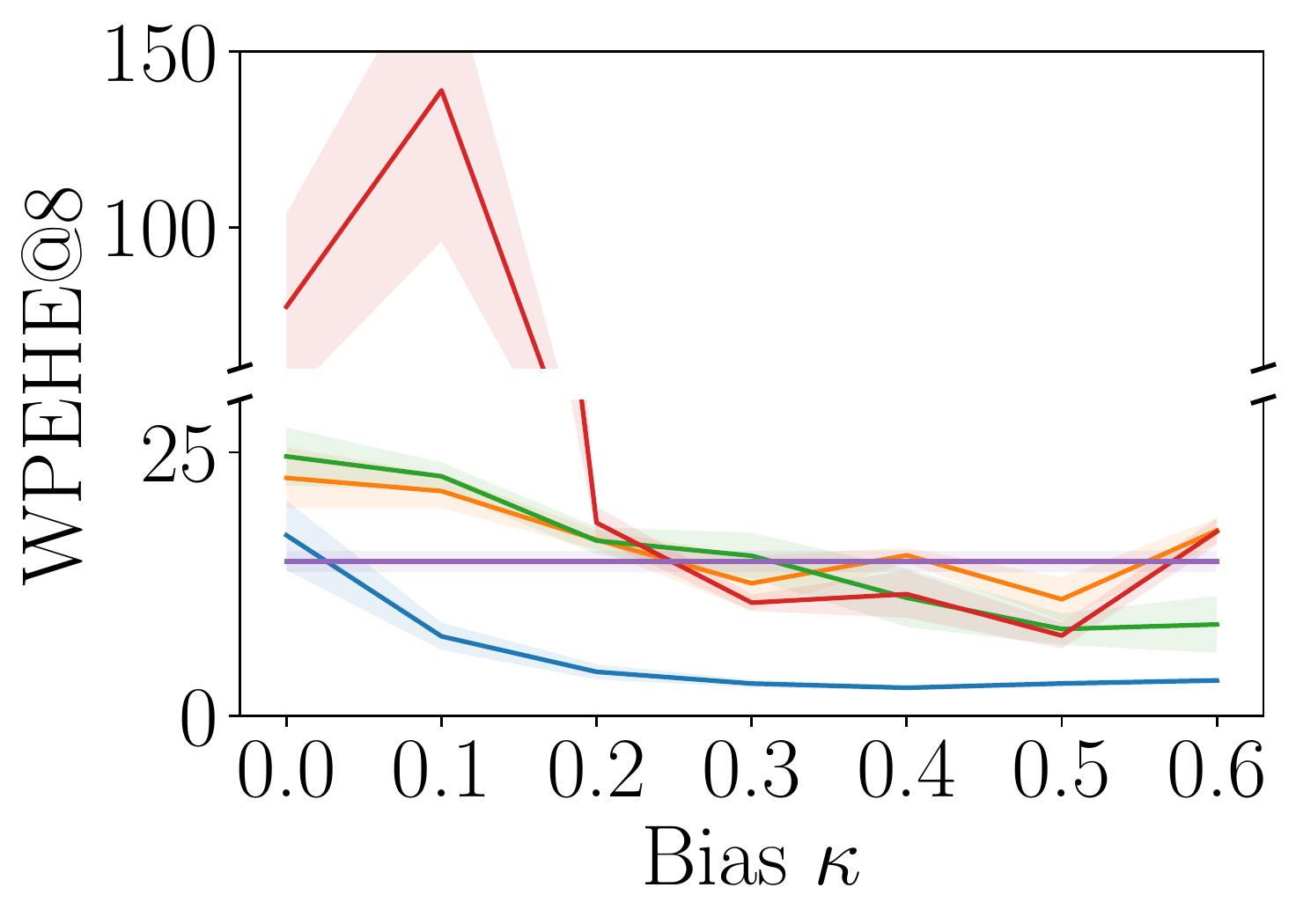}
    \end{subfigure}

    \begin{subfigure}[b]{.24\textwidth}
    \includegraphics[width=\textwidth]{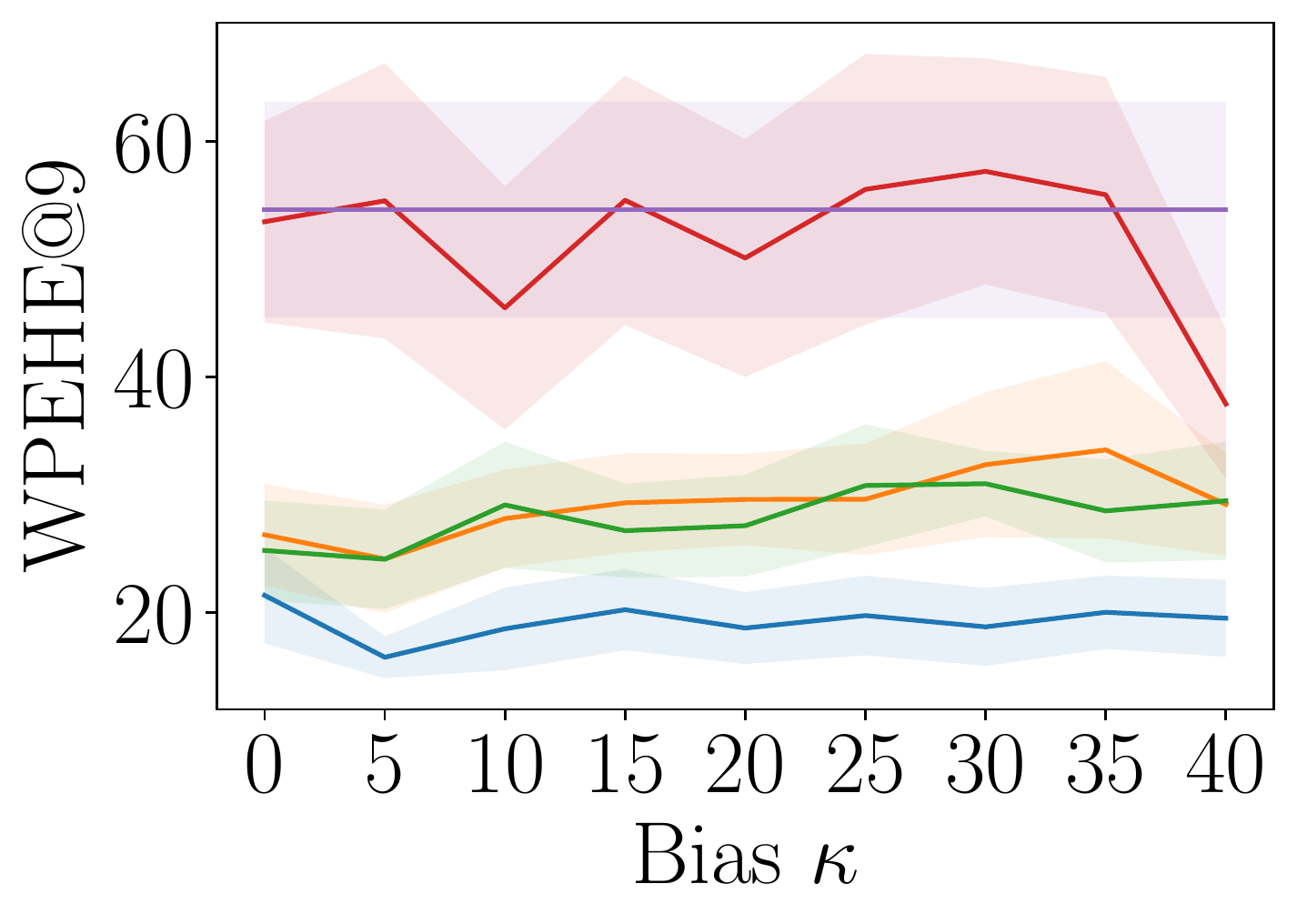}
    \end{subfigure}
    \begin{subfigure}[b]{.24\textwidth}
    \includegraphics[width=\textwidth]{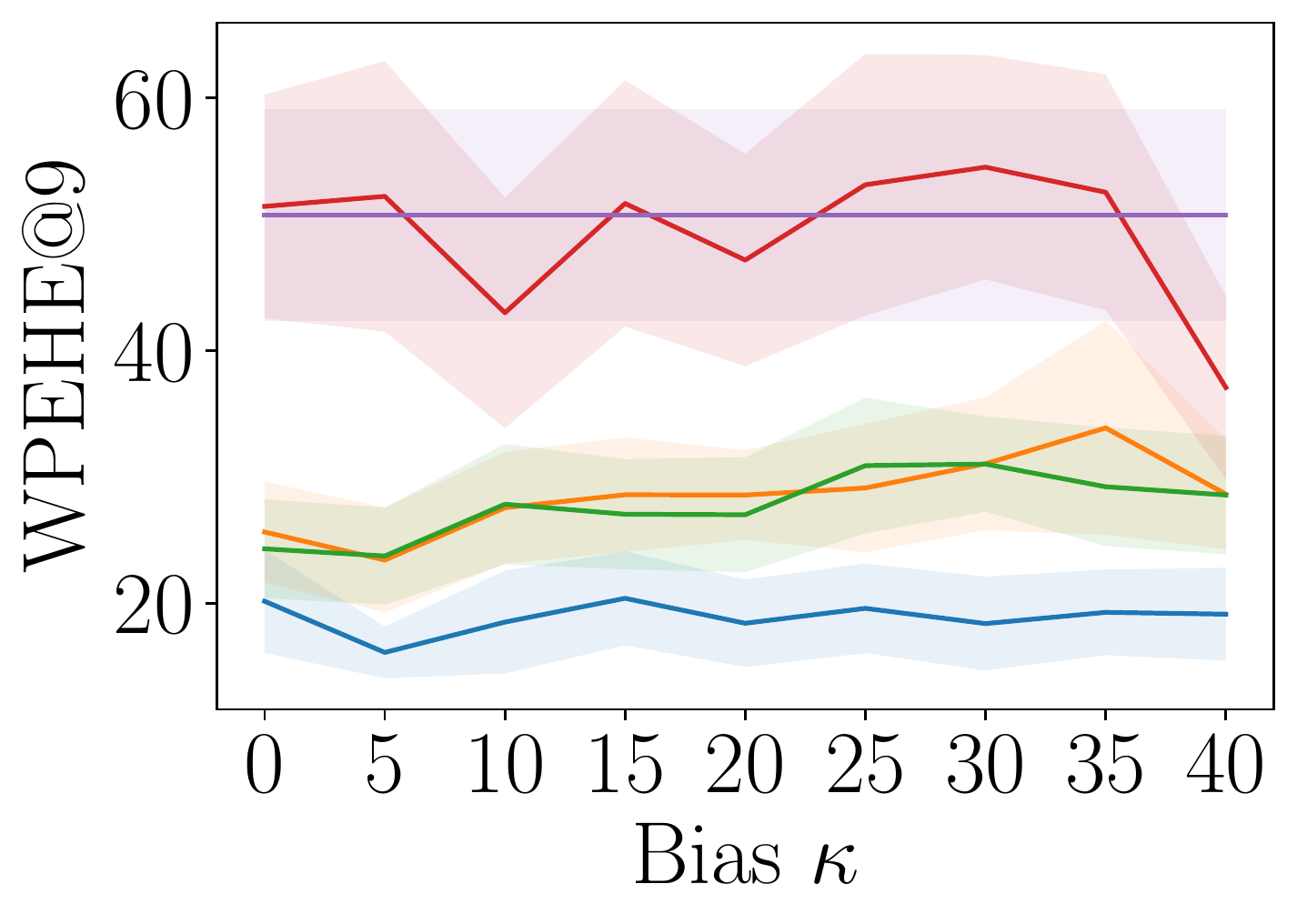}
    \end{subfigure}
    \begin{subfigure}[b]{.24\textwidth}
    \includegraphics[width=\textwidth]{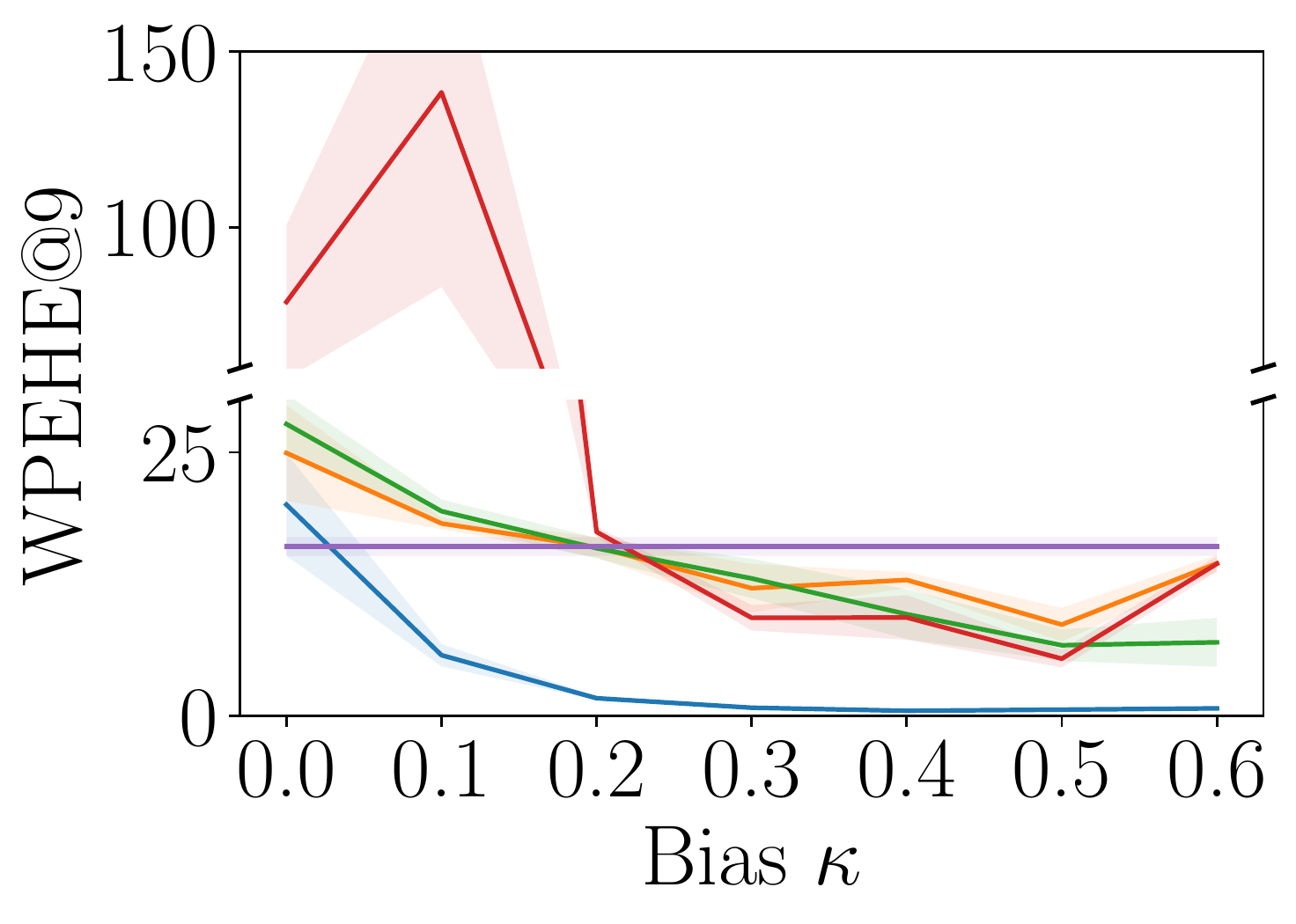}
    \end{subfigure}
    \begin{subfigure}[b]{.24\textwidth}
    \includegraphics[width=\textwidth]{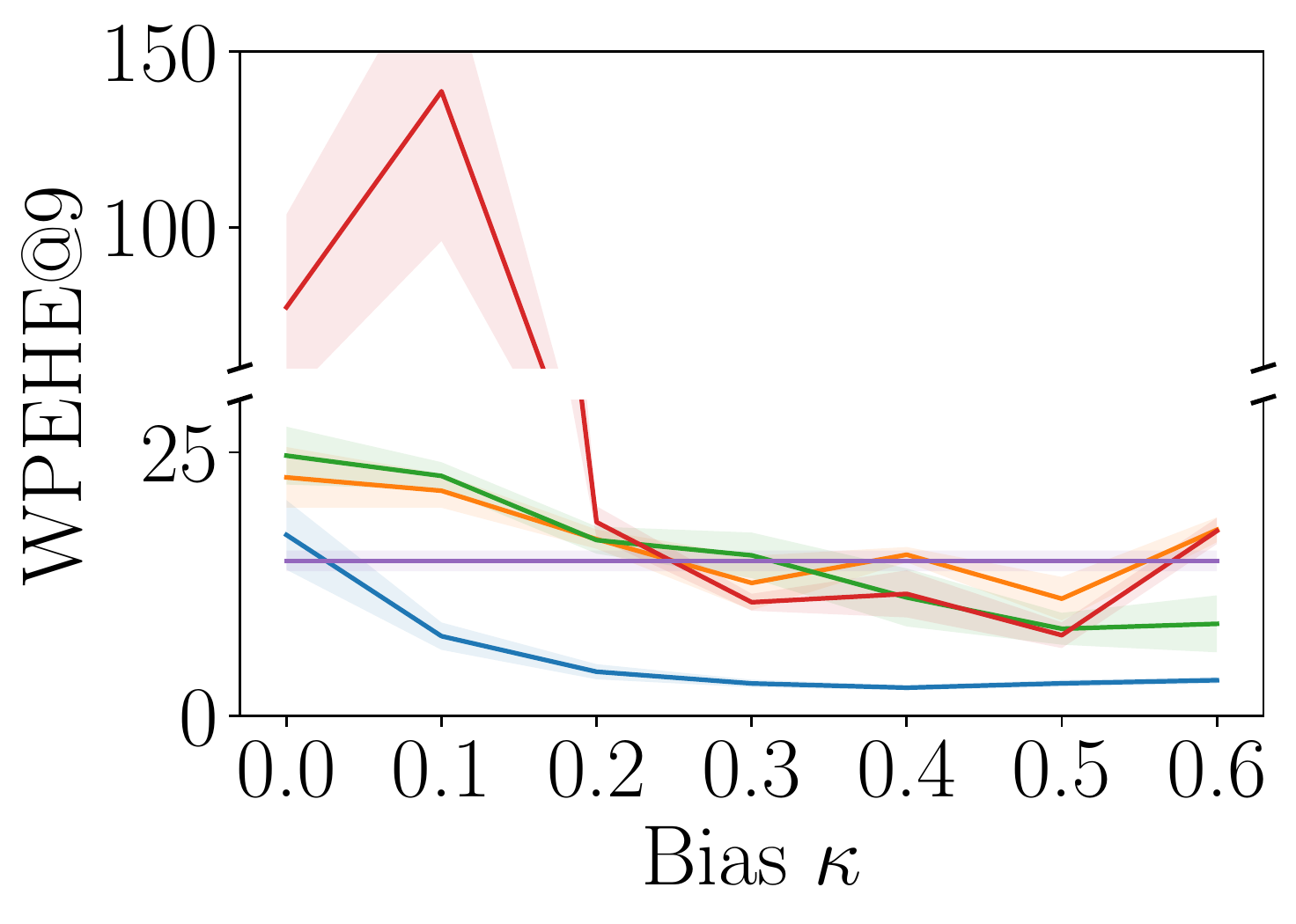}
    \end{subfigure}

    \begin{subfigure}[b]{.24\textwidth}
    \includegraphics[width=\textwidth]{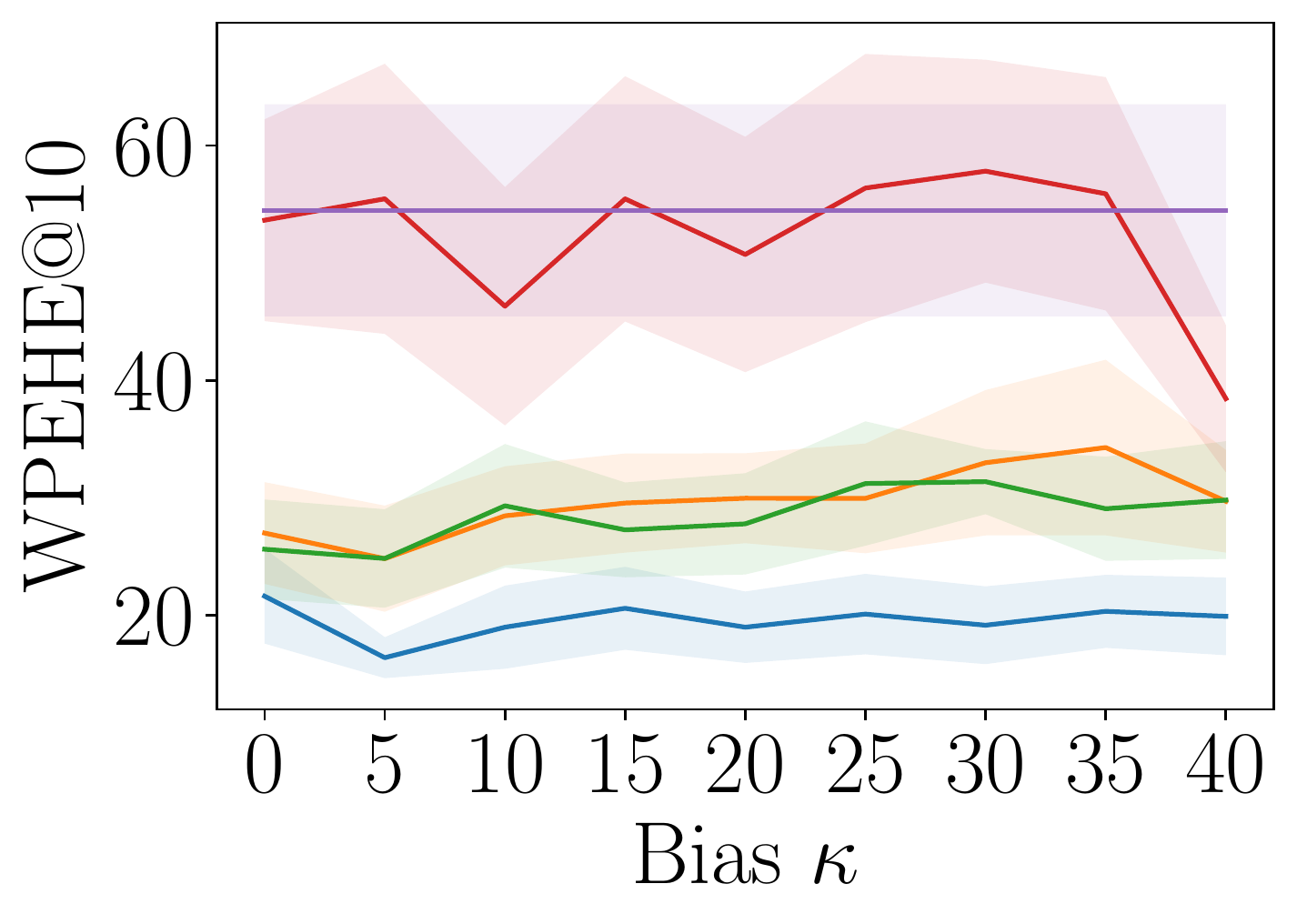}
    \end{subfigure}
    \begin{subfigure}[b]{.24\textwidth}
    \includegraphics[width=\textwidth]{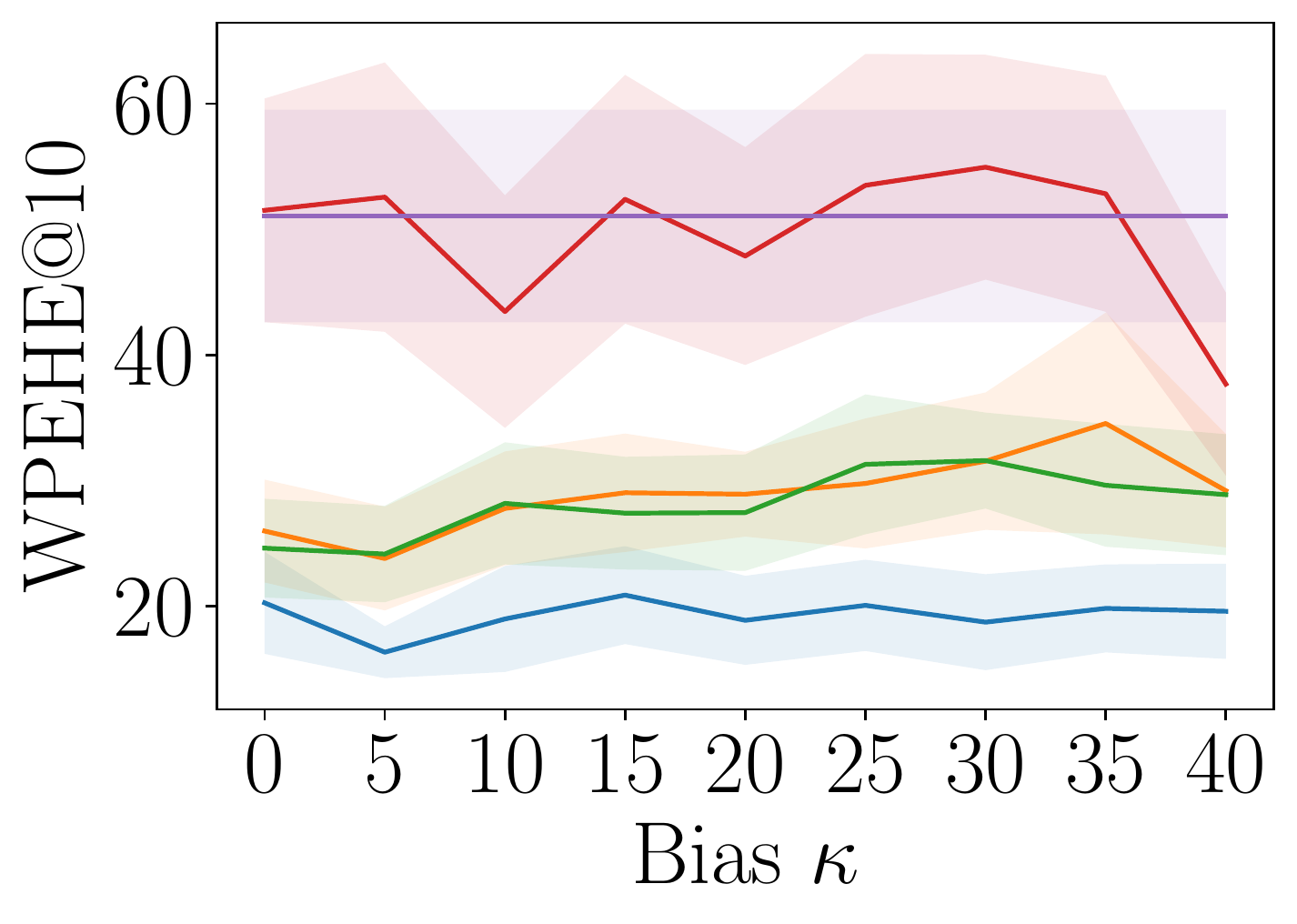}
    \end{subfigure}
    \begin{subfigure}[b]{.24\textwidth}
    \includegraphics[width=\textwidth]{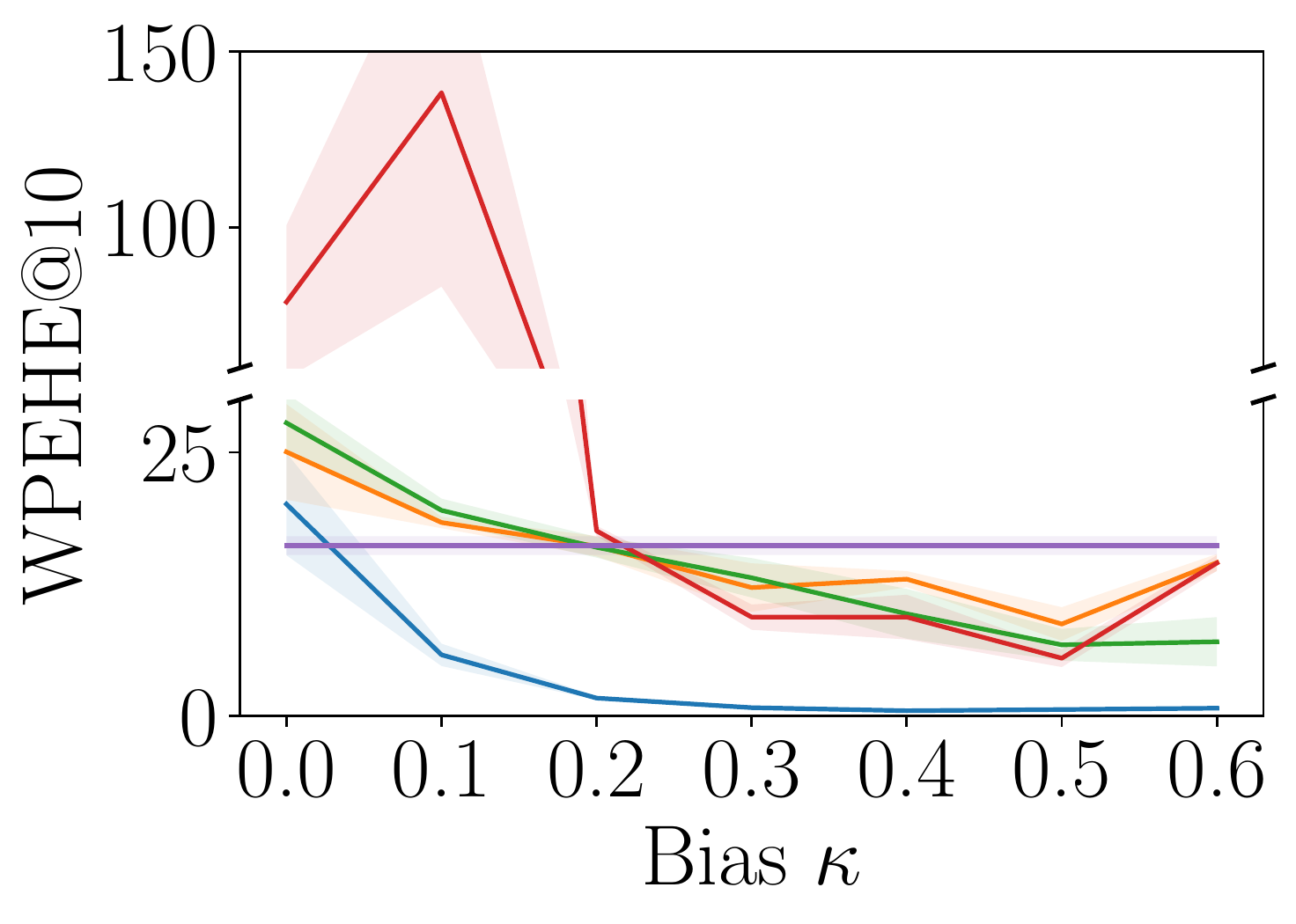}
    \end{subfigure}
    \begin{subfigure}[b]{.24\textwidth}
    \includegraphics[width=\textwidth]{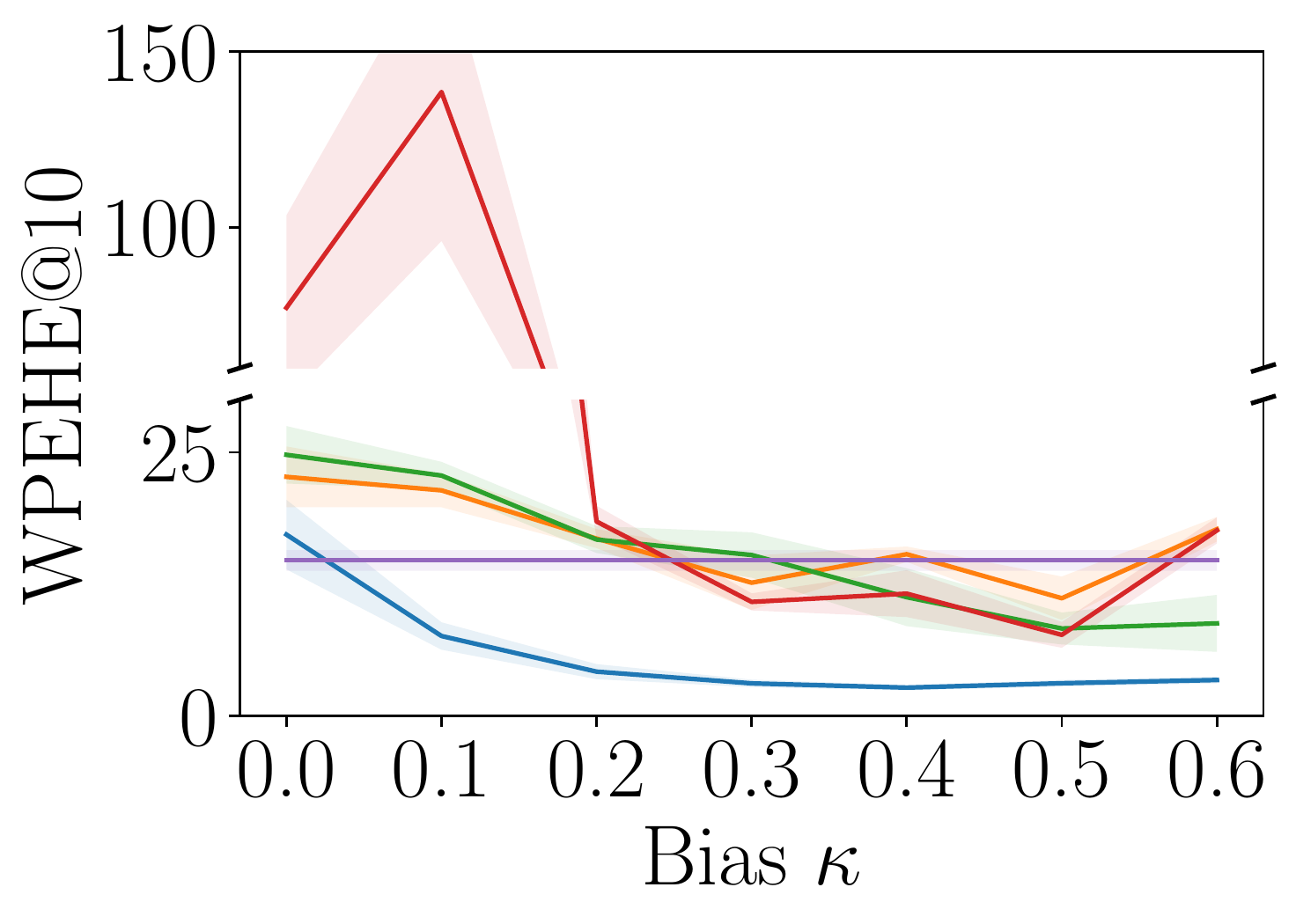}
    \end{subfigure}
    \includegraphics[width=0.8\textwidth]{figures/legend.pdf}
    \caption{WPEHE@$K$ over increasing bias strength $\kappa$ and varying $K$.}
\end{figure}
\section{Quasi-oracle rates for generalized R-Learner} \label{sec:rates}

The goal of this section is to establish error bounds for learning conditional average treatment effects (CATEs) when treatments are \textit{continuous}. To do so, we will assume that the response function, $\E[Y|\mx, \mt]$ can be written as follows,
\begin{align}
    \E\left[Y\mid\mx, \mt\right] = \boldsymbol{\alpha}\left(\matr X\right)^\top \Theta^* \boldsymbol{\beta}\left(\matr T\right),
\end{align}
where $\boldsymbol{\alpha}(\matr X) \in \mathbb{R}^{d_{\mx}}, \boldsymbol{\beta}(\matr T) \in \mathbb{R}^{d_{\mt}}$ are fixed, known basis functions\footnote{In deep learning jargon, each dimension of the basis functions, $\alpha_i, \beta_j$, is simply called a \textit{feature}.} (where $d_{\mx}, d_{\mt} < \infty$) and $\Theta^* \in \mathbb{R}^{d_{\mx} \times d_{\mt}}$ is unknown. We will show that we can learn $\Theta^*$ using the generalized Robinson decomposition in eq.~(\ref{eq:generalized_robinson_loss}) (i.e., the minimization is now over $\Theta$) with the same error rate as if we had known the true \emph{oracle} nuisance functions $m^*$ and $e^p$, provided our estimates of $m^*$ and $e^P$ converge to the ground truths at $O(n^{-1/4})$ rate. 

The reason we consider the above fixed basis setting instead of the more generic setup in the paper is because there are many things that make the analysis of a more general setup difficult:
\begin{itemize}[ wide = 5pt, leftmargin = *]
    \item There is a non-trivial dependence between estimators $m(\cdot), e(\cdot), g(\cdot), h(\cdot)$ created by fitting using the entire dataset (as opposed to using cross-fitting).
    \item Representation learning of the features typically involves non-convex loss functions; the convergence analysis of such is largely still an untackled question. 
    \item In the infinite-basis setting the problem becomes ill-posed (our current work provides insight into fixing this, in particular in Lemma \ref{lemma: overlap-extension}).
\end{itemize}
Addressing these issues is an interesting area of future work. Meanwhile, in this work, we focus on the scenario where the features (i.e. basis functions) are fixed. We first sketch our result without technical jargon as follows.

\begin{theorem*}[Sketch]
Write $m(\mathbf{x}) := \E\left[Y\mid\mx=\mathbf{x}\right]$ and $e^P(\mathbf{x}):=\E\left[\boldsymbol{\beta}(\mt) \mid \mx=\mathbf{x}\right]$. When the ground truths $m$ and $e^P$ are unavailable, we can still estimate $\E\left[Y \mid \mx, \mt\right]$ almost with rate $O\left(n^{-1/2}\right)$ using only estimates of $m$ and $e^P$, provided the estimates themselves converge at rate $O\left(n^{-1/4}\right)$.
\end{theorem*}

\subsection{Preliminaries} 

To specify the above formally, we follow e.g. \cite{rkhs_notes} to construct an RKHS for the hypothesis space of the response function $f$ as follows. Let $\mathcal{X}$ and $\mathcal{T}$ be compact metric spaces, endowed with finite Borel measures $\mathcal{P_X}$ and $\mathcal{P_T}$. Let $\{\alpha_i\}_{i=1}^{d_{\alpha}} \subset \mathcal{L}_2(\mathcal{X, P_X})$ and $\{\beta_i\}_{i=1}^{d_{\beta}} \subset \mathcal{L}_2(\mathcal{T, P_T})$ denote subsets of orthonormal functions in $\mathcal{L}_2(\mathcal{X, P_X})$ and $\mathcal{L}_2(\mathcal{T, P_T})$ which are feature maps for $\mx$ and $\mt$, respectively. Write $\boldsymbol{\alpha}, \boldsymbol{\beta} \in \R^{d_{\alpha}}, \R^{d_{\beta}}$ as the vectors of features on $\mx$ and $\mt$, with $\boldsymbol{\alpha}_{i}(\mathbf{x}) := \alpha_{i}(\mathbf{x})$ and $\boldsymbol{\beta}_j(\mathbf{t}) := \beta_j(\mathbf{t})$. Then define $k_{\mx}: \mathcal{X} \times \mathcal{X} \rightarrow \R $ as $k_{\mx}(\mathbf{x}_1, \mathbf{x}_2) = \langle \boldsymbol{\alpha}(\mathbf{x}_1), \boldsymbol{\alpha}(\mathbf{x}_2) \rangle_2$ where $\langle \cdot, \cdot \rangle_2$ is the standard Euclidean dot product in $\R^d$, and define similarly $k_{\mt}: \mathcal{T} \times \mathcal{T} \rightarrow \R $ as $k_{\mt}(\mathbf{t}_1, \mathbf{t}_2) = \langle \boldsymbol{\beta}(\mathbf{t}_1), \boldsymbol{\beta}(\mathbf{t}_2) \rangle_2$. Then clearly $k_{\mx}$ and $k_{\mt}$ are positive definite functions and by Moore-Aronsajn \cite[Section 4]{rkhs_notes} there exist unique RKHSes $\mathcal{H_X}, \mathcal{H_T}$ with kernels $k_{\mx}$ and $k_{\mt}$. 

For the readers familiar with \cite{r-learner}, we can connect the setup to that of \cite{r-learner} as follows: following e.g. \cite{mendelson}, an element $g$ in $\mathcal{H_X}$ can be represented by $g(\mathbf{x}) = \langle \theta, \boldsymbol{\alpha}(\mathbf{x})\rangle_2 = \langle g, \boldsymbol{\alpha}(\mathbf{x})\rangle_{\mathcal{H_X}}$.  Following \cite{rkhs_notes}, we can define an integral operator based on the kernel $k_{\mx}$:
\begin{align}
    S_{k_{\mx}}: \mathcal{L}_2(\mathcal{X}; \mathcal{P_X}) &\rightarrow \mathcal{C(X)} \hspace{0.3cm} \text{where $\mathcal{C(X)}$ are the continuous functions on $\mathcal{X}$.}\\
    (S_{k_{\mx}} f)(\mx) &= \int k_{\mx}(\mathbf{x}_1, \mathbf{x}_2)f(\mathbf{x}_2)d\mathcal{P_X}(\mathbf{x}_2), \hspace{0.5cm} f \in \mathcal{L}_2(\mathcal{X; P_X})\\
    T_{k_{\mx}} &= I_{k_{\mx}} \circ S_{k_{\mx}}\\
    \text{with the inclusion } I_{k_{\mx}}: \mathcal{C(X)}&\hookrightarrow \mathcal{L}_2(\mathcal{X; P_X})
\end{align}
Clearly the eigenfunctions of $T_{k_{\mx}}$ are the orthonormal functions $\{\alpha_i\}_{i=1}^{d_{\alpha}}$ and the non-zero eigenvalues are $\{\sigma_i=1\}_{i=1}^{d_{\alpha}}$. 

$\mathcal{H_T}$ can be dealt with similarly to $\mathcal{H_X}$. Since $\mathcal{H_{X\times T}}$ is isometrically isomorphic to $\mathcal{H_X} \times \mathcal{H_T}$ 
, we can identify the basis functions on $\mathcal{H_{X \times T}}$ as $\{\alpha_i \beta_j\}_{i,j=1}^{d_{\alpha}, d_{\beta}}$, the eigenvalues as $\{\sigma_{ij}=1\}_{i,j=1}^{d_{\alpha}, d_{\beta}}$, and the inner product $\langle \cdot, \cdot \rangle_{\mathcal{H_{X\times T}}} = \langle \cdot, \cdot \rangle_{\mathcal{H_{X}}}\langle \cdot, \cdot \rangle_{\mathcal{H_{T}}}$ . By construction, the RKHS norm and the $L_2$ norm of $\mathcal{H_{X \times T}}$ are both equal to the matrix $2-$norm of the function representer, that is, for $f \in \mathcal{H}_{\mathcal{X \times T}}, f(\mathbf{x}, \mathbf{t}) = \langle \thet, \boldsymbol{\alpha}(\mathbf{x}) \otimes  \boldsymbol{\beta}(\mathbf{t})\rangle_{2}$,
\begin{equation}
    \|f\|_{\mathcal{H}_{\mathcal{X \times T}}} = \|f\|_{L_2}=\|\thet\|_2 \label{eq:l2-rkhs-equiv}
\end{equation}

Trivially, for all $0 < p < 1$, the eigenvalues $\sigma_{ij}$ satisfy $G = \sup_{i, j \geq 1}(i+d_{\mx}(j-1))^{1/p} \sigma_{ij}$ for some constant $ G < \infty$, which was posed as an assumption in \cite{r-learner}.

\begin{remark}
We did not need to require $\mathcal{X}$ and $\mathcal{T}$ as compact metric spaces. Requiring them to be measurable spaces on which we can define $\mathcal{L}_2$ functions should be enough. But compact metric spaces also include most spaces of practical concern, including graph spaces, so we choose it since it satisfies the conditions of Mercer's theorem.
\end{remark}

\subsection{Problem set-up}
We assume that the true response function lie in $\mathcal{H_{X \times T}}$:
\begin{customassp}{\ref{assump:true_fun_approx}}
The true response function $f^*(\mathbf{x}, \mathbf{t}) = \E[Y\mid\mx=\mathbf{x}, \mt=\mathbf{t}]$ can be written as $f^*(\mathbf{x}, \mathbf{t})= \boldsymbol{\alpha}^{\top}(\mathbf{x})\thet^* \boldsymbol{\beta}(\mathbf{t})$ for some matrix of coefficients $\thet^*$.
\end{customassp}

First we write down the population and empirical loss functions we consider. In order to assert that every element of $\mathcal{H_{X \times T}}$ can be uniquely represented by some $\thet$, we use $f_{\thet}$ to denote $f_{\thet} := \boldsymbol{\alpha}(\mathbf{x})^T \thet \boldsymbol{\beta}(\mathbf{t}) \in \mathcal{H_{X \times T}}$.

The expected loss of $f_{\thet}$ is defined by:
\begin{align}
    L(f_{\thet}) = L(\thet) = \E\left[\left\{(Y - m^*(\mx)) - \boldsymbol{\alpha}(\mx)^T \thet (\boldsymbol{\beta}(\mt) - e^P(\mx))\right\}^2\right]
\end{align}

The oracle (empirical) loss is defined by:
\begin{align}
    \tilde{L}_n(f_{\thet}) = \tilde{ L}_n(\thet) = \sum_{l=1}^n\left[\left\{(Y - m^*(\mx_l)) - \boldsymbol{\alpha}(\mx_l)^T \thet (\boldsymbol{\beta}(\mt_l) - e^P(\mx_l))\right\}^2\right]
\end{align}

The feasible (empirical) loss is defined by:
\begin{align}
    \hat{L}_n(f_{\thet}) = \hat{L}_n(\thet) = \sum_{l=1}^n\left[\left\{(Y - \hat{m}(\mx_l)) - \boldsymbol{\alpha}(\mx_l)^T \thet (\boldsymbol{\beta}(\mt_l) - \hat{e}^P(\mx_l))\right\}^2\right]
\end{align}
Note that we use  $L(f_{\thet})$ and $L(\thet)$ interchangeably due to the bijection between $\thet \in \R^{d_{\mx} \times d_{\mt}}$ and $\mathcal{H}_{\mathcal{X \times T}}$.

The corresponding regret functions are defined by 
\begin{align}
    R(\thet) &= L(\thet) - L(\thet^*)\\
    \tilde{R}_n(\thet)&= \tilde{L}_n(\thet) - \tilde{L}_n(\thet^*)\\
    \hat{R}_n(\thet)&= \hat{L}_n(\thet) - \hat{L}_n(\thet^*)
\end{align}

We now formally state the assumptions we need to derive the result in Theorem \ref{theorem:main}.

\begin{customassp}{\ref{assump:overlap_mainbody}}[Overlap] 
The marginal distribution of features $\mathcal{P}_{\boldsymbol{\alpha}(\mathcal{X})\boldsymbol{\beta}(\mathcal{T})}$ is positive, i.e. $\operatorname{supp}[\mathcal{P}_{\boldsymbol{\alpha}(\mathcal{X})\boldsymbol{\beta}(\mathcal{T})}]= \boldsymbol{\alpha}(\mathcal{X})\boldsymbol{\beta}(\mathcal{T})$.
\end{customassp}

\begin{assumption}[Boundedness] \label{assump:boundedness} Without loss of generality, we assume that for all $\mx \in \mathcal{X}, \mt \in \mathcal{T}$, $\sup_{i}\|\alpha_i(\mx)\|_{\infty}, \sup_{j}\|\beta_j(\mt)\|_{\infty} \leq A < \infty$. We also assume that the outcome $Y$ are almost surely bounded, i.e. $\mathbb{P}(|Y|<B<\infty) = 1$.
\end{assumption}

For clarity, we list all notations we use here.
\paragraph{Notation.}
\begin{itemize}
    \item $\mathcal{H}$: A Product Reproducing Kernel Hilbert Space with finite number of basis functions, with $\boldsymbol{\alpha}$ the features of $X$ and $\boldsymbol{\beta}$ the features of $T$.
    \item $\Theta$: The matrix of coefficients for a given function in $\mathcal{H}$.
    \item $f_{\Theta}$: $f_{\Theta}(X,T) := \boldsymbol{\alpha}(X)^{\top}\Theta \boldsymbol{\beta}(T)$.
    \item $\mathcal{H}_c$: The subset of $\mathcal{H}$ which is the ball of radius $c$.
    \item $\Theta_c$: $f_{\Theta_c}$ is a minimiser of the loss in $\mathcal{H}_c$.
    \item $R(f_{\Theta})$: $L(f_{\Theta}) - L(f^*)$.
    \item $R(f_{\Theta}; c)$: $L(f_{\Theta}) - L(f_{\Theta_c})$
\end{itemize}

\paragraph{Convention.} Throughout, we will use capital letters $A, B, C,...$, possibly with subscripts and superscripts, e.g. $A_1, B^{(2)}$, etc. to denote constants. We may overload notation and use the same letter to denote different constants. 

\subsection{Proof strategy}
Here we lay forward the detailed proof for the quasi-oracle convergence rate for a featurized continuous heterogeneous treatment effect estimation algorithm with Robinson decomposition. Our proof extends the structure of \citet{r-learner}. To make the proof self-contained while simultaneously highlighting the differences with \citet{r-learner}, we present a complete version of the proof, where we will pause to describe any difference and its significance where it appears.

The high-level idea of showing `quasi-oracle' error rate is as follows. First, we show that both the feasible loss and the oracle loss satisfy the same (quasi-)isomorphism with the true loss, where the tightness of the quasi-isomorphism increases as sample size increases. The quasi-isomorphism with the true loss then leads us to bound the feasible and oracle losses by the same quantity, which decreases to 0 as sample size grows indefinitely. To show the (quasi-)isomorphism for the oracle learner can be done by leveraging on the standard least-squares regression ideas \cite{mendelson}; to achieve the same for the feasible learner relies on the fact that the feasible loss differs from the oracle loss by only a small amount relative to the true loss, which constitutes the bulk of the proof.

We start with stating the formal lemma which connects quasi-isomorphism with loss bounds.

\subsection{From quasi-isomorphism to regret bound}

\begin{definition}[loss function]
A function is a \textbf{loss function} if it maps from a hypothesis class $\mathcal{H}$, to the real numbers $\mathbb{R}$. 
\end{definition}
\begin{lemma}\label{lemma:risk_bound}
 Let $\check{L}(\fest \in \hc)$ be a loss function, and $\check{R}(\fest; c) = \check{L}(\fest) - \check{L}(\fc)$ be the associated c-regret. Suppose $\rho(r)$ is a positive, continuous, increasing function. If, $\forall$ $1\leq c \leq C$ and some $k > 1$, the following inequality holds for all $\fest \in \hc$:
\begin{align}
    \frac{1}{k}\check{R}(\fest;c) - \rho(c) \leq R(\fest; c) \leq k\check{R}(\fest; c) + \rho(c) \label{eq:quasi-iso-general}
\end{align}
Then, writing $\kappa_1 = 2k + \frac{1}{k}$ and $\kappa_2 = 2k^2 +3$, any solution to the regularized minimization problem with $\Lambda(c) \geq \rho(c)$,
\begin{equation}
    f_{\check{\thet}} \in \argmin_{\fest \in \hC}\{\check{L}(\fest)+\kappa_1\Lambda(\fest)_{\mathcal{H}}\}
\end{equation}
also satisfied the following risk bound:
\begin{equation}
    L(f_{\check{\thet}}) \leq \inf_{\fest \in \hC} \{L(\fest) + \kappa_2\Lambda(\fest)_{\mathcal{H}}\label{eq:loss_bound}
\end{equation}
\end{lemma}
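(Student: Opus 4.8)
The plan is to follow the standard template from empirical risk minimization with a data-dependent penalty, essentially specializing the ``generic chaining'' argument that relates a quasi-isomorphism between two risk functionals to an oracle inequality. The key observation is that the hypothesis $R(\fest; c) \le k\check R(\fest; c) + \rho(c)$ (and its lower counterpart) holds uniformly over the nested balls $\hc$ for $1 \le c \le C$, so the strategy is to peel the ball $\hC$ into these sub-balls and argue on whichever one the minimizer $f_{\check\thet}$ lands in.

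First I would let $f_{\check\thet}$ be any minimizer of $\check L(\fest) + \kappa_1\Lambda(\fest)_{\mathcal H}$ over $\hC$, and let $c^\star := \max\{\|\check\thet\|_{\mathcal H}, 1\}$ so that $f_{\check\thet} \in \mathcal H_{c^\star}$ with $1 \le c^\star \le C$. Applying the right-hand inequality of \eqref{eq:quasi-iso-general} with $c = c^\star$ gives $R(f_{\check\thet}; c^\star) \le k\,\check R(f_{\check\thet}; c^\star) + \rho(c^\star)$. Next, since $f_{\check\thet}$ minimizes the penalized $\check L$ over all of $\hC$, for \emph{any} competitor $\fest \in \hC$ (in particular $\fest \in \hc$ for any $1 \le c \le C$) we have $\check L(f_{\check\thet}) + \kappa_1\Lambda(f_{\check\thet})_{\mathcal H} \le \check L(\fest) + \kappa_1\Lambda(\fest)_{\mathcal H}$, hence $\check R(f_{\check\thet}; c^\star) = \check L(f_{\check\thet}) - \check L(f_{c^\star}) \le \check L(\fest) - \check L(f_{c^\star}) + \kappa_1(\Lambda(\fest)_{\mathcal H} - \Lambda(f_{\check\thet})_{\mathcal H})$. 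I then rewrite $\check L(\fest) - \check L(f_{c^\star})$ using the quasi-isomorphism in the \emph{other} direction: comparing on the ball of radius $\max(c, c^\star)$ and using the left inequality of \eqref{eq:quasi-iso-general} converts $\check R$ back into $R$ at the cost of another multiplicative factor $k$ and an additive $\rho$. Chaining these bounds and collecting the constants is what produces $\kappa_2 = 2k^2 + 3$ from $\kappa_1 = 2k + \tfrac1k$; I would keep careful track of the signs so that the negative $-\kappa_1\Lambda(f_{\check\thet})_{\mathcal H}$ term absorbs the $\rho(c^\star) \le \Lambda(c^\star) \le \Lambda(f_{\check\thet})_{\mathcal H}$ contributions.

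Finally, since $\fest \in \hc \subseteq \hC$ was arbitrary and $\Lambda$ is increasing, I would take the infimum over $\fest \in \hC$ on the right-hand side to conclude $L(f_{\check\thet}) \le \inf_{\fest \in \hC}\{L(\fest) + \kappa_2 \Lambda(\fest)_{\mathcal H}\}$, which is exactly \eqref{eq:loss_bound}. A subtlety to handle is that $\Lambda(\fc)_{\mathcal H}$ and $L(\fc)$ for the in-ball minimizer $\fc$ are intermediate quantities that must be bounded by the corresponding quantities at the arbitrary competitor $\fest$; this uses that $\fc$ minimizes $L$ over $\hc$ together with $\Lambda(\fc)_{\mathcal H} \le \Lambda(c) \le \Lambda(\fest)_{\mathcal H}$ whenever $\|\fest\|_{\mathcal H} \ge$ the relevant radius, or else a direct comparison.

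The main obstacle I anticipate is purely bookkeeping rather than conceptual: threading the three additive $\rho$-terms and three multiplicative $k$-factors through the peeling argument so that they collapse into precisely $\kappa_1$ on the optimization side and $\kappa_2$ on the risk side, while ensuring the penalty level $\Lambda(c) \ge \rho(c)$ is invoked at the right radius ($c^\star$, not $c$) to cancel the leftover additive terms. A secondary point of care is the measurability/existence of the in-ball minimizers $\fc$ and the fact that $c^\star$ is itself random — but since the quasi-isomorphism \eqref{eq:quasi-iso-general} is assumed to hold simultaneously for all $c \in [1, C]$, this causes no difficulty, and I would note explicitly that the argument is deterministic given that event. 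This lemma is a deterministic reduction; all the probabilistic content (showing \eqref{eq:quasi-iso-general} actually holds with high probability for the oracle and feasible losses) is deferred to later lemmas.
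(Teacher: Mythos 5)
Your plan reconstructs exactly the argument the paper itself relies on: the paper's proof of this lemma is a one-line deferral to the R-learner's oracle-inequality lemma (observing only that the balls $\hc$ form an ordered hierarchy so that argument transfers), and your peeling over the nested balls --- applying the quasi-isomorphism in both directions at the radius $c^\star$ of the minimizer and using $\Lambda(c)\ge\rho(c)$ to absorb the additive terms --- is precisely that argument. The approach is correct and essentially identical to the paper's; only the explicit bookkeeping producing $\kappa_1=2k+\tfrac1k$ and $\kappa_2=2k^2+3$ remains, exactly as it is left to the cited reference in the paper.
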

\begin{proof}
Notice that $\{\hc; c \geq 1\}$ is an ordered set. Thus the same argument as \cite{r-learner} applies.
\end{proof}
Lemma \ref{lemma:risk_bound} tells us that if we have a quasi-isomorphism of the regrets in the form of \ref{eq:quasi-iso-general}, we immediately can bound the expected risk of the (regularized) minimizer of the corresponding loss, $\check{L}$ as in \ref{eq:loss_bound}.

\subsection{A concrete instance of $\rho(c)$ satisfying \ref{eq:quasi-iso-general}}

By setting $\check{R}$ to $\tilde{R}_n$, \ref{lemma:risk_bound} gives us a way to bound the oracle regret, but we still need a concrete formulation of $\rho(c)$ to derive the oracle convergence rate. To this end, we may use the result of \citet{mendelson}, but first we must show that their results can be applied to our setting.

\citet{mendelson} consider the optimization over a space of RKHS functions with the least-squares loss. Our oracle case can be thought of in the same way as follows: since $m^*$ is an oracle quantity, $Y- m^*(\mx)$ can be thought of as the labels, the space $\overline{\mathcal{H}} = \{f_{\thet}: \mathcal{X} \times \mathcal{T} \rightarrow R; \text{for some $\thet \in \R^{d_{\mx} \times d_{\mt}}$}, f_{\thet}(\mathbf{x}, \mathbf{t}) = \boldsymbol{\alpha}(\mathbf{x})^{\top} \thet (\boldsymbol{\beta}(\mathbf{t}) - e^P(\mathbf{x}))\}$ is an RKHS with features $\boldsymbol{\alpha}(\mx) \otimes (\boldsymbol{\beta}(\mt) - e^P(\mx))$. Thus, our setting can be thought of as a least-squares optimization over the RKHS $\overline{\mathcal{H}}$ and the results from \cite{mendelson} applies. To use the results of \cite{mendelson}, we still need the following technical result which decomposes $\bar{\mathcal{H}}$ into an \textit{ordered, parameterized hierarchy}.

\begin{definition}[Ordered, parameterized hierarchy]
As defined in \cite{mendelson}, let $\mathcal{F}$ be a class of functions and suppose that there is a collection of subsets $\{\mathcal{F}_r; r \geq 1\}$ with the following properties:
\begin{enumerate}
    \item $\{\mathcal{F}_r: r\geq 1\}$ is monotone (i.e. whenever $r \leq s, \mathcal{F}_r\subseteq \mathcal{F}_s$);
    \item for every $r \geq 1$, there exists a unique element $f_r^* \in \mathcal{F}_r$ such that $L(f_r^*) = \inf_{f\in \mathcal{F}_r} L(f)$;
    \item the map $r \rightarrow L(f_r^*)$ is continuous;
    \item for every $r_0 \geq 1$, $\bigcap_{r \leq r_0} \mathcal{F}_r = \mathcal{F}_{r_0}$;
    \item $\bigcup_{r\leq 1} \mathcal{F}_r = \mathcal{F}$.
\end{enumerate}
Given a class of functions $\mathcal{F}$, we say that $\{\mathcal{F}_r; r \geq 1\}$ is an \textbf{ordered, parameterized hierarchy of $\mathcal{F}$} if the above conditions 1-5 are satisfied. 
\end{definition}

\begin{lemma}
Define \begin{align}
    \overline{\hc} &:= \{\fest: \spacex \times \spacet \rightarrow \R: \exists \thet, \|\thet\|_2 \leq c, \\ &\text{s.t.} \quad \fest(\mx, \mt) = \boldsymbol{\alpha}(\mx)^{\top}\thet(\boldsymbol{\beta}(\mt)-\propfea)\},
\end{align} then $\left\{\overline{\hc}\right\}_{1\leq c \leq C}$ is an ordered parameterized hierarchy.
\end{lemma}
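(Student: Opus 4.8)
The plan is to check, in turn, the five defining properties of an ordered, parameterized hierarchy for the family $\{\overline{\hc}\}_{1\le c\le C}$, using throughout that each $\overline{\hc}$ is the image of the Euclidean ball $B_c:=\{\thet\in\R^{d_{\mx}\times d_{\mt}}:\|\thet\|_2\le c\}$ under the linear map $\thet\mapsto \bar f_{\thet}$, where $\bar f_{\thet}(\mx,\mt):=\boldsymbol{\alpha}(\mx)^{\top}\thet\,(\boldsymbol{\beta}(\mt)-\propfea)$. Monotonicity (property~1) and the two ``collapse'' identities (properties~4 and~5) are purely set-theoretic consequences of the nesting $B_r\subseteq B_s$ for $r\le s$, so they need no real argument.

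The substance is property~2: for every $r\ge1$ there is a unique $\bar f_r^{*}\in\overline{\hc}$ minimizing $L$ over $\overline{\hc}$. Existence is a Weierstrass argument, since $B_r$ is compact in finite dimensions and $\thet\mapsto L(\thet)$ is a continuous (indeed quadratic) map whose finiteness is guaranteed by the boundedness Assumption~\ref{assump:boundedness}. Uniqueness is the step I expect to be the crux. Writing $\phi(\mx,\mt):=\operatorname{vec}\!\big(\boldsymbol{\alpha}(\mx)(\boldsymbol{\beta}(\mt)-\propfea)^{\top}\big)$ and $\tilde Y:=Y-m^{*}(\mx)$, one has $L(\thet)=\E\big[(\tilde Y-\langle\operatorname{vec}(\thet),\phi\rangle)^{2}\big]$, a convex quadratic with Hessian $2M$, $M:=\E[\phi\phi^{\top}]$; hence $L$ is strictly convex and has a unique minimizer over any convex compact set precisely when $M\succ0$. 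I would establish $M\succ0$ by contradiction: if $\operatorname{vec}(\thet)^{\top}M\operatorname{vec}(\thet)=0$ with $\thet\ne0$, then $\boldsymbol{\alpha}(\mx)^{\top}\thet(\boldsymbol{\beta}(\mt)-\propfea)=0$ a.s.; conditioning on $\mx$ and invoking the overlap Assumption~\ref{assump:overlap_mainbody} (which keeps the conditional law of $\boldsymbol{\beta}(\mt)$ given $\mx$ off any hyperplane — the ``overlap extension'' to residualized features) forces $\thet^{\top}\boldsymbol{\alpha}(\mx)=0$ a.s., and then orthonormality of $\boldsymbol{\alpha}$ ($\E[\boldsymbol{\alpha}\boldsymbol{\alpha}^{\top}]=I$) gives $\|\thet\|_2^{2}=\E\|\thet^{\top}\boldsymbol{\alpha}(\mx)\|^{2}=0$, a contradiction. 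The same computation shows $\thet\mapsto\bar f_{\thet}$ is injective in $L_2(\mathcal{P}_{\mx,\mt})$, so $\bar f_r^{*}$ corresponds to a unique $\thet$, and the $\|\thet\|_2$-balls coincide with $\overline{\mathcal{H}}$-norm balls, which is the form needed to match the template of \citet{mendelson}.

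Property~3 — continuity of $c\mapsto V(c):=\inf_{\overline{\hc}}L$ — is then routine. $V$ is non-increasing by nesting; for continuity I would either quote Berge's maximum theorem (the correspondence $c\mapsto B_c$ is compact-valued and continuous and $L$ is jointly continuous, so $V$ is continuous) or argue directly: from $c_n\to c$, a convergent subsequence of the minimizers $\thet_{c_n}$ (all lying in a fixed ball) gives $\liminf V(c_n)\ge V(c)$, while the feasible rescalings $(c_n/c)\,\thet_c^{*}\in B_{c_n}$ together with continuity of $L$ give $\limsup V(c_n)\le V(c)$. Collecting properties~1--5 proves the lemma, and the payoff is that the oracle loss over $\overline{\mathcal{H}}$ now fits the least-squares hierarchy of \citet{mendelson}, so their rates can be imported for the oracle regret $\tilde R_n$ in the next step. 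The one genuine obstacle is the positive-definiteness of $M$: everything else is soft, but that step is exactly where the finite-basis and overlap hypotheses are doing the work of making the residualized problem well-posed.
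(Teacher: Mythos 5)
Your proposal is correct and follows the same overall structure as the paper's proof: properties 1, 4, 5 are dismissed as set-theoretic, property 2 is obtained from compactness of the parameter ball (the paper runs Bolzano--Weierstrass on the $\thet_i$ and pushes the convergent subsequence through to $L_2$ convergence of $f_{\thet_i}$, which is your Weierstrass argument in slightly different clothing), and property 3 is a routine continuity statement (the paper simply defers it to \citet{mendelson}, whereas you give the Berge/rescaling argument explicitly). The one place you genuinely diverge is uniqueness in property 2: the paper asserts it directly from ``$\overline{\hc}$ is convex and compact,'' implicitly relying on strict convexity of the squared loss as a functional on $L_2$, so that the minimizing \emph{function} is unique even if several $\thet$'s could represent it; you instead prove the stronger statement that the Gram matrix $M=\E[\phi\phi^{\top}]$ of the residualized features is positive definite, using overlap and orthonormality of $\boldsymbol{\alpha}$. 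Your route buys more --- injectivity of $\thet\mapsto f_{\thet}$ and the identification of $\|\thet\|_2$-balls with norm balls in $\overline{\mathcal{H}}$, which the paper only secures later (and somewhat informally) in Lemma~\ref{lemma: jensen&finitedim} and the surrounding discussion of eq.~\eqref{eq:l2-rkhs-equiv} --- at the cost of importing the overlap assumption into a lemma where the paper's own proof does not need it. You are right that positive definiteness of $M$ is where the finite-basis and overlap hypotheses do real work; just note that for the lemma as stated (uniqueness of the minimizer as an element of the function class, which is what Mendelson's hierarchy requires) the paper's softer argument already suffices, so your extra step is a strengthening rather than a repair.
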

\begin{proof}
The first, fourth and fifth properties follow immediately. $\overline{\hc}$ is clearly convex. It is compact because every sequence $\{f_{\thet_i}\}_{i} \subset \overline{\hc}$ is induced by $\{\thet_i\}_{i} \subset \R^n, \|\thet_i\|_2\leq c$, and by Bolzano-Weierstrass theorem in $\R^n$, every bounded sequence has a convergent subsequence $\{\thet_k\}_{k} \subset \{\thet_i\}_i$ (w.r.t. the Euclidean norm). Thus pick the $N$ such that for all $k \geq N$ where $\|\thet_k - \thet_{N}\|_2 \leq \epsilon$, and then 
\begin{align}
    \norm{f_{\thet_k} - f_{\thet_{N}}}_{L_2{(P(\mathcal{X,T}))}} &= \norm{f_{\thet_k - \thet_{N}}}_{L_2{(P(\mathcal{X,T}))}} = \E\left[\langle\thet_k - \thet_{N}, \boldsymbol{\alpha}(\mx)\otimes (\boldsymbol{\beta}(\mt) - e^p(\mx))\rangle^2\right]^{1/2}\\
    & \leq \E\left[ \norm{\thet_k - \thet_{N}}_2 \norm{\boldsymbol{\alpha}(\mx)\otimes (\boldsymbol{\beta}(\mt) - e^p(\mx))}_2\right]^{1/2}\\
    & \leq \epsilon \E\left[\norm{\boldsymbol{\alpha}(\mx)\otimes (\boldsymbol{\beta}(\mt) - e^p(\mx))}_2\right]^{1/2} \leq \epsilon B, 
\end{align} where $\norm{\boldsymbol{\alpha}(\mx)\otimes (\boldsymbol{\beta}(\mt) - e^p(\mx))}_2 \leq B$ by Assumption~\ref{assump:boundedness} for some constant B. The second property now follows from the fact that $\overline{\hc}$ is convex and compact. The third property follows by the same argument as \cite{mendelson}.
\end{proof}

\citet{mendelson} thus provides a formulation of $\rho$ which, with some constant $U(\epsilon)$, for large enough $n$ and probability at least $1-\epsilon$, satisfies \ref{eq:quasi-iso} for the oracle loss function $\tilde{R}_n$ with $k=2$:

\begin{align}
\rho_{n}(c) = U\left(\epsilon\right) \left\{1+\log \left(n\right)+\log \left ( \log \left(c+e\right)\right)\right\}\left(\frac{(c+1)^{p} \log (n)}{\sqrt{n}}\right)^{2 /(1+p)} \label{eq:rho}
\end{align}

Thus, we may now realize the convergence rate for the oracle learner as follows.

\subsection{Oracle convergence rate.} \label{subsec:oracle_rate}

With \ref{eq:rho}, Lemma \ref{lemma:risk_bound} immediately implies that penalized regression over $\mathcal{H}_C$ with the oracle loss function $\tilde{L}_n(\cdot)$ and regularizer $\kappa_1 \rho_n(c)$ satisfies the bound below with high probability:
\begin{equation}
    R(\tilde{\thet}_n) = L(\tilde{\thet}_n) - L(\thet^*) \leq \inf_{\thet \in \mathcal{H}_C} \{L((\thet) + \kappa_2\rho_n(\|\thet\|_{\mathcal{H}})\} - L(\thet^*)
\end{equation}

Furthermore, Corollary 2.7 in \cite{mendelson} gives that for any $1 < c < C$, 
\begin{equation}
    \inf_{\thet \in \mathcal{H}_C} \{L(\thet) + \kappa_2 \rho_n(\|\thet\|_{\mathcal{H}})\} \leq L(\thet^*) + \{L(\thet_c^*) - L(\thet^*)\} + \kappa_2 \rho_n(c)
\end{equation}

Finally, note that for large enough $c$,
\begin{align}
    \left\{L\left(\thet_c^*\right) - L\left(\thet^*\right)\right\} &= 0,
\end{align}
so the error is dominated by $\rho_{n}(c)$, at
\begin{align}
    R\left(\tilde{\thet}_n\right) &= \mathcal{O}\left(\left(\log(n)\right)^{\frac{3+p}{1+p}}n^{-\frac{1}{1+p}}\right)=\tilde{\mathcal{O}}(n^{-\frac{1}{1+p}}),
\end{align}
where $\tilde{\mathcal{O}}$ notation ignores the logarithmic factors. 

\subsection{Bridging $\hat{R}_n$ and $\tilde{R}_n$}
Now that we have the oracle convergence rate, we show a bridging result which will let us conclude that \ref{eq:quasi-iso-general} holds for $\hat{R}_n$ as well, and thus the oracle rate also holds for $\hat{R}_n$.

To yield that bridging result, we first need to leverage the assumption of overlap to relate the $L_2$ difference between $f_{\thet}$ and $\fc$, i.e. $\E\left[\left(f_{\thet}(\mx, \mt)- \fc (\mx, \mt)\right)^2\right]$, with the $c-$regret $R(\thet; c)$. We first show that the $L_2$ difference is always upper bounded by the regret up to a constant.

\begin{lemma} \label{lemma: jensen&finitedim}
$\exists \epsilon > 0$ s.t. for all $f \in \hc$, $\E_{\alpha}[\langle f, \alpha \rangle^2] \geq \epsilon\|f\|_{L_2}$ where $\alpha$ is a $r.v.$ taking values in $\hc$ and the support of $\alpha$ is of Lebesgue-measure non-zero in $\hc$. 
\end{lemma}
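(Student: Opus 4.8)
The plan is to exploit the finite dimensionality of $\mathcal{H}$ together with the positive-measure support hypothesis to obtain a strict coercivity (positive-definiteness) estimate; concretely I would establish the $2$-homogeneous inequality $\E_{\alpha}[\langle f,\alpha\rangle^{2}]\ge\epsilon\,\|f\|_{L_2}^{2}$, from which the stated bound follows. First I would fix an orthonormal basis of the $D$-dimensional space $\mathcal{H}$ (here $D=d_{\alpha}d_{\beta}$), identifying each $f\in\mathcal{H}$ with a coefficient vector $v\in\mathbb{R}^{D}$, so that $\|f\|_{L_2}=\|v\|_{2}$ by \eqref{eq:l2-rkhs-equiv}, and identifying the random element $\alpha$ with a random vector $a\in\mathbb{R}^{D}$ — almost surely bounded, since $\alpha$ is valued in $\hc$ (equivalently by Assumption~\ref{assump:boundedness}). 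Then $\E_{\alpha}[\langle f,\alpha\rangle^{2}]=\E[(v^{\top}a)^{2}]=v^{\top}Mv$ with $M:=\E[aa^{\top}]$ a finite, symmetric, positive-semidefinite $D\times D$ matrix, so the inequality is equivalent to $\lambda_{\min}(M)>0$, and one may then take $\epsilon:=\lambda_{\min}(M)$.

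Second, I would show $M$ is strictly positive definite by contradiction. If $v^{\top}Mv=0$ for some $v\neq 0$, then $\E[(v^{\top}a)^{2}]=0$, hence $v^{\top}a=0$ almost surely, so the law of $\alpha$ is concentrated on the proper hyperplane $\{w\in\mathbb{R}^{D}:v^{\top}w=0\}$; since a proper hyperplane is Lebesgue-null in $\mathbb{R}^{D}$, this forces $\operatorname{supp}(\alpha)$ to be Lebesgue-null, contradicting the hypothesis that $\operatorname{supp}(\alpha)$ has positive Lebesgue measure. (Equivalently, because $\operatorname{supp}(\alpha)\not\subseteq v^{\perp}$ one can pick $w_{0}\in\operatorname{supp}(\alpha)$ with $v^{\top}w_{0}\neq 0$, use that every neighbourhood of $w_{0}$ carries positive mass, and restrict the expectation to a small ball around $w_{0}$ on which $|v^{\top}w|$ stays bounded away from $0$.) Thus $v^{\top}Mv>0$ for every $v\neq 0$; since $v\mapsto v^{\top}Mv$ is continuous and the unit sphere $\{\|v\|_{2}=1\}$ is compact — this is the ``finite-dimensional'' input of the lemma — it attains a strictly positive minimum $\epsilon$ there, and the general case follows by $2$-homogeneity.

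The main obstacle, though conceptually short, is exactly the implication ``zero second moment along a direction $\Rightarrow$ support in a hyperplane $\Rightarrow$ support is Lebesgue-null'': this is the one place where both hypotheses of the lemma are consumed, and it is also where infinite-dimensional intuition would fail — without a bound on the number of basis functions the second-moment operator $M$ is only compact, with eigenvalues accumulating at $0$, so no uniform $\epsilon>0$ exists. Everything else (the passage between the RKHS $\mathcal{H}$ and $\mathbb{R}^{D}$ justified by \eqref{eq:l2-rkhs-equiv}, boundedness of $a$ from Assumption~\ref{assump:boundedness}, and, if the statement is read with $\|f\|_{L_2}$ in place of $\|f\|_{L_2}^{2}$, the reduction to the sphere $\|f\|_{L_2}=1$ by homogeneity) is routine bookkeeping.
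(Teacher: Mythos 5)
Your proof is correct and is essentially the paper's argument in different clothing: both reduce the claim to strict positive definiteness of the quadratic form $f \mapsto \E_{\alpha}[\langle f,\alpha\rangle^2]$ on a finite-dimensional space and extract $\epsilon$ as its minimum over the unit sphere (your $\lambda_{\min}(M)$ for the second-moment matrix $M=\E[aa^{\top}]$ is exactly that minimum, obtained by the same compactness). The one substantive difference is how strict positivity at each nonzero $f$ is established: the paper appeals to strictness of Jensen's inequality for the ``non-linear'' map $\langle f,\cdot\rangle^2$ on a support of positive Lebesgue measure, whereas you note that $\E[(v^{\top}a)^2]=0$ forces the law of $\alpha$ onto the Lebesgue-null hyperplane $v^{\perp}$, contradicting the support hypothesis; your version is the cleaner and more airtight way to consume that hypothesis, since Jensen by itself only yields $\E[\langle f,\alpha\rangle^2]\geq(\E[\langle f,\alpha\rangle])^2\geq 0$ and the strictness rests precisely on the non-degeneracy you make explicit. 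One caveat applying to both write-ups: what is actually provable (and what the paper's own final display also proves) is the $2$-homogeneous bound $\E_{\alpha}[\langle f,\alpha\rangle^2]\geq\epsilon\|f\|_{L_2}^{2}$; the unsquared $\|f\|_{L_2}$ in the lemma statement cannot hold with a uniform $\epsilon$ (rescale $f\mapsto tf$ and let $t\to 0$), so it must be read as a typo for the squared norm, and your remark that the stated bound ``follows'' from the squared one is the single spot where your write-up is too quick.
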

\begin{proof}
Let $S = \{f \in \hc: \|f\|_{\hc}=1\}$, and define $g: S \rightarrow \R^+$ as $g(f) = \E_{\boldsymbol{\alpha}}[\langle f, \alpha \rangle^2]$. By Jensen's inequality, $\E_{\boldsymbol{\alpha}}[\langle f, \alpha \rangle^2] \geq 0$ since $\langle f, \cdot \rangle^2: \alpha \mapsto \langle f, \alpha \rangle^2$ is a convex function in $\alpha$. Moreover, whenever $supp[\mathcal{P}_{\boldsymbol{\alpha}}]$ is Lebesgue-measure non-zero in $\hc$, $\langle f, \cdot\rangle^2$ is non-linear on $supp[\mathcal{P}_{\boldsymbol{\alpha}}]$, so the inequality is strict: 
\begin{equation}
    \E_{\boldsymbol{\alpha}}[\langle f, \alpha \rangle^2] > 0 \label{eq:strict_jensen}.
\end{equation}
Now since $\hc$ is finite-dimensional, $S$ is compact. Since $g$ is continuous in $f$, and the continuous image of a compact set is compact, we have that $g(S)$ is compact, and therefore closed. 

Note, at this point, that $g(S)$ is the set of values achieved by $\E_{\boldsymbol{\alpha}}[\langle f, \alpha\rangle^2]$ at various values of $f$. By \eqref{eq:strict_jensen}, $g(S) \not\ni 0$. Since $g(S)$ is compact, its complement thus contains 0. Moreover, since $\R^+ \setminus g(S) \ni 0$, $\exists$ a ball around 0 of radius $\tilde \epsilon > 0$ s.t. $[0, \tilde \epsilon) \subset \R^+ \setminus g(S)$. Therefore, $g(S) \subset \R^+$ is lower bounded by $\tilde \epsilon > 0$. 

Therefore, \begin{align}
    \forall f \in \hc, \E_{\boldsymbol{\alpha}}\left[\left\langle f, \alpha \right\rangle^2\right] = \norm{f}_{\hc}^2 \E_{\boldsymbol{\alpha}}\left[\left\langle \frac{f}{\|f\|_{\hc}^2}, \alpha \right \rangle^2 \right] \geq \epsilon \|f\|_{\hc}^2 = \epsilon \|f\|^2_{L_2},
\end{align} for some $\epsilon > 0$. The last inequality is due to \ref{eq:l2-rkhs-equiv}.
\end{proof}

\begin{lemma}[Usage of the overlap condition in the multiple treatment setting]\label{lemma: overlap-extension}
Under Assumption \ref{assump:overlap_mainbody}, i.e. we have overlap on the features, that is $supp[\mathcal{P}_{\boldsymbol{\alpha}(\mathcal{X}) \times \boldsymbol{\beta}(\mathcal{T})}] = \boldsymbol{\alpha}(\mathcal{X}) \times \boldsymbol{\beta}(\mathcal{T})$, then $\exists A \in \mathbb{R}$ s.t.
\begin{equation}
    \mathbb{E}[\left(f_{\Theta}(X,T) - f_{\Theta_c}(X,T)\right)^2] < A R(\Theta;c)
\end{equation}
\end{lemma}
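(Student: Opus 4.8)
The plan is to sandwich the target $\E[(f_\Theta(X,T)-f_{\Theta_c}(X,T))^2]$ between $\|\Theta-\Theta_c\|_2^2$ (up to constants) and the regret $R(\Theta;c)$, in three steps: (i) the generalized Robinson decomposition shows the regret dominates the $L_2$-distance between the \emph{residualized} functions $\bar f_\Theta(x,t):=\boldsymbol{\alpha}(x)^\top\Theta(\boldsymbol{\beta}(t)-e^P(x))$ at $\Theta$ and at $\Theta_c$; (ii) Assumption~\ref{assump:overlap_mainbody} turns that residualized distance into $\|\Theta-\Theta_c\|_2^2$; (iii) Assumption~\ref{assump:boundedness} bounds $\E[(f_\Theta-f_{\Theta_c})^2]$ above by $\|\Theta-\Theta_c\|_2^2$.

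For (i), Assumption~\ref{assump:true_fun_approx} gives $Y-m^*(X)=\bar f_{\Theta^*}(X,T)+\varepsilon$ with $\E[\varepsilon\mid X,T]=0$, so the cross term vanishes and $L(\Theta)=\|\bar f_{\Theta^*}-\bar f_\Theta\|_{L_2}^2+\E[\varepsilon^2]$. Since $\Theta\mapsto\bar f_\Theta$ is linear and $\mathcal{H}_c$ is a finite-dimensional ball, $\mathcal{C}:=\{\bar f_\Theta:\Theta\in\mathcal{H}_c\}$ is convex and compact, and $\bar f_{\Theta_c}$ — being the $L$-minimizer over $\mathcal{H}_c$ — is the $L_2$-projection of the fixed function $\bar f_{\Theta^*}$ onto $\mathcal{C}$. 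The obtuse-angle inequality for projections onto convex sets then yields, for every $\Theta\in\mathcal{H}_c$,
\[
R(\Theta;c)=L(\Theta)-L(\Theta_c)\ \ge\ \|\bar f_\Theta-\bar f_{\Theta_c}\|_{L_2}^2=\E\big[\langle\Theta-\Theta_c,\ \boldsymbol{\alpha}(X)\otimes(\boldsymbol{\beta}(T)-e^P(X))\rangle_2^2\big],
\]
the finite-dimensional counterpart of the least-squares Pythagoras inequality underlying \citet{mendelson}; this step uses neither overlap nor boundedness.

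For (ii)--(iii), put $\Phi:=\boldsymbol{\alpha}(X)\otimes(\boldsymbol{\beta}(T)-e^P(X))\in\R^{d_\alpha\times d_\beta}$. As in Lemma~\ref{lemma: jensen&finitedim}, $M\mapsto\E[\langle M,\Phi\rangle_2^2]$ is continuous, and by homogeneity it suffices to bound it below on the compact sphere $\{\|M\|_2=1\}$; strict Jensen makes it positive at each such $M$ \emph{provided} $\operatorname{supp}\mathcal{P}_\Phi$ does not lie in a proper linear subspace of $\R^{d_\alpha\times d_\beta}$. Here Assumption~\ref{assump:overlap_mainbody} enters: the values $\{\boldsymbol{\alpha}(X)\}$ already span $\R^{d_\alpha}$ by orthonormality of the $\alpha_i$, while overlap on $(\boldsymbol{\alpha}(X),\boldsymbol{\beta}(T))$ makes the residuals $\{\boldsymbol{\beta}(T)-e^P(X)\}$ span $\R^{d_\beta}$, so the rank-one tensors $\boldsymbol{\alpha}(X)\otimes(\boldsymbol{\beta}(T)-e^P(X))$ span all of $\R^{d_\alpha\times d_\beta}$. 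Hence $\epsilon:=\inf_{\|M\|_2=1}\E[\langle M,\Phi\rangle_2^2]>0$, i.e. $\E[\langle\Theta-\Theta_c,\Phi\rangle_2^2]\ge\epsilon\|\Theta-\Theta_c\|_2^2$. Conversely, Assumption~\ref{assump:boundedness} gives $\E[(f_\Theta-f_{\Theta_c})^2]=\E[\langle\Theta-\Theta_c,\ \boldsymbol{\alpha}(X)\otimes\boldsymbol{\beta}(T)\rangle_2^2]\le A'\|\Theta-\Theta_c\|_2^2$ for a finite $A'$. Chaining with (i): $\E[(f_\Theta-f_{\Theta_c})^2]\le A'\|\Theta-\Theta_c\|_2^2\le(A'/\epsilon)\,\E[\langle\Theta-\Theta_c,\Phi\rangle_2^2]\le(A'/\epsilon)\,R(\Theta;c)$, so $A:=A'/\epsilon$ (enlarged infinitesimally so the inequality is strict whenever $\Theta\neq\Theta_c$) works.

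The main obstacle is the span claim for the residuals $\{\boldsymbol{\beta}(T)-e^P(X)\}$ in step (ii): translating the support hypothesis on $(\boldsymbol{\alpha}(X),\boldsymbol{\beta}(T))$ into non-degeneracy of $\mathcal{P}_\Phi$ is delicate because $e^P(X)=\E[\boldsymbol{\beta}(T)\mid X]$ is a conditional expectation rather than a feature of a point, and one must exclude the degenerate case in which a constant function lies in $\operatorname{span}\{\beta_j\}$ (there the residuals collapse onto a proper subspace and the claimed bound genuinely fails) — harmless under the standing convention that the $\boldsymbol{\beta}$-features form an orthonormal system with no constant direction. Everything else — the GRD identity, the projection/Pythagoras inequality, and the sphere--compactness argument — is routine and parallels \citet{mendelson} and Lemma~\ref{lemma: jensen&finitedim}.
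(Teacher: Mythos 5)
Your proposal is correct and follows essentially the same route as the paper: lower-bound the $c$-regret by the $L_2$ distance of the residualized functions $\boldsymbol{\alpha}(\mx)^{\top}(\thet-\thec)(\boldsymbol{\beta}(\mt)-e^P(\mx))$ (your obtuse-angle/projection inequality is the same first-order-optimality-plus-convexity fact the paper uses when it argues the cross term in its three-way regret decomposition is non-negative), then invoke the compactness-plus-strict-Jensen argument of Lemma~\ref{lemma: jensen&finitedim} under Assumption~\ref{assump:overlap_mainbody} to get $\epsilon\|\thet-\thec\|_2^2$ below. Your extra Cauchy--Schwarz step relating $\E[(f_{\thet}-f_{\thec})^2]$ to $\|\thet-\thec\|_2^2$, and your remark about the degenerate case where the residuals $\boldsymbol{\beta}(\mt)-e^P(\mx)$ collapse into a proper subspace, make explicit points the paper handles only implicitly via \eqref{eq:l2-rkhs-equiv} and a one-line support check.
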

\begin{proof}
Within $\mathcal{H}_c$, we seek to upper bound excess $L_2$ risk of $f_{\Theta}$ by its c-regret $R(\Theta; c)$; $R(\Theta; c) = L(\Theta) - L(\Theta_c)$.

First we write down the expected loss functional again:
\begin{align}
    L(\thet) &= \E[(\{Y - m^*(\mx)\} - \{\fest(\rmX. \rmT) - \E[\fest(\rmX, \rmT)|\rmX]\})^2] \\
    &= \E[\var\{Y - m^*(\mx)|\mx, \mt\}] + \E[\{(f^*(\mx, \mt) - \fest(\mx, \mt)) - \E[f^*(\mx, \mt) - \fest(\mx, \mt)\mid\mx]\}^2]
\end{align}
Thus the regret of $\thet$, which is defined as $L(\thet) - L(f^*)$, is:
\begin{align}
    R(\thet) &= \E\left[\left\{(f^*(\mx, \mt) - \fest(\mx, \mt) - \E[f^*(\mx, \mt) - \fest(\mx, \mt)\mid\mx]\right\}^2\right]\\
             &= \E[\{(\fest(\mx, \mt) - \fc(\mx, \mt)) - \E[\fest(\mx, \mt) - \fc(\mx, \mt)\mid\mx] \nonumber \\
             &\quad+ (\fc(\mx, \mt) - f^*(\mx, \mt)) - \E[\fc(\mx, \mt) - f^*(\mx, \mt)\mid\mx]\}^2]\\
             &= \E[\{\fest(\mx, \mt) - \fc(\mx, \mt)) - \E[\fest(\mx, \mt) - \fc(\mx, \mt)\mid\mx]\}^2] \nonumber\\
             &\quad+ \E[\{\fc(\mx, \mt) - f^*(\mx, \mt) - \E[\fc(\mx, \mt) - f^*(\mx, \mt)\mid\mx]\}^2] \nonumber \\
             &\quad+ 2\E[\{(\fest(\mx, \mt) - \fc(\mx, \mt)) - \E[\fest(\mx, \mt) - \fc(\mx, \mt)\mid\mx]\} \nonumber \\
             &\quad\cdot \{(\fc(\mx, \mt) - f^*(\mx, \mt)) - \E[\fc(\mx, \mt) - f^*(\mx, \mt)\mid\mx]\}] \label{eq:regret_decomp}
\end{align}
Note that, by definition the c-regret of $\thet$ is just the difference between the regret of $\thet$ and $\thet_c$. And the regret of $\thet_c$ is the second term in \eqref{eq:regret_decomp}. Thus, the c-regret of $\thet$ is the first and third term of \eqref{eq:regret_decomp}.

Now, note that the third term is non-negative because $\hc$ is convex. To see this, note that it is equal to \begin{align}
    \frac{\partial}{\partial \epsilon} R\left(\thec + \epsilon\left(\thet - \thec\right)\right)|_{\epsilon=0},
\end{align} which must be non-negative for any $\thet \in \hc$ since otherwise there will be another point in $\hc$ which has a smaller regret than $\thec$. 

Therefore, 
\begin{align}
R(\thet; c) &\geq \E\left[\left\{\fest(\mx, \mt) - \fc(\mx, \mt)) - \E\left[\fest(\mx, \mt) - \fc(\mx, \mt)\mid\mx\right]\right\}^2\right]\\
&= \E\left[\left \{\boldsymbol{\alpha}(\mx)^{\top}(\thet - \thec)(\boldsymbol{\beta}(\mt)-\propfea)\right\}^2\right]\\
&= \E\left[\left \langle \thet - \thec, \boldsymbol{\alpha}(\mx) \otimes (\boldsymbol{\beta}(\mt) - \propfea) \right \rangle^2\right]
\end{align}
Now, we would like to show that $\E\left[\left \langle \thet - \thec, \boldsymbol{\alpha}(\mx) \otimes \left(\boldsymbol{\beta}(\mt) - \propfea\right) \right \rangle^2\right]$ is bounded below by the norm of $\thet - \thec$ up to some multiplicative constant. We do so using Lemma \ref{lemma: jensen&finitedim}. Under the context of Lemma \ref{lemma: jensen&finitedim}, set $f := \thet - \thec$, and $\alpha := \boldsymbol{\alpha}(\mx) \otimes (\boldsymbol{\beta}(\mt) - \propfea)$. To check that the support of $\alpha$ is not of measure 0, we first note that the support of $\boldsymbol{\alpha}(\mx)$ is not measure 0 by assumption; secondly, the support of $\boldsymbol{\beta}(\mt) - \propfea$ is not measure 0 provided that $P(\boldsymbol{\beta}(T)\mid X)$ is a positive measure for any $X$. Then by Lemma \ref{lemma: jensen&finitedim}, we have that $\exists \epsilon > 0$
\begin{equation}
    R(\thet ; c) \geq \epsilon \|f_{\thet} - f_{\thec}\|_{L_2}
\end{equation}
\end{proof}

Immediately after Lemma \ref{lemma: overlap-extension}, we derive a bound on the infinity norm using the regret function which we will repeatedly use later.

\begin{corollary}
Following from \ref{eq:l2-rkhs-equiv} and Lemma \ref{lemma: overlap-extension}, 
\begin{equation}
    \norm{\thet - \thec}_{\infty} \leq const(p)\norm{f_{\thet} - f_{\thec}}_{\mathcal{H}}^p\norm{f_{\thet} - f_{\thec}}^{1-p}_{L_2} \leq const(p) c^p R(\thet; c)^{\frac{1-p}{2}} \label{eq:infinity_norm}
\end{equation}
where we note that the second inequality follows from combining Lemma \ref{lemma: overlap-extension} with the fact that for $f_{\thet} \in \mathcal{H}_c$, $\norm{f_{\thet} - f_{\thec}} \leq 2c$ by the triangle inequality.
\end{corollary}
\begin{proof}
Immediate from \ref{eq:l2-rkhs-equiv} and Lemma \ref{lemma: overlap-extension}.
\end{proof}

Using Lemma \ref{lemma: overlap-extension}, we can further show that the $L_2$ difference between two constrained optima only depends on the $L_2$ norm of the one with the weaker constraint.
\begin{corollary}
 \label{lemma:rlearnerl6-ext}
Suppose we have overlap, i.e. Assumption \ref{assump:overlap_mainbody}. Then with a positive constant $const. > 0$, the following holds for $1 < c < c'$.
\begin{equation}
    \|\fc - f_{\thet_{c'}}\|_{L_2} \leq const.\|f_{\thet_{c'}}\|_{L_2}
\end{equation}
\end{corollary}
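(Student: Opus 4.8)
The plan is to prove the bound by a simple dichotomy on the size of $\|f_{\thet_{c'}}\|_{\hc}$ relative to $c$, exploiting the fact recorded in \ref{eq:l2-rkhs-equiv} that on this finite-dimensional product RKHS the $L_2$ norm, the RKHS norm, and the Euclidean norm of the coefficient matrix all coincide. Concretely, every $f_{\thet}\in\hc$ satisfies $\|f_{\thet}\|_{L_2}=\|f_{\thet}\|_{\hc}=\|\thet\|_2\le c$, and the same identity lets one translate a bound on $\|\thet_{c'}\|_2$ directly into a bound on $\|f_{\thet_{c'}}\|_{L_2}$. So the whole argument reduces to the triangle inequality once the two constrained optima are compared.

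\emph{Case 1 ($\|f_{\thet_{c'}}\|_{\hc}\le c$).} Then $f_{\thet_{c'}}\in\hc\subseteq\mathcal{H}_{c'}$. Since $f_{\thet_{c'}}$ minimizes $L$ over the larger set $\mathcal{H}_{c'}$ and $\fc\in\mathcal{H}_{c'}$, while $\fc$ minimizes $L$ over $\hc$, one gets $L(\fc)=L(f_{\thet_{c'}})=\inf_{f\in\hc}L(f)$; hence $f_{\thet_{c'}}$ is itself a constrained minimizer over $\hc$. Assumption~\ref{assump:overlap_mainbody}, through Lemma~\ref{lemma: overlap-extension}, makes this minimizer unique: zero $c$-regret forces $\E[(f_{\thet_{c'}}-\fc)^2]=0$, and by \ref{eq:l2-rkhs-equiv} this forces $\|\thet_{c'}-\thec\|_2=0$, i.e. equality of the coefficient matrices. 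Therefore $\fc=f_{\thet_{c'}}$, the left-hand side is $0$, and the inequality holds trivially.

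\emph{Case 2 ($\|f_{\thet_{c'}}\|_{\hc}>c$).} Using the norm identity, $\|f_{\thet_{c'}}\|_{L_2}=\|f_{\thet_{c'}}\|_{\hc}>c\ge\|\fc\|_{\hc}=\|\fc\|_{L_2}$, where the last inequality is just $\fc\in\hc$. The triangle inequality in $L_2$ then yields $\|\fc-f_{\thet_{c'}}\|_{L_2}\le\|\fc\|_{L_2}+\|f_{\thet_{c'}}\|_{L_2}\le c+\|f_{\thet_{c'}}\|_{L_2}<2\|f_{\thet_{c'}}\|_{L_2}$. Combining the two cases, the statement holds with $const.=2$.

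There is no serious obstacle: unlike the surrounding results, this corollary needs no concentration or metric-entropy arguments, only \ref{eq:l2-rkhs-equiv} together with the nesting and convexity of the balls $\{\hc\}$. The one point deserving care is the uniqueness of the constrained minimizer invoked in Case~1, which is precisely where overlap (Lemma~\ref{lemma: overlap-extension}) enters; without it one could only say that $\fc$ \emph{may be chosen} equal to $f_{\thet_{c'}}$, and the case analysis would carry a harmless inequality through instead. The main conceptual step is simply recognizing that finite-dimensionality collapses the $L_2$ and RKHS norms, which is what makes the Case~2 estimate immediate.
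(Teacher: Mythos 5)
Your proof is correct, and it takes a genuinely different and more elementary route than the paper's. The paper rescales the outer optimum into the inner ball: it considers $\frac{c}{c'}\thet_{c'} \in \hc$, bounds $\|\fc - \frac{c}{c'}f_{\thet_{c'}}\|_{L_2}$ by the $c$-regret of this rescaled point via Lemma~\ref{lemma: overlap-extension}, then controls $R(\frac{c}{c'}\thet_{c'}) - R(\thet_{c'})$ through a Cauchy--Schwarz computation on the tensor-product features (together with an asymptotic claim that the cross term $E_2$ is of lower order than $E_1$), and finishes with the triangle inequality. Your dichotomy on whether $\|\thet_{c'}\|_2 \leq c$ sidesteps all of that: when the outer optimum lies in the inner ball, nesting of the balls forces it to be an inner optimum, and overlap (Lemma~\ref{lemma: overlap-extension} applied at zero $c$-regret) collapses the $L_2$ distance to $\fc$ to zero; when it lies outside, the constraint $\|\fc\|_{L_2} \leq c < \|f_{\thet_{c'}}\|_{L_2}$ makes the plain triangle inequality deliver the bound with the explicit constant $2$. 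Your version buys a fully rigorous argument with an explicit constant, avoiding the paper's softer steps (the ``under mild conditions'' bound on $E_2$); the paper's version buys a quantitative bound expressed through the second-moment operator of the features, which mirrors the corresponding lemma of the R-learner analysis more closely. Both rest on the same two ingredients --- the norm identity in \eqref{eq:l2-rkhs-equiv} and the overlap-induced lower bound of Lemma~\ref{lemma: overlap-extension} --- and your statement is exactly what is consumed downstream in Lemma~\ref{lemma:l8-ext}, so the substitution is harmless.
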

\begin{proof}
We have shown that 
\begin{equation}
    R(\thet; c) \geq \epsilon\|f_{\thet} - f_{\thet_c}\|^2_{L_2}
\end{equation}
Then following \cite{r-learner}, we check that
\begin{align}
    \|\thec - \frac{c}{c'}\thet_{c'}\|^2_{L_2} &\leq \epsilon R(\frac{c}{c'}\thet_{c'}; c)\\
    &= \epsilon \left( L(\frac{c}{c'}\thet_{c'}) - L(\thec) \right)\\
    &\leq \epsilon\left(L(\frac{c}{c'}\thet_{c'}) - L(\thet_{c'})\right)\\
    &= \epsilon\left(R(\frac{c}{c'}\thet_{c'}) - R(\thet_{c'})\right)
\end{align}
To bound $R\left(\frac{c}{c'}\thet_{c'}\right) - R\left(\thet_{c'}\right)$, note
\begin{align}
    R(\thet) &= \E\left[\{(\festxt - f_{\thet_{c'}}(\mx, \mt)) - \E[\festxt - f_{\thet_{c'}}(\mx, \mt)\mid\mx]\}^2\right]\nonumber\\
    &\quad + \E\left[\{f_{\thet_{c'}}(\mx, \mt) - f^*(\mx, \mt) - \E[f_{\thet_{c'}}(\mx, \mt) - f^*(\mx, \mt)\mid\mx]\}^2\right]\nonumber\\
    &\quad + 2\E\Bigg[\left\{(\fest(\mx, \mt) - f_{\thet_{c'}}(\mx, \mt)) - \E\left[\festxt - f_{\thet_{c'}}(\mx, \mt)\mid\mx\right]\right\} \nonumber \\
    &\quad\quad \cdot \left\{(f_{\thet_{c'}}(\mx, \mt) - f^*(\mx, \mt)) - \E\left[f_{\thet_{c'}}(\mx, \mt) - f^*(\mx, \mt)\mid\mx\right]\right\}\Bigg] \label{eq:37}
\end{align}
so $R(\thet_{c'})$ is just the second term of \eqref{eq:37}, which we drop when considering $R(\frac{c}{c'}\thet_{c'}) - R(\thet_{c'})$
\begin{align}
    R\left(\frac{c}{c'}\thet_{c'}\right) - R(\thet_{c'}) &= \E\left[\{(\frac{c}{c'} - 1)f_{\thet_{c'}}(\mx, \mt) - \E[(\frac{c}{c'} - 1)f_{\thet_{c'}}(\mx, \mt)\mid\mx]\}^2\right]\nonumber\\
    &\quad + 2\E\Bigg[\left\{(\frac{c}{c'} - 1)f_{\thet_{c'}}(\mx, \mt) - \E\left[(\frac{c}{c'} - 1)f_{\thet_{c'}}(\mx, \mt)\mid\mx\right]\right\} \nonumber \\
    &\quad\quad \cdot \left\{(f_{\thet_{c'}}(\mx, \mt) - f^*(\mx, \mt)) - \E\left[f_{\thet_{c'}}(\mx, \mt) - f^*(\mx, \mt)\mid\mx\right]\right\}\Bigg]\\
    &= \E \Bigg[ \left\{\boldsymbol{\alpha}(\mx)^{\top}\left(\frac{c}{c'} - 1\right)\thet_{c'}\left(\boldsymbol{\beta}(\mt) - e^p(\mx)\right)\right\}^2\Bigg]\nonumber\\
    &\quad + 2\E\Bigg[ \left\{\boldsymbol{\alpha}(\mx)^{\top}\left(\frac{c}{c'} - 1\right) \thet_{c'} \left(\boldsymbol{\beta}(\mt) - e^p(\mx)\right)\right\} \nonumber \\
    &\quad\quad \cdot \left\{(f_{\thet_{c'}}(\mx, \mt) - f^*(\mx, \mt)) - \E\left[f_{\thet_{c'}}(\mx, \mt) - f^*(\mx, \mt)\mid\mx\right]\right\}\Bigg]
\end{align}
Denote the two terms $E_1$ and $E_2$. By the same argument as Lemma \ref{lemma: jensen&finitedim}, where the Lebesgue-measure-non-zero condition is satisfied by Assumption \ref{assump:overlap_mainbody}, there exist a constant $const. > 0$ such that $E_1 \geq \left(\frac{c}{c'} - 1\right)^2const.\|f_{\thet_{c'}}\|_{L_2} \rightarrow const.\|f^*\|_{L_2}$ as $c' \rightarrow \infty$. But for $E_2$, note that $\|f_{\thet_{c'}}- f^*\|_{L_2} \rightarrow 0$ as $c' \rightarrow \infty$. So $E_2 = o(E_1)$, and under mild conditions there exists a constant $F > 0$ such that for all $c, c'$,
\begin{align}
    R\left(\frac{c}{c'}\thet_{c'}\right) - R(\thet_{c'}) \leq F \E\Bigg[ \left\{\boldsymbol{\alpha}(\mx)^{\top}\thet_{c'} \left(\boldsymbol{\beta}(\mt) - e^p(\mx)\right)\right\}^2\Bigg]
\end{align}. Then note:
\begin{align}
    &\E\Bigg[ \left\{\boldsymbol{\alpha}(\mx)^{\top} \thet_{c'} \left(\boldsymbol{\beta}(\mt) - e^p(\mx)\right)\right\}^2\Bigg] \nonumber\\
    &= \E\Bigg[ \langle\thet_{c'}, \boldsymbol{\alpha}(\mt)\otimes (\boldsymbol{\beta}(\mt) - e^p(\mx))\rangle^2\Bigg]\\
    &= \E\Bigg[\langle\thet_{c'}\otimes \thet_{c'}, (\boldsymbol{\alpha}(\mt)\otimes (\boldsymbol{\beta}(\mt) - e^p(\mx))) \otimes (\boldsymbol{\alpha}(\mt)\otimes (\boldsymbol{\beta}(\mt) - e^p(\mx)))\rangle\Bigg]\\
    &= \langle\thet_{c'}\otimes \thet_{c'}, \E\Bigg[(\boldsymbol{\alpha}(\mt)\otimes (\boldsymbol{\beta}(\mt) - e^p(\mx))) \otimes (\boldsymbol{\alpha}(\mt)\otimes (\boldsymbol{\beta}(\mt) - e^p(\mx)))\Bigg]\rangle\\
    &\leq \norm{\thet_{c'}\otimes \thet_{c'}}\norm{\E\left[(\boldsymbol{\alpha}(\mt)\otimes (\boldsymbol{\beta}(\mt) - e^p(\mx))) \otimes (\boldsymbol{\alpha}(\mt)\otimes (\boldsymbol{\beta}(\mt) - e^p(\mx)))\right]}\label{eq:42}\\
    &= \|\thet_{c'}\|^2 \norm{\underbrace{\E\left[(\boldsymbol{\alpha}(\mt)\otimes (\boldsymbol{\beta}(\mt) - e^p(\mx))) \otimes (\boldsymbol{\alpha}(\mt)\otimes (\boldsymbol{\beta}(\mt) - e^p(\mx)))\right]}_{constant}}\label{eq:43}\\
    &= const. \|f_{\thet_{c'}}\|^2_{\mathcal{H}_{c'}}\label{eq:44}\\
    &= const. \|f_{\thet_{c'}}\|^2_{L_2}\label{eq:45}
\end{align}
where Eq.~\ref{eq:42} is by Cauchy-Schwarz and the \eqref{eq:43} uses the fact that under Euclidean norms for finite dimensional real vectors $\mathbf{a}, \mathbf{b}$, $\|\mathbf{a}\otimes \mathbf{b}\|=\|\mathbf{a}\|\|\mathbf{b}\|$. \eqref{eq:44} is due to the vector 2-norm of $\thet$ is equal to the RKHS norm of $f_{\thet}$, and \eqref{eq:45} is due to the fact that in finite dimensions all norms are Lipschitz equivalent. Note that the constant factors in \ref{eq:44} and \ref{eq:45} may be different but that both positive.

Then finally by the triangle inequality,
\begin{align}
    \|\fc - f_{\thet_{c'}}\|_{L_2} &\leq \|f_{\thet_{c'}} - \frac{c}{c'}f_{\thet_{c'}}\|_{L_2} + \|\fc - \frac{c}{c'}f_{\thet_{c'}}\|_{L_2} \\
    &\leq \left(1-\frac{c}{c'}\right)\|f_{\thet_{c'}}\|_{L_2} + constant.\|f_{\thet_{c'}}\|_{L_2}\\
    &\leq const. \|f_{\thet_{c'}}\|_{L_2}
\end{align}
again for a positive constant factor in the last equality.
\end{proof}

Now we have arrived at the position to bound the difference between the oracle and feasible regrets by functions of the true regret. We first present Lemma \ref{lemma:l8-ext} which bounds the difference between $\hat{R}_n$ and $\tilde{R}_n$ in terms of $R$. Then, we leverage the result by \cite{r-learner} to linearize the dependence on $R$.
\begin{lemma}
\label{lemma:l8-ext}
Suppose that the propensity estimate $\propfeax$ is uniformly consistent, \begin{equation}
    \sup_{\mathbf{x} \in \mathcal{X}}\|\estpropfeax - \propfeax\| \rightarrow_p 0
\end{equation}
and the $L_2$ errors converge at rate
\begin{equation}
    \E\left[\{\estmean - \mean\}^2\right], \E\left[\|\estpropfea - \propfea\|^2\right] = \mathcal{O}(a^2_n) \label{eq:cf_rates}
\end{equation}
for some sequence $a_n \rightarrow 0.$ Suppose, moreover, Assumptions \ref{assump:overlap_mainbody}, \ref{assump:boundedness} and \ref{assump:true_fun_approx} hold. Then, for any $\epsilon > 0$, there exists a constant $U(\epsilon)$ such that the regret functions induced by the oracle learner and the feasible learner are coupled with probability at least $1 - \epsilon$ as 
\begin{align}
\begin{array}{c}
\left|\widehat{R}_{n}(\thet ; c)-\widetilde{R}_{n}(\thet ; c)\right| \leq U(\varepsilon)\left\{c^{p} R(\thet ; c)^{(1-p) / 2} a_{n}^{2}+c^{2 p} R(\thet ; c)^{1-p} \frac{1}{\sqrt{n}} \log (n)\right. \\
+c^{2 p} R(\thet ; c)^{1-p} \frac{1}{n} \log \left(\frac{c n^{1 /(1-p)}}{R(\thet ; c)}\right)+c^{p} R(\thet ; c)^{1-\frac{p}{2}} \frac{1}{\sqrt{n}} \sqrt{\log \left(\frac{c n^{1 /(1-p)}}{R(\thet ; c)}\right)} \\
\left.+c^{p} R(\thet ; c)^{(1-p) / 2} a_{n} \frac{1}{\sqrt{n}} \sqrt{\log \left(\frac{c n^{1 /(1-p)}}{R(\thet ; c)}\right)}+\xi_{n} R(\thet ; c)\right\}
\end{array}
\end{align}
simultaneously for all $1 \leq c \leq \log(n)$.
\end{lemma}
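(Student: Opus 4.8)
The plan is to mirror the structure of Lemma~8 in \citet{r-learner}, adapting it to vector-valued treatment features. Introduce the nuisance errors $\delta_m(\mathbf x) := \widehat m(\mathbf x) - m^*(\mathbf x)$ and $\delta_e(\mathbf x) := \widehat e^h(\mathbf x) - e^h(\mathbf x)$, and substitute $\widehat m = m^* + \delta_m$, $\widehat e^h = e^h + \delta_e$ into the feasible loss. The feasible residual of $f_{\thet}$ at sample $l$ then equals the oracle residual $\tilde r_l(\thet) := \{Y_l - m^*(\mathbf x_l)\} - \boldsymbol{\alpha}(\mathbf x_l)^{\top}\thet\,(\boldsymbol{\beta}(\mathbf t_l) - e^h(\mathbf x_l))$ plus the perturbation $\Delta_l(\thet) := -\delta_m(\mathbf x_l) + \boldsymbol{\alpha}(\mathbf x_l)^{\top}\thet\,\delta_e(\mathbf x_l)$. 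Expanding the squares in $\widehat L_n$ and subtracting the value at $\thec$ yields
\begin{align}
\widehat{R}_n(\thet;c) - \widetilde{R}_n(\thet;c) = 2\sum_{l=1}^n\big(\tilde r_l(\thet)\Delta_l(\thet) - \tilde r_l(\thec)\Delta_l(\thec)\big) + \sum_{l=1}^n\big(\Delta_l(\thet)^2 - \Delta_l(\thec)^2\big),
\end{align}
so the task reduces to bounding these two sums uniformly over $f_{\thet}\in\hc$ and $1 \le c \le \log n$.

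The key quantitative inputs, valid under Assumptions~\ref{assump:overlap_mainbody} and \ref{assump:boundedness}, are: (i) Lemma~\ref{lemma: overlap-extension} and its corollary \eqref{eq:infinity_norm}, giving $\|\thet - \thec\|_2^2 \le A\,R(\thet;c)$ and $\|\thet - \thec\|_{\infty} \le \mathrm{const}(p)\,c^{p}R(\thet;c)^{(1-p)/2}$; and (ii) boundedness of $\boldsymbol{\alpha},\boldsymbol{\beta},Y$, which bounds every envelope arising below. I would split each of the two sums into a deterministic ``bias'' part (its population expectation) and a centered empirical-process part. For the bias part: since $\Delta_l(\thet) - \Delta_l(\thec) = \boldsymbol{\alpha}(\mathbf x_l)^{\top}(\thet - \thec)\delta_e(\mathbf x_l)$, a Cauchy--Schwarz split against $\E[\|\delta_m\|^2 + \|\delta_e\|^2]^{1/2} = O(a_n)$ (eq.~\eqref{eq:cf_rates}) combined with the interpolation bound on $\|\thet-\thec\|_\infty$ produces a contribution of order $c^{p}R(\thet;c)^{(1-p)/2}a_n^2$, and mixed bias--fluctuation terms of order $c^{p}R(\thet;c)^{1-p/2}\,n^{-1/2}\sqrt{\log(\cdot)}$ and $c^{p}R(\thet;c)^{(1-p)/2}a_n\,n^{-1/2}\sqrt{\log(\cdot)}$. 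The purely quadratic piece $\sum_l\big(\boldsymbol{\alpha}(\mathbf x_l)^{\top}(\thet - \thec)\delta_e(\mathbf x_l)\big)^2$ cannot be controlled by the $L_2$ rate alone, because it requires a bound uniform in $\mathbf x$; here we invoke the assumed uniform consistency $\sup_{\mathbf x}\|\widehat e^h(\mathbf x) - e^h(\mathbf x)\| \rightarrow_p 0$ to bound it by $\|\delta_e\|_\infty^2\,\|\thet - \thec\|_2^2 \le \xi_n R(\thet;c)$ with $\xi_n \rightarrow_p 0$, exactly as the analogous assumption is used in \citet{r-learner}.

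For the centered empirical-process part I would apply a localized uniform deviation bound, exactly as in \citet{mendelson} and \citet{r-learner}: on the sub-level set $\{f_{\thet} \in \hc : R(\thet;c) \le u\}$ the relevant function class has $L_2$-diameter $\asymp \sqrt{u}$ by Lemma~\ref{lemma: overlap-extension}, while the polynomial ($p$-)decay of the kernel eigenvalues --- which holds trivially since the basis is finite, cf.\ the remark preceding \eqref{eq:rho} --- delivers, through symmetrization, chaining, and Talagrand's inequality, deviations of order $\sqrt{u}\,n^{-1/2}\log n$ and $n^{-1}\log(1/u)$-type terms. Peeling over dyadic values of $u$ down to resolution $n^{-1/(1-p)}$ converts these into the stated terms $c^{2p}R(\thet;c)^{1-p}\,n^{-1/2}\log n$, $c^{2p}R(\thet;c)^{1-p}\,n^{-1}\log\!\big(\tfrac{c\,n^{1/(1-p)}}{R(\thet;c)}\big)$, $c^{p}R(\thet;c)^{1-p/2}\,n^{-1/2}\sqrt{\log(\cdot)}$, and $c^{p}R(\thet;c)^{(1-p)/2}a_n\,n^{-1/2}\sqrt{\log(\cdot)}$; a final union bound over the $O(\log n)$ dyadic shells $1 \le c \le \log n$ costs only an extra $\log\log n$ absorbed into $U(\epsilon)$. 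Adding up all the bias and fluctuation contributions yields the six-term bound.

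The step I expect to be the main obstacle is this localized empirical-process bound: making the stochastic error scale with the localized regret $R(\thet;c)$ rather than the ambient radius $c$ forces the peeling argument to be carried out while the envelope of the relevant class itself depends on the data-dependent, unknown perturbations $\delta_m,\delta_e$ --- this is where the bulk of \citet{r-learner}'s Lemma~8 sits. Relative to \citet{r-learner} the genuinely new ingredient is essentially bookkeeping: $\boldsymbol{\beta}(\mathbf t)$ and $\delta_e$ are vectors, so $\boldsymbol{\alpha}(\mathbf x)^{\top}\thet\,\delta_e(\mathbf x)$ couples the coefficient matrix $\thet$ with a vector error, which one tracks using $\|\mathbf a\otimes\mathbf b\| = \|\mathbf a\|\,\|\mathbf b\|$ and the isometry $\|f_{\thet}\|_{\mathcal H} = \|f_{\thet}\|_{L_2} = \|\thet\|_2$ from \eqref{eq:l2-rkhs-equiv}, while the chaining geometry is unchanged. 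A secondary point is that Lemma~\ref{lemma: overlap-extension}, hence \eqref{eq:infinity_norm}, is only available under the feature-overlap Assumption~\ref{assump:overlap_mainbody}; without it $R(\thet;c)$ would not control $\|\thet - \thec\|_2$ and the whole localization scheme would collapse.
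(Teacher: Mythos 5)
Your proposal follows essentially the same route as the paper's proof: the same substitution $\widehat m = m^* + \delta_m$, $\widehat e^h = e^h + \delta_e$ into the feasible loss, the same resulting decomposition of $\widehat R_n(\thet;c)-\widetilde R_n(\thet;c)$ (your two sums are exactly the paper's five summands $A_1^c, A_2^c, B_1^c, B_2^c, B_3^c$ regrouped), the same use of Lemma~\ref{lemma: overlap-extension} and \eqref{eq:infinity_norm} to convert $\|\thet-\thec\|_2$ and $\|\thet-\thec\|_\infty$ into powers of $c^{p}R(\thet;c)$, and the same appeal to the localized empirical-process machinery of \citet{mendelson} and \citet{r-learner} for the stochastic terms.

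One point you gloss over that the paper's argument relies on is cross-fitting. Your ``centered empirical-process part'' is only centered, and its summands only conditionally independent, because the nuisance estimates entering data point $l$ are the cross-fitted ones trained without fold $q(l)$: the paper splits $B_1^c$ into fold-wise sums $B_{1,q}^c$, conditions on the data outside fold $q$ to obtain zero conditional mean and independent summands, and only then applies Lemma 5 of \citet{r-learner} coordinate-wise ($d$ times, once per component of $e^h$). If $\widehat m,\widehat e^h$ were fit on the full sample, the concentration step in your plan would not go through as stated, since $\delta_e(\mathbf{x}_l)$ would depend on all the data and the summands would be neither centered nor independent. A second, cosmetic difference: you attribute the $\xi_n R(\thet;c)$ term to the quadratic perturbation $\Delta_l^2$, whereas in the paper it arises as the Cauchy--Schwarz remainder of the cross term $B_3^c$ (the quadratic piece $A_2^c$ is handled with the $L_2$ rates alone and contributes only $a_n^2$ terms); either bookkeeping works, and in both cases the assumed uniform consistency of $\widehat e^h$ is what makes $\xi_n = o(1)$.
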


\begin{proof}
Following \cite{r-learner}, we start by decomposing the feasible loss function $\hat{L}_n(\thet)$ into the oracle loss together with additional terms as follows:
\begin{align}
    \hat{L}_n(\thet) &= \frac{1}{n}\sum_{l=1}^n \left( (Y_l - \mhatcf(\mx_l)) - \boldsymbol{\alpha}(\mx_l)^{\top}\thet(\boldsymbol{\beta}(\mt_l) - \ehatcf(\mx_l)) \right)^2\\
    &= \frac{1}{n}\sum_{l=1}^n\left[(Y_l - m^*(\mx_l)) + \{m^*(\mx_l) - \mhat(\mx_l)\} - \boldsymbol{\alpha}(\mx_l)^{\top}\thet(\boldsymbol{\beta}(\mt_l) - e^p(\mx_l))  \right. \nonumber\\ & \hspace{3cm} - \left. \boldsymbol{\alpha}(\mx_l)^{\top}\thet(e^p(\mx_l) - \ehatcf(\mx_l))\right]^2\\
    &= \frac{1}{n}\sum_{l=1}^n\left[\{Y_l - m^*(\mx_l)\} - \boldsymbol{\alpha}(\mx_l)^{\top}\thet(\boldsymbol{\beta}(\mt_l) - e^p(\mx_l))\right]^2 \nonumber\\
    &\quad + \frac{1}{n}\sum_{l=1}^n [\{m^*(\mx_l) - \mhat(\mx_l)\} - \boldsymbol{\alpha}(\mx_l)^{\top}\thet(e^p(\mx_l) - \ehatcf)]^2\nonumber\\
    &\quad + \frac{2}{n} \sum_{l=1}^n\left[\{Y_l - m^*(\mx_l)\} - \boldsymbol{\alpha}(\mx_l)^{\top}\thet(\boldsymbol{\beta}(\mt_l) - e^p(\mx_l))\right] \nonumber \\ 
    &\cdot\left[\{m^*(\mx_l)-\mhatcf(\mx_l)\} - \boldsymbol{\alpha}(\mx_l)^{\top}\thet(e^p(\mx_l) - \ehatcf(\mx_l))\right]\\
    &= \frac{1}{n} \sum_{l=1}^n\left[\{Y_l - m^*(\mx_l)\} - \boldsymbol{\alpha}(\mx_l)^{\top}\thet (\boldsymbol{\beta}(\mt_l) - e^p(\mx_l))\right]^2\nonumber\\
    &\quad +\frac{1}{n}\sum_{l=1}^n\left[\{m^*(\mx_l) - \mhatcf(\mx_l)\} - \boldsymbol{\alpha}(\mx_l)^{\top}\thet(e^p(\mx_l) - \ehatcf(\mx_l))\right]^2\nonumber\\
    &\quad - \frac{2}{n} \sum_{l=1}^n\{Y_l - m^*(\mx_l)\}\boldsymbol{\alpha}(\mx_l)\thet(e^p(\mx_l) - \ehatcf(\mx_l)) \nonumber \\
    &\quad - \frac{2}{n} \sum_{l=1}^n \boldsymbol{\alpha}(\mx_l)^{\top}\thet (\boldsymbol{\beta}(\mt_l) - e^p(\mx_l))\{m^*(\mx_l) - \mhatcf(\mx_l)\}\nonumber\\
    &\quad + \frac{2}{n} \sum_{l=1}^n \boldsymbol{\alpha}(\mx_l)^{\top}\thet (\boldsymbol{\beta}(\mt_l) - e^p(\mx_l))\boldsymbol{\alpha}(\mx_l)^{\top}\thet(e^p(\mx_l) - \ehatcf(\mx_l))
\end{align}
Furthermore, we may verify that some terms cancel out when we restrict our attention to the main objective of interest
\begin{equation}
    \feasr\left(\thet; c\right) - \oracler\left(\thet;c\right) = \hat{L}_n\left(\thet\right) - \hat{L}_n\left(\thet_c\right) - \tilde{L}_n\left(\thet\right) + \tilde{L}_n\left(\thet_c\right)
\end{equation}
In particular, note that the first term in the decomposition above is exactly $\tilde{L}_n(\thet)$. Thus
\begin{align}
    &\feasr(\thet; c) - \oracler(\thet; c)\nonumber\\
    &= -\frac{2}{n} \sum_{l=1}^n \{m^*(\mx_l) - \mhatcf(\mx_l)\}\boldsymbol{\alpha}(\mx_l)^{\top}(\thet - \thet_c)(e^p(\mx_l) - \ehatcf(\mx_l))\nonumber\\
    &\quad + \frac{1}{n}\sum_{l=1}^n \{\boldsymbol{\alpha}(\mx_l)^{\top}\thet(e^p(\mx_l) - \ehatcf(\mx_l))\}^2 - \{\boldsymbol{\alpha}(\mx_l)^{\top}\thec(e^p(\mx_l) - \ehatcf(\mx_l))\}^2 \nonumber\\
    &\quad -\frac{2}{n} \sum_{l=1}^n \{Y_l - m^*(\mx_l)\}\boldsymbol{\alpha}(\mx_l)^{\top}(\thet - \thec)(e^p(\mx_l) - \ehatcf(\mx_l))\nonumber\\
    &\quad -\frac{2}{n} \sum_{l=1}^n \boldsymbol{\alpha}(\mx_l)^{\top}(\thet - \thec)(\boldsymbol{\beta}(\mt_l) - e^p(\mx_l))\{m^*(\mx_l) - \mhatcf(\mx_l)\}\nonumber\\
    &\quad + \frac{2}{n} \sum_{l=1}^n \boldsymbol{\alpha}(\mx_l)^{\top}\thet(\boldsymbol{\beta}(\mt_l) - e^p(\mx_l))\boldsymbol{\alpha}(\mx_l)^{\top}\thet( e^p(\mx_l) - \ehatcf(\mx_l))\nonumber\\
    &\quad - \frac{2}{n} \sum_{l=1}^n \boldsymbol{\alpha}(\mx_l)^{\top}\thec(\boldsymbol{\beta}(\mt_l) - e^p(\mx_l))\boldsymbol{\alpha}(\mx_l)^{\top}\thec( e^p(\mx_l) - \ehatcf(\mx_l))\nonumber\\
\end{align}
Letting $A_1^c(\thet)$, $A_2^c(\thet)$, $B_1^c(\thet)$ and $B_3^c(\thet)$ denote these 5 summands respectively, we seek to bound each of the terms in terms of $R(\thet; c)$. 
Starting with $A^c_1(\thet)$, we extract $\thet - \thec$ by its infinity norm and by Cauchy-Schwarz,
\begin{align}
    |A_1^c(\thet)| &\leq 2\sqrt{\frac{1}{n}\sum_{l=1}^n \left\{m^*(\mx) - \mhatcf(\mx_l)\right\}^2}\nonumber\\
    &\quad \cdot \sqrt{\frac{1}{n}\sum_{l=1}^n\left\|\boldsymbol{\alpha}(\mx_l)\otimes\left(e^p(\mx) - \ehatcf(\mx_l)\right)\right\|^2} \cdot \|\thet - \thec\|_{\infty}\\
\end{align}
Using the fact that $\|\mathbf{a}\otimes \mathbf{b}\| = \|\mathbf{a}\|\|\mathbf{b}\|$ for $\mathbf{a}$ and $\mathbf{b}$ some (finite dimensional) vector, we may separate out the norm of $\boldsymbol{\alpha}(\mx)$ and we know $\|\boldsymbol{\alpha}(\mx)\|^2$ is uniformly bounded by Assumption \ref{assump:boundedness}. By \eqref{eq:cf_rates} and Markov's inequality, the mean squared errors of the $m-$ and $e-$models decay at rate $O_P(a_n)$. Therefore, applying \ref{eq:infinity_norm} to bound the infinity-norm discrepancy $\|\thet - \thec\|_{\infty}$, we find that simultaneously for all $c \geq 1$, 
\begin{equation}
    \sup\{c^{-p}R(\thet; c)^{-\frac{1-p}{2}}|A^c_1(\thet)|: \fest \in \hc, c \geq 1\} = O_P(a^2_n)
\end{equation}
Following \cite{r-learner} and using a similar argument to extract $\|\boldsymbol{\alpha}(\mx)\|$ and bound $\thet - \thec$ by the c-regret $\|R(\thet; c)\|$, we get that 
\begin{equation}
    |A^c_2| = \mathcal{O}_P\left(\left(c^pR(\thet; c)^{\frac{1-p}{2}} + c^{2p}R(\thet; c)^{1-p}\right)a_n^2\right)
\end{equation}
In order to bound $B^c_1(\thet)$, decomposing it with respect to the cross fitting structure, we consider
\begin{equation}
    B_{1,q}^c(\thet) = \frac{\sum_{\{l: q(l)\}}2\{Y - m^*(\mx)\}\boldsymbol{\alpha}(\mx_l)^{\top}(\thet - \thec)(e^p(\mx_l) - \ehatcf(\mx_l))}{|\{l: q(l)=q\}|},
\end{equation}
noting that $|B^c_1(\thet)| \leq \sigma_{q=1}^Q|B_{1,q}^c(\thet)| $. In particular, we bound its supremum $\sup B_{1,q}^c(\thet)$. To proceed, we bound this quantity over sets indexed by $c$ and $\delta$ such that $\|f_{\thet} - f_{\thec}\|_{L^2} \leq \delta$:
\begin{equation}
    \sup_{\thet \in \hc} \left\{B^c_{1,q}(\thet): \|f_{\thet} - f_{\thec}\|_{L^2} \leq \delta\right\}.
\end{equation}
Letting $\noqfold = \{\mx_l, \mt_l, Y_l: q(l) \neq q\}$ denote the set of data points excluded in the $q-$fold, using a similar procedure to \cite{r-learner}, we can check that the conditional expectation $\E\left[B^c_{1,q}\mid\noqfold\right] = 0$.
By conditioning on $\noqfold$, the summands in $B^c_{1,q}(\thet)$ become independent, as $\estpropfea(\mx_l)$ is now only random in $\mx$. 

Now, the next step in \cite{r-learner} is to bound the expectation of the supremum of $B^c_{1,q}$ using \cite[Lemma 5]{r-learner} and \cite[Eq. (36)]{r-learner}. Since we work with a vector of propensity features instead of a single propensity score unlike in \cite{r-learner}, we need to apply \cite[Lemma 5]{r-learner} $d$ times where $d$ is the dimension of $e^p(\mx)$:
\begin{align}
    B_{1,q}^c(\thet) &= \frac{\langle(\thet - \thec), \sum_{\{l: q(l)\}}2\{Y - m^*(\mx)\}\boldsymbol{\alpha}(\mx_l)\otimes(e^p(\mx_l) - \ehatcf(\mx_l))\rangle}{|\{l: q(l)=q\}|}\\
    &= \frac{\sum_{ij}(\thet - \thec)_{ij}, \sum_{\{l: q(l)\}}2\{Y - m^*(\mx)\}\alpha_i(\mx_l)(e^p_j(\mx_l) - \hat{e}^p_{(-q(l)), j}(\mx_l))}{|\{l: q(l)=q\}|},
\end{align}
so 
\begin{equation}
    \sup_{f_{\thet} \in \hc }\{B_{1,q}^c(\thet)\} \leq \frac{\sum_{ij} \sup_{f_{\thet} \in \hc}\sum_{\{l: q(l)\}}2\{Y - m^*(\mx)\}\alpha_i(\mx_l)(e^p_j(\mx_l) - \hat{e}^p_{(-q(l)), j}(\mx_l))(\thet - \thec)_{ij}}{|\{l: q(l)=q\}|}
\end{equation}
So bounding each term indexed by $ij$ using Lemma 5 of \cite{r-learner} and \eqref{eq:cf_rates}, we will get the same bound as in \cite{r-learner} because the sum over $ij$ is finite and $\boldsymbol{\alpha}$ is bounded. 

Then, using a similar argument to \cite{r-learner}, we may obtain that for any fixed $c, \delta, \epsilon > 0$, there exists a different constant B such that with probability at least $1 - \epsilon$, 
\begin{align}
\begin{aligned}
&\sup _{\tau \in \mathcal{H}_{c}}\left\{B_{1, q}^{c}(\tau) \mid \mathcal{I}^{(-q)}:\left\|f_{\thet}-f_{\thec}\right\|_{L_{2}} \leq \delta\right\} \\
&\quad <B\left\{c^{p} \delta^{1-p} a_{n} \frac{\log (n)}{\sqrt{n}}+\frac{c^{p} \delta^{1-p} a_{n}}{\sqrt{n}} \sqrt{\log \left(\frac{1}{\varepsilon}\right)}+\frac{1}{n} c^{p} \delta^{1-p} \log \left(\frac{1}{\varepsilon}\right)\right\}
\end{aligned},
\end{align}
which holds unconditionally of $\noqfold$.
In order to establish the bound for all values of $c$ and $\delta$ simultaneously, we may proceed with the same argument as \cite{r-learner}; instead of \cite[Lemma 6]{r-learner}, we replace with our Lemma \ref{lemma:rlearnerl6-ext}, which is our extension to the multidimensional setting. 
$B_2^c(\thet)$ may be bounded similarly. 

To bound $B_3(\thet)$, the argument of \cite{r-learner} is easily extended as well, using the decomposition which we detail below.

To simplify notation, write 
\begin{align}
\mathbf{a}_l &= \boldsymbol{\alpha}(\mx_l) \otimes \left(\boldsymbol{\beta}(\mt_l) - e^p(\mx_l)\right)\\
\mathbf{b}_l &= \boldsymbol{\alpha}(\mx_l) \otimes \left(e^p(\mx_l) - \ehatcf(\mx_l)\right)    
\end{align}
Note: 
\begin{align}
    B^c_3 &= \frac{2}{n}\sum_{l=1}^n\langle \thet,\mathbf{a}_l\rangle \langle \thet, \mathbf{b}_l\rangle  -\frac{2}{n}\sum_{l=1}^n \langle \thec, \mathbf{a}_l \rangle \langle \thec, \mathbf{b}_l \rangle\\
    &= \frac{2}{n}\sum_{l=1}^n \Bigg\{ 2\langle \thet,\mathbf{a}_l\rangle \langle \thet, \mathbf{b}_l\rangle - \langle \thet,\mathbf{a}_l\rangle \langle \thet, \mathbf{b}_l\rangle \nonumber\\
    &\quad\quad\quad\quad- \langle \thec,\mathbf{a}_l\rangle \langle \thet, \mathbf{b}_l\rangle + \langle \thec,\mathbf{a}_l\rangle \langle \thet, \mathbf{b}_l\rangle\nonumber\\ &\quad\quad\quad\quad-\langle \thet,\mathbf{a}_l\rangle \langle \thec, \mathbf{b}_l\rangle +\langle \thet,\mathbf{a}_l\rangle \langle \thec, \mathbf{b}_l\rangle \nonumber\\
    &\quad\quad\quad\quad- \langle \thec,\mathbf{a}_l\rangle \langle \thec, \mathbf{b}_l\rangle \Bigg\}\\
    &= \frac{2}{n}\sum_{l=1}^n \Bigg\{
    \langle \thet - \thec,\mathbf{a}_l\rangle \langle \thet, \mathbf{b}_l\rangle
    +\langle \thet,\mathbf{a}_l\rangle \langle \thet-\thec, \mathbf{b}_l\rangle \\
    &\quad\quad\quad
    - \langle \thet - \thec,\mathbf{a}_l\rangle \langle \thet - \thec,\mathbf{b}_l\rangle \Bigg\}\\
    &\leq \left| \frac{2}{n}\sum_{l=1}^n 
    \langle \thet - \thec,\mathbf{a}_l\rangle \langle \thet, \mathbf{b}_l\rangle \right|\nonumber\\
    &\quad + \left| \frac{2}{n}\sum_{l=1}^n \langle \thet,\mathbf{a}_l\rangle \langle \thet-\thec, \mathbf{b}_l\rangle 
     \right|\nonumber\\
    &\quad + \frac{2}{n}\sum_{l=1}^n \| \thet - \thec\|^2_2\|\mathbf{a}_l\|_2\|\mathbf{b}_l\|_2
\end{align}
where the last term of the last inequality follows by Cauchy-Schwarz.

The first two terms can be bounded similarly to the argument used for bounding $B_1^c(\thet)$. For the last term, we note that $\|\thet - \thec\|_2 = \|\fest - \fc\|_{L_2}$ since by construction the RKHS norm and the $L_2$ norms are equal. Therefore, the last term is bounded by $\xi_n \|\fest - \fc\|_{L_2}$ where 
\begin{align}
    \xi_n = \|\boldsymbol{\alpha}(\mx_l)\otimes (\boldsymbol{\beta}(\mt_l) - e^p(\mx_l))\|_{\infty}\|\boldsymbol{\alpha}(\mx_l)\otimes (e^p(\mx_l) - \ehatcf(\mx_l))\|_{\infty} = o(1).
\end{align} Note that we do not need the lower order terms present in \cite{r-learner} which followed from \cite[Lemma 7]{r-learner}.

Thus the desired result follows.
\end{proof}

By \cite[Lemma 2]{r-learner}, Lemma \ref{lemma:l8-ext} implies that under Assumptions \ref{assump:boundedness} to \ref{assump:true_fun_approx}, and the conditions in Lemma \ref{lemma:l8-ext} and Lemma \ref{lemma:risk_bound}, where the $(a_n)$ in Lemma \ref{lemma:l8-ext} is such that $a_n = O(n^{-\kappa})$ with $\kappa > \frac{1}{4}$, then
\begin{equation}
    \left|\hat{R}_n(\thet; c) - \tilde{R}_n(\thet; c)\right| \leq 0.125 R(\thet; c) + o(\rho_n(c)) \label{eq:rlearner-lem2-result}
\end{equation}
with probability at least $1-\epsilon$, for all $\thet \in \hc$, $1 \leq c \leq \log(n)$ for large enough n.

Thus we have finally bridged $\hat{R}_n$ and $\tilde{R}_n$ with respect to the expected regret $R$. We are ready to prove our main theorem which concerns the regret bound of $\hat{R}_n$.

\subsection{Using the bridge result to derive feasible regret bound}
\begin{customthm}{\ref{theorem:main}}
Under Assumptions \ref{assump:overlap_mainbody}, \ref{assump:boundedness}, \ref{assump:true_fun_approx} and the conditions in Lemma \ref{lemma:l8-ext} and Lemma \ref{lemma:risk_bound}, where the $(a_n)$ in Lemma \ref{lemma:l8-ext} is such that $a_n = O(n^{-\kappa})$ with $\kappa > \frac{1}{4}$, and suppose that we obtain $\hat \thet$ via a penalized basis function regression variant of the generalized R-learner, with a properly chosen penalty of the form $\Lambda_n(\|\hat \thet\|_2)$ that grows faster than $\rho_n(\|\hat \thet\|_2)$ in \ref{eq:rho} . Then $\hat \thet$ satisfies the same regret bound as $\tilde \thet$, $R(\hat{\thet}_n) = \tilde{O}(n^{\frac{1}{1+p}})$.
\end{customthm}
\begin{proof}
We have established that when we set $\rho_n$ as 
\begin{align}
\rho_{n}(c) = U(\epsilon) \{1+\log (n)+\log \log (c+e)\}\left(\frac{(c+1)^{p} \log (n)}{\sqrt{n}}\right)^{2 /(1+p)} \nonumber,
\end{align}
we have that for every $\epsilon$ there exist a constant $U(\epsilon)$ such that for large enough $n$ the following is satisfied with probability at least $1-\epsilon$:

\begin{align}
    \frac{1}{2}\tilde{R}_n(\fest;c) - \rho_n(c) \leq R_n(\fest; c) \leq 2\tilde{R}_n(\fest; c) + \rho_n(c) \label{eq:quasi-iso}
\end{align}
Subsection \ref{subsec:oracle_rate} argued that this leads to a rate of $\tilde{\mathcal{O}}(n^{-\frac{1}{1+p}})$ for $R(\tilde{\thet})$.

Now to show that feasible learner matches the rate of the oracle learner, 

Eq. \ref{eq:rlearner-lem2-result} implies that 
\begin{align}
    R(\thet; c) &\leq 2 \tilde{R}_n(\thet; c) + \rho_n(c)\\
    &\leq 2 \hat{R}_n(\tau; c) + 0.25k R(\tau; c) + k\rho_n(c)
\end{align}
Rearranging the inequality implies that 
\begin{align}
    R(\thet; c) \leq \frac{8}{3}\hat{R}_n(\thet; c) + 2\rho_n(c)
\end{align}
for large $n$ for all $1< c< \log(n)$, with probability at least $1-2\epsilon$. It can then be checked following a symmetrical argument, that 
\begin{align}
    \frac{3}{8}\hat{R}_n(\thet;c) - 2\rho_n(c) \leq R(\thet; c) \leq \frac{8}{3}\hat{R}_n(\thet; c) + 2 \rho_n(c)
\end{align}
for $n$ large enough for all $1\leq c \leq \log(n)$ with probability at least $1-4\epsilon$. 

Then, following the same argument as \cite{r-learner}, we find that the feasible minimizer has the same regret bound as the oracle minimizer: $R(\hat{\thet}_n)=\tilde{\mathcal{O}}\left(n^{-\frac{1}{1+p}}\right).$

This is to say:
\begin{empheq}[box=\eqbox]{align}
R(\hat{\thet}_n) = O(r_n^2),\; r_n = n^{-\frac{1}{2(1+p)}}
\end{empheq}
\end{proof}

\end{document}